\newtheorem{theorem}{Theorem}
\newtheorem{proposition}{Proposition}
\newtheorem{lemma}{Lemma}
\newtheorem{definition}{Definition}
\newtheorem{assumption}{Assumption}
\newtheorem{remark}{Remark}
\newtheorem{claim}[theorem]{Claim}
\newcommand{\reals}{\mathbb{R}}
\newcommand{\E}{\mathbb{E}}
\def \Sph{\mathbb{S}^{d-1}}
\def \RR {\mathbb R}
\def \EE {\mathbb E}
\newcommand{\be}{\mathbf{e}}
\newcommand{\bx}{\mathbf{x}}
\newcommand{\bw}{\mathbf{w}}
\newcommand{\bv}{\mathbf{v}}
\newcommand{\bz}{\mathbf{z}}
\newcommand{\Ocal}{\mathcal{O}}
\newcommand{\norm}[1]{\|#1\|}  
\newcommand{\lnorm}[1]{\left\|#1\right\|_{L_2}} 
\newcommand{\lmnorm}[1]{\left\|#1\right\|_{L_2(\mu)}} 
\newcommand{\inner}[1]{\langle#1\rangle}
\newcommand{\secref}[1]{Sec.~\ref{#1}}
\newcommand{\subsecref}[1]{Subsection~\ref{#1}}
\renewcommand{\eqref}[1]{Eq.~(\ref{#1})}
\newcommand{\lemref}[1]{Lemma~\ref{#1}}
\newcommand{\thmref}[1]{Thm.~\ref{#1}}
\newcommand{\ind}[1]{\mathbf{1}\left\{#1\right\}}
\title{The Power of Depth for Feedforward Neural Networks}
\author{Ronen Eldan\\Weizmann Institute of Science\\\texttt{ronen.eldan@weizmann.ac.il}
	\and
	Ohad Shamir\\Weizmann Institute of Science\\\texttt{ohad.shamir@weizmann.ac.il}}
\date{}
\begin{document}

\maketitle

\begin{abstract}
	We show that there is a simple (approximately radial) function on $\reals^d$, expressible by a small 3-layer feedforward neural networks, which cannot be approximated by any 2-layer network, to more than a certain constant accuracy, unless its width is exponential in the dimension. The result holds for virtually all known activation functions, including rectified linear units, sigmoids and thresholds, and formally demonstrates that depth -- even if increased by 1 -- can be exponentially more valuable than width for standard feedforward neural networks. Moreover, compared to related results in the context of Boolean functions, our result requires fewer assumptions, and the proof techniques and construction are very different. 
\end{abstract}

\section{Introduction and Main Result}

Learning via multi-layered artificial neural networks, a.k.a. deep learning, has seen a dramatic resurgence of popularity over the past few years, leading to impressive performance gains on difficult  learning problems, in fields such as computer vision and speech recognition. Despite their practical success, our theoretical understanding of their properties is still partial at best. 

In this paper, we consider the question of the \emph{expressive power} of neural networks of \emph{bounded} size. The boundedness assumption is important here: It is well-known that sufficiently large depth-$2$ neural networks, using reasonable activation functions, can approximate any continuous function on a bounded domain (\cite{cybenko1989approximation,hornik1989multilayer,funahashi1989approximate,barron1994approximation}). However, the required size of such networks can be exponential in the dimension, which renders them impractical as well as highly prone to overfitting. From a learning perspective, both theoretically and in practice, our main interest is in neural networks whose size is bounded. 

For a network of bounded size, a basic architectural question is how to trade off between its width and depth: Should we use networks that are narrow and deep (many layers, with a small number of neurons per layer), or shallow and wide?  Is the ``deep'' in ``deep learning'' really important? Or perhaps we can always content ourselves with shallow (e.g. depth-$2$) neural networks? 


Overwhelming empirical evidence as well as intuition indicates that having depth in the neural network is indeed important: Such networks tend to result in complex predictors which seem hard to capture using shallow architectures, and often lead to better practical performance. However, for the types of networks used in practice, there are surprisingly few formal results  (see related work below for more details).

In this work, we consider fully connected feedforward neural networks, using a linear output neuron and some non-linear activation function on the other neurons, such as the commonly-used rectified linear unit (ReLU, $\sigma(z)=\max\{z,0\}$), as well as the sigmoid ($\sigma(z) = (1+\exp(-z))^{-1}$) and the threshold ($\sigma(z)=\ind{z\geq 0}$). Informally speaking, we consider the following question: \emph{What functions on $\reals^d$ expressible by a network with $\ell$-layers and $w$ neurons per layer, that cannot be well-approximated by any network with $<\ell$ layers, even if the number of neurons is allowed to be much larger than $w$?}

More specifically, we consider the simplest possible case, namely the difficulty of approximating functions computable by $3$-layer networks using $2$-layer networks, when the networks are feedforward and fully connected. Following a standard convention, we define a $2$-layer network of width $w$ on inputs in $\reals^d$ as
\begin{equation}\label{eq:2layer}
\bx\mapsto \sum_{i=1}^{w}v_i\sigma\left(\inner{\bw_i,\bx}+b_i\right)
\end{equation}
where $\sigma:\reals\rightarrow\reals$ is the activation function, and $v_i,b_i\in \reals$, $\bw_i\in\reals^d$, $i=1,\ldots,w$ are parameters of the network. This corresponds to a set of $w$ neurons computing $\bx\mapsto \sigma(\inner{\bw_i,\bx}+b_i)$ in the first layer, whose output is fed to a linear output neuron $\bx\mapsto \sum_{i=1}^{w}v_i x_i$ in the second layer\footnote{Note that sometimes one also adds a constant bias parameter $b$ to the output neuron, but this can be easily simulated by a ``constant'' neuron $i$ in the first layer where $w_i=\mathbf{0}$ and $v_i,b_i$ are chosen appropriately. Also, sometimes the output neuron is defined to have a non-linearity as well, but we stick to linear output neurons, which is a very common and reasonable assumption for networks computing real-valued predictions.}. Similarly, a $3$-layer network of width $w$ is defined as
\begin{equation}\label{eq:3layer}
\sum_{i=1}^{w}u_i\sigma\left(\sum_{j=1}^{w}v_{i,j}\sigma\left(\inner{\bw_{i,j},\bx}+b_{i,j}\right)+c_{i}\right),
\end{equation}
where $u_i,c_i,v_{i,j},b_{i,j}\in\reals,\bw_{i,j}\in \reals^d$, $i,j=1,\ldots,w$ are parameters of the network. Namely, the outputs of the neurons in the first layer are fed to neurons in the second layer, and their outputs in turn are fed to a linear output neuron in the third layer.

Clearly, to prove something on the separation between $2$-layer and $3$-layer networks, we need to make some assumption on the activation function $\sigma(\cdot)$ (for example, if $\sigma(\cdot)$ is the identity, then both $2$-layer and $3$-layer networks compute linear functions, hence there is no difference in their expressive power). All we will essentially require is that $\sigma(\cdot)$ is \emph{universal}, in the sense that a sufficiently large $2$-layer network can approximate any univariate Lipschitz function which is non-constant on a bounded domain. More formally, we use the following assumption:

\begin{assumption}\label{assumption}
	Given the activation function $\sigma$, there is a constant $c_\sigma\geq 1$ (depending only on $\sigma$) such that the following holds: For any $L$-Lipschitz function $f:\reals\rightarrow\reals$ which is constant outside a bounded interval $[-R,R]$, and for any $\delta$, there exist scalars $a,\{\alpha_i,\beta_i,\gamma_i\}_{i=1}^{w}$, where $w\leq c_\sigma\frac{RL}{\delta}$, such that the function
	\[
	h(x) = a+\sum_{i=1}^{w}\alpha_i\cdot\sigma(\beta_i x-\gamma_i)
	\]
	satisfies
	\[
	\sup_{x\in \reals} \left|f(x)-h(x)\right| \leq \delta.
	\]
\end{assumption}

This assumption is satisfied by the standard activation functions we are familiar with. First of all, we provide in Appendix \ref{sec:relu} a constructive proof for the ReLU function. For the threshold, sigmoid, and more general sigmoidal functions (e.g. monotonic functions which satisfy $\lim_{z\rightarrow\infty}\sigma(z)=a,\lim_{z\rightarrow-\infty}\sigma(z)=b$ for some $a\neq b$ in $\reals$), the proof idea is similar, and implied by the proof of Theorem 1 of  \cite{debao1993degree}\footnote{Essentially, a single neuron with such a sigmoidal activation can express a (possibly approximate) single-step function, a combination of $w$ such neurons can express a function with $w$ such steps, and any $L$-Lipschitz function which is constant outside $[-R,R]$ can be approximated to accuracy $\delta$ with a function involving $\Ocal(RL/\delta)$ steps.}.
Finally, one can weaken the assumed bound on $w$ to any $\text{poly}(R,L,1/\delta)$, at the cost of a worse polynomial dependence on the dimension $d$ in \thmref{thm:main} part $1$ below (see \subsecref{subsec:approximability} for details).

In addition, for technical reasons, we will require the following mild growth and measurability conditions, which are satisfied by virtually all activation functions in the literature, including the examples discussed earlier:
\begin{assumption}\label{assumption2}
	The activation function $\sigma$ is (Lebesgue) measurable and satisfies
	$$
	|\sigma(x)| \leq C(1 + |x|^{\alpha}) 
	$$
	for all $x \in \RR$ and for some constants $C,\alpha > 0$.
\end{assumption}

Our main result is the following theorem, which implies that there are $3$-layer networks of width polynomial in the dimension $d$, which cannot be arbitrarily well approximated by $2$-layer networks, unless their width is exponential in $d$:

\begin{theorem}\label{thm:main}
	Suppose the activation function $\sigma(\cdot)$ satisfies assumption \ref{assumption} with constant $c_{\sigma}$, as well as assumption \ref{assumption2}.
	Then there exist universal constants $c,C>0$ such that the following holds: For every dimension $d>C$, there is a probability measure $\mu$ on $\RR^d$ and a function $g:\reals^d\rightarrow \reals$ with the following properties:
	\begin{enumerate}
	\item $g$ is bounded in $[-2,+2]$, supported on $\{\bx:\norm{\bx}\leq C\sqrt{d}\}$, and expressible by a $3$-layer network of width $Cc_{\sigma}d^{19/4}$.
	\item Every function $f$, expressed by a $2$-layer network of width at most $c e^{c d}$, satisfies
	\[
	\E_{\bx\sim\mu}\left(f(\bx)-g(\bx)\right)^2 \geq c.
	\]
	\end{enumerate}
\end{theorem}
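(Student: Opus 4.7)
The plan is to take $g$ to be (essentially) a radial function on $\reals^d$ and prove the lower bound by a Fourier-analytic argument exploiting the fact that a 2-layer network is a sum of ridge functions, whose Fourier transforms are supported on lines through the origin. Concretely, I would let $\mu$ have a smooth rotation-invariant density of the form $\varphi(\bx)^2$ supported in $\{\|\bx\|\le C\sqrt d\}$, with $\hat\varphi$ concentrated near the origin, and let $g(\bx)=\tilde g(\|\bx\|)$ where the univariate profile $\tilde g$ is a bounded, Lipschitz ``chirp'' oscillating at a frequency $R=\Theta(d)$ tuned so that the Fourier transform of $g\varphi$ places most of its $L^2$ mass on the spherical shell $\{\|\bxi\|\approx R\}$.

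For the 3-layer upper bound I would apply Assumption \ref{assumption} twice. In the first layer, for each coordinate $i$, I construct a 2-layer approximation of width $O(c_\sigma\,\mathrm{poly}(d))$ of the univariate function $t\mapsto t^2$ on $[-C\sqrt d,C\sqrt d]$, and sum these to approximate $\|\bx\|^2$. In the middle layer, I approximate the univariate map $u\mapsto \tilde g(\sqrt u)$ on the relevant range. Balancing the two accuracies against the Lipschitz constants (which grow polynomially with $R$ and $d$, since $\tilde g$ oscillates at frequency $R$) gives a total width of the claimed order $O(c_\sigma d^{19/4})$.

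The main work is the 2-layer lower bound. Given $f(\bx)=\sum_{i=1}^{w}v_i\sigma(\langle\bw_i,\bx\rangle+b_i)$, Assumption \ref{assumption2} lets me view each $\sigma(\langle\bw_i,\cdot\rangle+b_i)$ as a tempered distribution, and the key observation is that, as a ridge function, its Fourier transform is supported on the one-dimensional line $\reals\bw_i$ through the origin. After multiplication by the smooth cutoff $\varphi$, Fourier transforms convolve, so $\widehat{f\varphi}$ is concentrated (up to rapidly decaying tails) in the union of $w$ narrow ``tubes'' around these $w$ lines. By Parseval,
\[
\E_\mu(f-g)^2 \;=\; \|(f-g)\varphi\|_{L^2}^2 \;=\; \bigl\|\widehat{(f-g)\varphi}\bigr\|_{L^2}^2.
\]
The shell where $\widehat{g\varphi}$ lives has $(d-1)$-dimensional area of order $R^{d-1}$, while $w$ tubes of bounded radius intersect it in total area at most $w\cdot R^{d-2}$, covering only a fraction $O(w/R)=O(w/d)$. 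Thus, unless $w\ge e^{\Omega(d)}$, a constant fraction of $\widehat{g\varphi}$'s mass is necessarily outside the tubes, yielding the desired constant lower bound.

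The main obstacle will be controlling ``Fourier leakage'' in both directions: showing that $\widehat{f\varphi}$ really does concentrate inside the union of $w$ tubes (not just in a distributional sense), and that $\widehat{g\varphi}$ really lives on a thin shell rather than spreading throughout $\reals^d$. This forces simultaneously good spatial and Fourier localization for $\varphi$ — a hard indicator cutoff will not suffice, and a smoother construction (something Gaussian-like or a carefully mollified ball) is needed — together with a careful choice of radial profile $\tilde g$ so that $g\varphi$ genuinely concentrates its frequencies in a thin annulus at radius $\Theta(d)$. Managing these trade-offs uniformly in $d$, while using Assumption \ref{assumption2} to control the polynomially-growing tails of $\sigma(\langle\bw_i,\cdot\rangle+b_i)$ when multiplied against $\varphi$, is where the quantitative heart of the argument lies.
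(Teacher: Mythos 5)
Your high-level plan matches the paper's: a radial hard function, a density $\varphi^2$, Fourier duality turning $\E_\mu(f-g)^2$ into $\lVert\widehat{(f-g)\varphi}\rVert_{L^2}^2$, and a tubes-versus-shell covering argument. But there are three places where your sketch makes choices that either create real technical obstacles or are quantitatively wrong.

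First, you put the hard cutoff in the wrong domain. You want $\varphi$ itself to be compactly supported and smooth so that $\hat\varphi$ merely \emph{concentrates} near the origin, and you then correctly identify ``Fourier leakage'' of $\widehat{f\varphi}$ outside the tubes as a hard problem. That problem is genuinely hard (especially since $\hat f$ is only a tempered distribution, not an $L^1$ function with controllable tails) and your proposal offers no way to solve it. The paper sidesteps the problem entirely by choosing $\varphi$ to be the inverse Fourier transform of the exact indicator $\ind{R_d B_d}$, so that $\hat\varphi=\ind{R_d B_d}$ and $\widehat{f\varphi}=\hat f\star\ind{R_d B_d}$ is \emph{exactly} supported in the union of tubes, with zero leakage (Claim~\ref{claim:tubes}). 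The price is that $\varphi$ is no longer compactly supported (it is an oscillating, polynomially decaying Bessel-type profile), but the theorem only requires $g$, not $\mu$, to have bounded support.

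Second, your choice of a single Lipschitz ``chirp'' for $\tilde g$ glosses over exactly the difficulty the paper flags: even if $\hat g$ sits at high frequencies, $\widehat{g\varphi}=\hat g\star\ind{R_d B_d}$ convolves with a ball of radius $\Theta(\sqrt d)$, and it is not at all clear that the resulting $L^2$ mass stays far from the origin. This is why the paper does \emph{not} use a chirp: instead it takes $\tilde g=\sum_i \epsilon_i g_i$ with $g_i$ indicators of thin shells and random signs $\epsilon_i$, and shows via a second-moment/orthogonal-projection argument (Lemma~\ref{lem:signschoice}) that some sign pattern keeps a constant fraction of $\widehat{\tilde g\varphi}$'s mass outside $2R_dB_d$. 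Your sketch would need an explicit proof that the chirp's frequency concentration survives convolution with $\ind{R_dB_d}$, which is not automatic.

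Third, your covering estimate is dimensionally wrong. You estimate that $w$ tubes cover a fraction $O(w/R)=O(w/d)$ of the shell, which would only force $w=\Omega(d)$, not $w=e^{\Omega(d)}$. A tube of radius $\rho$ around a line through the origin intersects the sphere of radius $R$ in two caps of angular radius $\arccos(\sqrt{1-\rho^2/R^2})$; with $\rho=R_d$ and $R=2R_d$ this cap has measure $e^{-\Theta(d)}$ by spherical concentration (see Lemma~\ref{lem:conctubes}, where the condition becomes $|\langle \bz,\bv\rangle|\ge\sqrt 3 R_d$). The exponential bound on the cap measure, not a $1/R$ estimate, is what produces the $e^{cd}$ width lower bound. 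Your $O(w/R)$ is the $d=2$ intuition; in high dimension the fraction is $(\rho/R)^{\Theta(d)}$, and you need this exponent to get the theorem.

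The 3-layer upper bound you sketch is essentially the same as the paper's (approximate $t\mapsto t^2$ coordinate-wise, sum, then apply a univariate network to the result). One caveat: because the paper's $\tilde g$ is discontinuous, there is an intermediate step (Lemma~\ref{lem:lipapprox}) replacing it by an $N$-Lipschitz surrogate before applying Assumption~\ref{assumption}; if you commit to a smooth chirp this step is unnecessary, but then you inherit the frequency-concentration difficulty described above.
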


The proof is sketched in \secref{sec:idea}, and is formally presented in \secref{sec:proof}. Roughly speaking, $g$ approximates a certain radial function $\tilde{g}$, depending only on the norm of the input. With $3$ layers, approximating radial functions (including $\tilde{g}$) to arbitrary accuracy is straightforward, by first approximating the squared norm function, and then approximating the univariate function acting on the norm. However, performing this approximation with only $2$ layers is much more difficult, and the proof shows that exponentially many neurons are required to approximate $\tilde{g}$ to more than constant accuracy. We conjecture (but do not prove) that a much wider family of radial functions also satisfy this property. 

We make the following additional remarks about the theorem:

\begin{remark}[Activation function]
The theorem places no constraints on the activation function $\sigma(\cdot)$ beyond assumptions \ref{assumption} and \ref{assumption2}. In fact, the inapproximability result for the function $\tilde{g}$ holds even if the activation functions are different across the first layer neurons, and even if they are chosen adaptively (possibly depending on $\tilde{g}$), as long as they satisfy assumption \ref{assumption2}.
\end{remark}

\begin{remark}[Constraints on the parameters]
The theorem places no constraints whatsoever on the  parameters of the $2$-layer networks, and they can take any values in $\reals$. This is in contrast to related depth separation results in the context of threshold circuits, which do require the size of the parameters to be constrained (see discussion of related work below).
\end{remark}

\begin{remark}[Properties of $g$]
At least for specific activation functions such as the ReLU, sigmoid, and threshold, the proof construction implies that $g$ is $\text{poly}(d)$-Lipschitz, and the $3$-layer network expressing it has parameters bounded by $\text{poly}(d)$. 
\end{remark}


\subsection*{Related Work}

On a qualitative level, the question we are considering is similar to the question of Boolean circuit lower bounds in computational complexity: In both cases, we consider functions  which can be represented as a combination of simple computational units (Boolean gates in computational complexity; neurons in neural networks), and ask how large or how deep this representation needs to be, in order to compute or approximate some given function. For Boolean circuits, there is a relatively rich literature and some  strong lower bounds. A recent example is the paper \cite{rossman2015average}, which shows for any $d\geq 2$ an explicit depth $d$, linear-sized circuit on $\{0,1\}^n$, which cannot be non-trivially approximated by depth $d-1$ circuits of size polynomial in $n$. That being said, it is well-known that the type of computation performed by each unit in the circuit can crucially affect the hardness results, and lower bounds for Boolean circuits do \emph{not} readily translate to neural networks of the type used in practice, which are real-valued and express continuous functions. For example, a classical result on Boolean circuits states that the parity function over $\{0,1\}^d$ cannot be computed by constant-depth Boolean circuits whose size is polynomial in $d$ (see for instance \cite{hastad1986almost}). Nevertheless, the parity function can in fact be easily computed by a simple $2$-layer, $\Ocal(d)$-width \emph{real-valued} neural network with most reasonable activation functions\footnote{See \cite{rumelhart1986parallel}, Figure 6, where reportedly the structure was even found automatically by back-propagation. For a threshold activation function $\sigma(z)=\ind{z\geq 0}$ and input $\bx=(x_1,\ldots,x_d)\in \{0,1\}^d$, the network is given by $
	\bx~\mapsto~\sum_{i=1}^{d+1}(-1)^{i+1}\sigma\left(\sum_{j=1}^{d}x_j-i+\frac{1}{2}\right)
	$.
In fact, we only need $\sigma$ to satisfy $\sigma(z)=1$ for $z\geq \frac{1}{2}$ and $\sigma(z)=0$ for $z\leq -\frac{1}{2}$, so the construction easily generalizes to other activation functions (such as a ReLU or a sigmoid), possibly by using a small linear combination of them to represent such a $\sigma$.}. 

A model closer to ours is a \emph{threshold circuit}, which is a neural network where all neurons (including the output neuron) has a threshold activation function, and the input is from the Boolean cube (see \cite{parberry1994circuit} for a survey). For threshold circuits, the main known result in our context is that computing inner products mod $2$ over $d$-dimensional Boolean vectors cannot be done with a $2$-layer network with $\text{poly}(d)$-sized parameters and $\text{poly}(d)$ width, but can be done with a small $3$-layer network (\cite{hajnal1993threshold}). Note that unlike neural networks in practice, the result in \cite{hajnal1993threshold} is specific to the non-continuous threshold activation function, and considers hardness of exact representation of a function by $2$-layer circuits, rather than merely approximating it. Following the initial publication of our paper, we were informed (\cite{martens2015}) that the proof technique, together with techniques in the papers (\cite{maass1994comparison,martens2013representational})), can possibly be used to show that inner product mod $2$ is also hard to approximate, using $2$-layer neural networks with continuous activation functions, as long as the network parameters are constrained to be polynomial in $d$, and that the activation function satisfies certain regularity conditions\footnote{See  remark $20$ in \cite{martens2013representational}. These conditions are needed for constructions relying on distributions over a finite set (such as the Boolean hypercube). However, since we consider continuous distributions on $\reals^d$, we do not require such conditions.}. Even so, our result does not pose any constraints on the parameters, nor regularity conditions beyond assumptions \ref{assumption},\ref{assumption2}. Moreover, we introduce a new proof technique which is very different, and demonstrate hardness of approximating not the Boolean inner-product-mod-$2$ function, but rather functions in $\reals^d$ with a simple geometric structure (namely, radial functions).

Moving to networks with real-valued outputs, one related field is arithmetic circuit complexity (see \cite{shpilka2010arithmetic} for a survey), but the focus there is on computing polynomials, which can be thought of as neural networks where each neuron computes a linear combination or a product of its inputs. Again, this is different than most standard neural networks used in practice, and the results and techniques do not readily translate.

Recently, several works in the machine learning community attempted to address questions similar to the one we consider here. \cite{pascanu2013number,montufar2014number} consider the number of linear regions which can be expressed by ReLU networks of a given width and size, and \cite{bianchinicomplexity} consider the topological complexity (via Betti numbers) of networks with certain activation functions, as a function of the depth. Although these can be seen as measures of the function's complexity, such results do not translate directly to a lower bound on the approximation error, as in \thmref{thm:main}.  \cite{delalleau2011shallow,martens2014expressive} and \cite{cohen2015expressive} show strong approximation hardness results for  certain neural network architectures (such as polynomials or representing a certain tensor structure), which are however fundamentally different than the standard neural networks considered here. 

Quite recently, \cite{telgarsky2015representation} gave a simple and elegant construction showing that for any $k$, there are $k$-layer, $\Ocal(1)$ wide ReLU networks on one-dimensional data, which can express a sawtooth function on $[0,1]$ which oscillates $\Ocal(2^k)$ times, and moreover, such a rapidly oscillating function cannot be approximated by $\text{poly}(k)$-wide ReLU networks with $o(k/\log(k))$ depth. This also implies regimes with exponential separation, e.g. that there are $k^2$-depth networks, which any approximating $k$-depth network requires $\Omega(\exp(k))$ width. These results demonstrate the value of depth for arbitrarily deep, standard ReLU networks, for a single dimension and using functions which have an exponentially large Lipschitz parameter. In this work, we use different techniques, to show exponential separation results for general activation functions, even if the number of layers changes by just $1$ (from two to three layers), and using functions in $\reals^d$ whose Lipschitz parameter is polynomial in $d$. 

\section{Proof Sketch}\label{sec:idea}

In a nutshell, the $3$-layer network we construct approximates a radial function with bounded support (i.e. one which depends on the input $\bx$ only via its Euclidean norm $\norm{\bx}$, and is $0$ for any $\bx$ whose norm is larger than some threshold). With $3$ layers, approximating radial functions is rather straightforward: First, using assumption \ref{assumption}, we can construct a linear combination of neurons expressing the univariate mapping $z\mapsto z^2$ arbitrarily well in any bounded domain. Therefore, by adding these combinations together, one for each coordinate, we can have our network first compute (approximately) the mapping $\bx\mapsto \norm{\bx}^2=\sum_i x_i^2$ inside any bounded domain, and then use the next layer to compute some univariate function of $\norm{\bx}^2$, resulting in an approximately radial function. With only $2$ layers, it is less clear how to approximate such radial functions. Indeed, our proof essentially indicates that approximating radial functions with $2$ layers can require exponentially large width. 

To formalize this, note that if our probability measure $\mu$ has a well-behaved density function which can be written as $\varphi^2(\bx)$ for some function $\varphi$, then the approximation guarantee in the theorem, $\E_{\mu}(f(\bx)-g(\bx))^2$, can be equivalently written as 
\begin{equation}\label{eq:fgg}
\int (f(\bx)-g(\bx))^2 \varphi^2(\bx)d\bx ~=~ \int(f(\bx)\varphi(\bx)-g(\bx)\varphi(\bx))^2d\bx ~=~ \norm{f\varphi-g\varphi}_{L_2}^2.
\end{equation}
In particular, we will consider a density function which equals $\varphi^2(\bx)$, where $\varphi$ is the inverse Fourier transform of the indicator  $\ind{\bx\in B}$, $B$ being the origin-centered unit-volume Euclidean ball (the reason for this choice will become evident later). Before continuing, we note that a formula for $\varphi$ can be given explicitly (see \lemref{lem:varphi}), and an illustration of it in $d=2$ dimensions is provided in Figure \ref{fig:varphi}. Also, it is easily verified that $\varphi^2(\bx)$ is indeed a density function: It is clearly non-negative, and by isometry of the Fourier transform, $\int \varphi^2(\bx)d\bx=\int \hat{\varphi}^2(\bx)d\bx=\int \ind{\bx\in B}^2 d\bx$, which equals $1$ since $B$ is a unit-volume ball.

\begin{figure}\label{fig:varphi}
	\centering
	\includegraphics[scale=0.45]{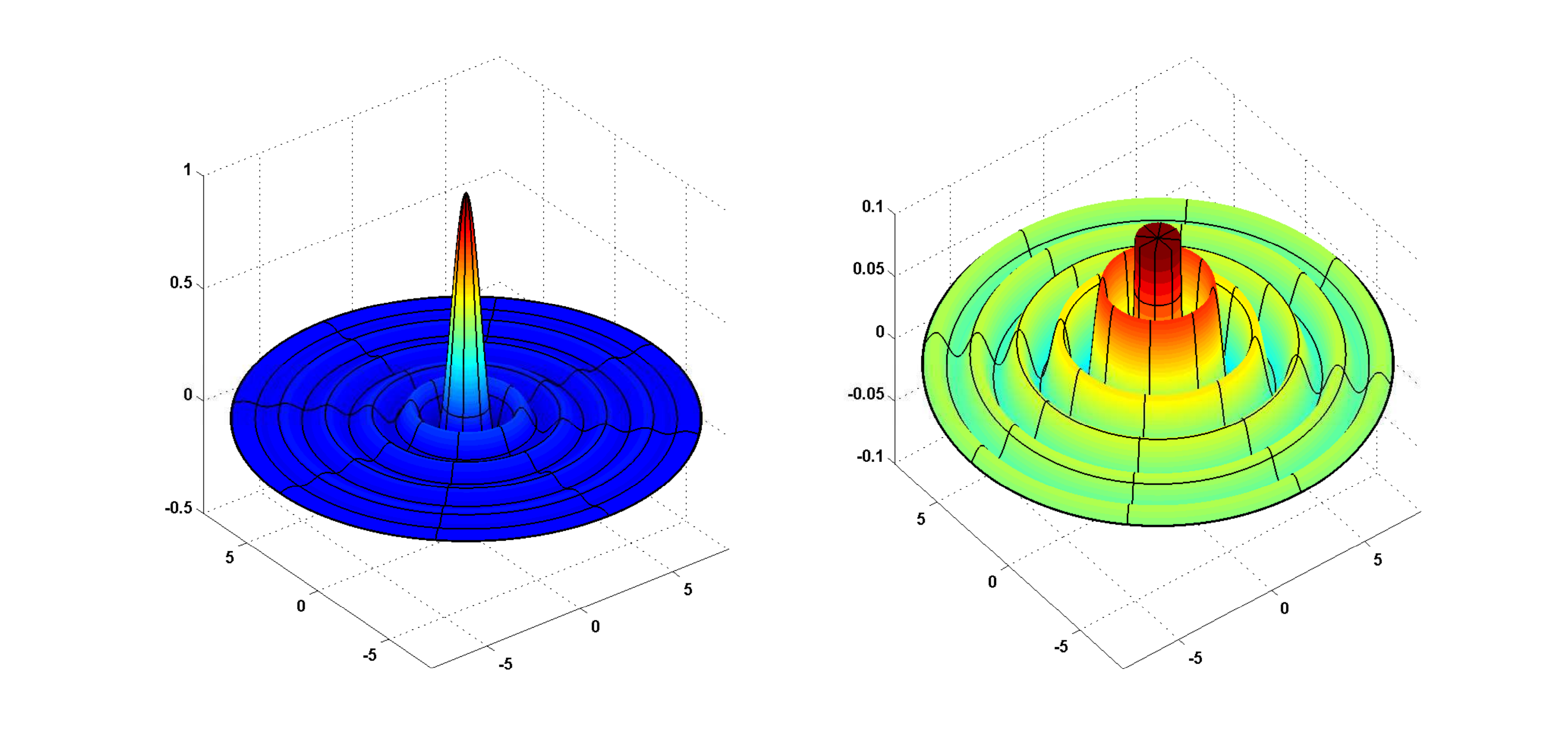}
	\caption{The left figure represents $\varphi(\bx)$ in $d=2$ dimensions. The right figure represents a cropped and re-scaled version, to better show the oscillations of $\varphi$ beyond the big origin-centered bump. The density of the probability measure $\mu$ is defined as $\varphi^2(\cdot)$}
\end{figure}

Our goal now is to lower bound the right hand side of \eqref{eq:fgg}. To continue, we find it convenient to consider the Fourier transforms $\widehat{f\varphi},\widehat{g\varphi}$ of the functions $f\varphi,g\varphi$, rather than the functions themselves. Since the Fourier transform is isometric, the above equals
\[
\norm{\widehat{f\varphi}-\widehat{g\varphi}}_{L_2}^2.
\]
Luckily, the Fourier transform of functions expressible by a $2$-layer network has a very particular form. Specifically, consider any function of the form
$$
f(\bx) = \sum_{i=1}^k f_i(\inner{\bv_i,\bx}),
$$
where $f_i: \RR \to \RR$ (such as $2$-layer networks as defined earlier). Note that $f$ may not be square-integrable, so formally speaking it does not have a Fourier transform in the standard sense of a function on $\reals^d$. However, assuming $|f_i(x)|$ grows at most polynomially as $x\rightarrow \infty$ or $x\rightarrow -\infty$, it does have a Fourier transform in the more general sense of a tempered distribution (we refer the reader to the proof for a more formal discussion). This distribution can be shown to be supported on $\bigcup_i \mathrm{span}\{\bv_i\}$: namely, a finite collection of lines\footnote{Roughly speaking, this is because each function $\bx\mapsto f_i(\inner{\bv_i,\bx})$ is constant in any direction perpendicular to $\bv_i$, hence do not have non-zero Fourier components in those directions. In one dimension, this can be seen by the fact that the Fourier transform of the constant $0$ function is the Dirac delta function, which equals $0$ everywhere except at the origin.}. The convolution-multiplication principle implies that $\widehat{f\varphi}$ equals $\hat{f}\star\hat{\varphi}$, or the convolution of $\hat{f}$ with the indicator of a unit-volume ball $B$. Since $\hat{f}$ is supported on $\bigcup_i \mathrm{span}\{\bv_i\}$, it follows that
$$
\mathrm{Supp}( \widehat{f \varphi} ) ~~\subseteq~~ T~~ := \bigcup_{i=1}^k \left ( \mathrm{span}\{\bv_i\} + B \right ).
$$
In words, the support of $\widehat{f\varphi}$ is contained in a union of tubes of bounded radius passing through the origin.
This is the key property of $2$-layer networks we will use to derive our main theorem. Note that it holds regardless of the exact shape of the $f_i$ functions, and hence our proof will also hold if the activations in the network are different across the first layer neurons, or even if they are chosen in some adaptive manner.

To establish our theorem, we will find a function $g$ expressible by a $3$-layer network, such that $\widehat{g\varphi}$ has a constant distance (in $L_2$ space) from any function supported on $T$ (a union of $k$ tubes as above).
Here is where high dimensionality plays a crucial role: Unless $k$ is exponentially large in the dimension, the domain $T$ is very sparse when one considers large distances from the origin, in the sense that
$$
\frac{Vol_{d-1}(T \cap r \Sph)}{Vol_{d-1}(r \Sph)} \lesssim k e^{-d}
$$
(where $\Sph$ is the $d$-dimensional unit Euclidean sphere, and $Vol_{d-1}$ is the $d-1$-dimensional Hausdorff measure) whenever $r$ is large enough with respect to the radius of $B$. Therefore, we need to find a function $g$ so that $\widehat{g\varphi}$ has a lot of mass far away from the origin, which will ensure that $\norm{\widehat{f\varphi}-\widehat{g\varphi}}_{L_2}^2$ will be large. Specifically, we wish to find a function $g$ so that $g\varphi$ is radial (hence $\widehat{g\varphi}$ is also radial, so having large mass in any direction implies large mass in all directions), and has a significant \emph{high-frequency} component, which implies that its Fourier transform has a significant  portion of its mass outside of the ball $rB$.

The construction and analysis of this function constitutes the technical bulk of the proof. The main difficulty in this step is that even if the Fourier transform $\hat{g}$ of $g$ has some of its $L_2$ mass on high frequencies, it is not clear that this will also be true for $\widehat{g \varphi} = g \star \ind{B}$ (note that while convolving with a Euclidean ball increases the average distance from the origin in the $L_1$ sense, it doesn't necessarily do the same in the $L_2$ sense). 

We overcome this difficulty by considering a random superposition of indicators of thin shells: Specifically, we consider the function
\begin{equation}\label{eq:intuitive}
\tilde{g}(\bx) = \sum_{i=1}^{N}\epsilon_i g_i(\bx),
\end{equation}
where $\epsilon_i \in \{-1,+1\}$, $N=\text{poly}(d)$, and $g_i(\bx)=\ind{\norm{\bx}\in \Delta_i}$, where $\Delta_i$ are disjoint intervals of width $\Ocal(1/N)$ on values in the range $\Theta(\sqrt{d})$. Note that strictly speaking, we cannot take our hard-to-approximate function $g$ to equal $\tilde{g}$, since $\tilde{g}$ is discontinuous and therefore cannot be expressed by a $3$-layer neural network with continuous activations functions. However, since our probability distribution $\varphi^2$ can be shown to have bounded density in the support of \eqref{eq:intuitive}, we can use a $3$-layer network to approximate such a function arbitrarily well with respect to the distribution $\varphi^2$ (for example, by computing $\sum_{i}\epsilon_i g_i(\bx)$ as above, with each hard indicator function $g_i$ replaced by a Lipschitz function, which differs from $g_i$ on a set with arbitrarily small probability mass). Letting the function $g$ be such a good approximation, we get that if no $2$-layer network can approximate the function $\tilde{g}$ in \eqref{eq:intuitive}, then it also cannot approximate its $3$-layer approximation $g$.

Let us now explain why the function defined in \eqref{eq:intuitive} gives us what we need. For large $N$, each $g_i$ is supported on a thin Euclidean shell, hence $g_i\varphi$ is approximately the same as $c_i g_i$ for some constant $c_i$. As a result, $\tilde{g}(\bx) \varphi(\bx) \approx \sum_{i=1}^{N}\epsilon_i c_i g_i(\bx)$, so its Fourier transform (by linearity) is $\widehat{\tilde{g} \varphi}(\bw)\approx \sum_{i=1}^{N}\epsilon_i c_i \hat{g_i}(\bw)$. Since $g_i$ is a simple indicator function, its Fourier transform $\hat{g_i}(\bw)$ is not too difficult to compute explicitly, and involves an appropriate Bessel function which turns out to have a sufficiently large mass sufficiently far away from the origin.

Knowing that each summand $g_i$ has a relatively large mass on high frequencies, our only remaining objective is to find a choice for the signs $\epsilon_i$ so that the entire sum will have the same property. This is attained by a random choice of signs: it is an easy observation that given an orthogonal projection $P$ in a Hilbert space $H$, and any sequence of vectors $v_1,...,v_N \in H$ such that $|P v_i| \geq \delta |v_i|$, one has that $\EE \left [ |P \sum_i \epsilon_i v_i|^2 \right ] \geq \delta^2 \sum_i |v_i|^2$ when the signs $\epsilon_i$ are independent Bernoulli $\pm 1$ variables. Using this observation with $P$ being the projection onto the subspace spanned by functions supported on high frequencies and with the functions $\hat g_i$, it follows that there is at least one choice of the $\epsilon_i$'s so that a sufficiently large portion of $\tilde{g}$'s mass is on high frequencies.

\section{Preliminaries}\label{sec:prelim}

We begin by defining some of the standard notation we shall use. We let $\mathbb{N}$ and $\reals$ denote the natural and real numbers, respectively. Bold-faced letters denote vectors in $d$-dimensional Euclidean space $\reals^d$, and plain-faced letters to denote either scalars or functions (distinguishing between them should be clear from context).  $L_2$ denotes the space of squared integrable functions ($\int_{\bx}f^2(\bx)d\bx<\infty$, where the integration is over $\reals^d$), and $L_1$ denotes the space of absolutely integrable functions ($\int_{\bx}|f(\bx)|d\bx<\infty$). $\norm{\cdot}$ denotes the Euclidean norm, $\inner{\cdot,\cdot}_{L_2}$ denotes inner product in $L_2$ space (for functions $f,g$, we have $\inner{f,g}_{L_2}=\int f(\bx)g(\bx)d\bx$), $\lnorm{\cdot}$ denotes the standard norm in $L_2$ space ( $\lnorm{f}^2=\int_{\bx}f(\bx)^2d\bx$), and $\lmnorm{\cdot}$ denotes the $L_2$ space norm weighted by a probability measure $\mu$ (namely $\lmnorm{f}^2=\int f(\bx)^2d\mu(\bx)$). Given two functions $f,g$, we let $fg$ be shorthand for the function $\bx\mapsto f(\bx)\cdot g(\bx)$, and $f+g$ be shorthand for $\bx\mapsto f(\bx)+g(\bx)$. Given two sets $A,B$ in $\reals^d$, we let $A+B=\{a+b:a\in A,b\in B\}$ and $A^C=\{a\in \reals^d:a\notin A\}$

\textbf{Fourier Transform.} For a function $f:\RR \to \RR$, our convention for the Fourier transform is
$$
\hat{f}(w) = \int_{\reals} \exp\left(-2\pi i x w\right) f(x) dx
$$
whenever the integral is well defined. This is generalized for $f: \RR^d \to \RR$ by
\begin{equation}
\hat{f}(\bw) = \int_{\reals^d} \exp\left(-2\pi i \inner{\bx,\bw}\right)f(\bx)d\bx.\label{eq:fourier}
\end{equation}

\textbf{Radial Functions.} A radial function $f:\reals^d\mapsto\reals$ is such that $f(\bx)=f(\bx')$ for any $\bx,\bx'$ such that $\norm{\bx}=\norm{\bx'}$. When dealing with radial functions, which are invariant to rotations, we will somewhat abuse notation and interchangeably use vector arguments $\bx$ to denote the value of the function at $\bx$, and scalar arguments $r$ to denote the value of the same function for any vector of norm $r$. Thus, for a radial function $f:\reals^d\rightarrow \reals$, $f(r)$ equals $f(\bx)$ for any $\bx$ such that $\norm{\bx}=r$.

\textbf{Euclidean Spheres and Balls.} Let $\Sph$ be the unit Euclidean sphere in $\reals^d$, $B_d$ be the $d$-dimensional unit Euclidean ball, and let $R_d$ be the radius so that $R_d B_d$ has volume one. By standard results, we have the following useful lemma:

\begin{lemma}\label{lem:Rd}
	$R_d = \sqrt{\frac{1}{\pi}}\left(\Gamma\left(\frac{d}{2}+1\right)\right)^{1/d}$, which is always between $\frac{1}{5}\sqrt{d}$ and $\frac{1}{2}\sqrt{d}$.
\end{lemma}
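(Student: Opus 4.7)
The identity $R_d = \sqrt{1/\pi}\,\Gamma(d/2+1)^{1/d}$ is immediate from the standard formula for Euclidean ball volumes, $\vol(rB_d) = \pi^{d/2} r^d/\Gamma(d/2+1)$: setting $\vol(R_d B_d) = 1$ and solving gives $R_d^d = \Gamma(d/2+1)/\pi^{d/2}$, from which the stated formula follows by taking $d$-th roots. The substantive content of the lemma is the two-sided numerical bound $\sqrt{d}/5 \le R_d \le \sqrt{d}/2$, which I would derive via Stirling's inequality.

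\textbf{The bounds via Stirling.} Applying the classical two-sided Stirling envelope
$$
\sqrt{2\pi x}\,(x/e)^{x} \;\le\; \Gamma(x+1) \;\le\; \sqrt{2\pi x}\,(x/e)^{x}\, e^{1/(12x)}
$$
at $x = d/2$, raising both sides to the $1/d$ power, and dividing by $\sqrt{\pi}$ yields
$$
\sqrt{\tfrac{d}{2\pi e}}\,(\pi d)^{1/(2d)} \;\le\; R_d \;\le\; \sqrt{\tfrac{d}{2\pi e}}\,(\pi d)^{1/(2d)}\, e^{1/(6 d^{2})}.
$$
The limiting constant is $\sqrt{1/(2\pi e)} \approx 0.2419$, which sits safely inside $(1/5,\,1/2)$. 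A quick calculation shows $(\pi d)^{1/(2d)}$ is monotonically decreasing to $1$ for $d \ge 1$, and $e^{1/(6d^{2})} \to 1$, so these correction factors never drive $R_d/\sqrt d$ outside $[1/5,\,1/2]$ for sufficiently large $d$.

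\textbf{Small $d$ check.} For the handful of small dimensions where the Stirling prefactors are not yet negligible, I would just evaluate $R_d$ directly: $R_1 = 1/2$, $R_2 = 1/\sqrt{\pi} \approx 0.564$, $R_3 = (3/(4\pi))^{1/3} \approx 0.620$, $R_4 = (2/\pi^{2})^{1/4} \approx 0.671$, and so on, each of which is readily seen to lie in $[\sqrt{d}/5,\, \sqrt{d}/2]$. Combined with the Stirling estimates above from some modest threshold onward, this covers all $d \ge 1$.

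\textbf{Main obstacle.} Nothing delicate arises. The only minor bookkeeping issue is controlling the correction $(\pi d)^{1/(2d)}$, which inflates $R_d$ slightly above its asymptote and is worst precisely for small $d$; this is exactly the regime handled by the direct numerical computation, so the two steps dovetail.
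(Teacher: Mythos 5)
The paper states this lemma without proof, attributing it to ``standard results,'' so there is no paper proof to compare against. Your approach --- derive the exact formula from $\vol(R_d B_d)=1$ and the volume formula for balls, then sandwich $R_d/\sqrt d$ using the Robbins/Stirling envelope $\sqrt{2\pi x}(x/e)^x \le \Gamma(x+1) \le \sqrt{2\pi x}(x/e)^x e^{1/(12x)}$ --- is the natural and correct one, and the details check out: taking $x=d/2$ and $d$-th roots gives
\[
R_d/\sqrt d \in \left[\sqrt{\tfrac{1}{2\pi e}}\,(\pi d)^{1/(2d)},\;\sqrt{\tfrac{1}{2\pi e}}\,(\pi d)^{1/(2d)}\,e^{1/(6d^2)}\right],
\]
with $\sqrt{1/(2\pi e)}\approx 0.242$. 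The lower bound $\ge 1/5$ is automatic for all $d\ge 1$ because $(\pi d)^{1/(2d)}\ge 1$. For the upper bound, your ``sufficiently large $d$'' plus ``check small $d$ directly'' plan does need to be tied off with an explicit threshold: since $(\pi d)^{1/(2d)}$ is decreasing on $d\ge 1$ (the derivative of $\log(\pi d)/(2d)$ is $(1-\log(\pi d))/(2d^2)<0$ once $\pi d>e$), and $e^{1/(6d^2)}$ is decreasing, one need only verify the upper bound at the smallest $d$ where Stirling is invoked. At $d=2$ the bound evaluates to $\approx 0.242\cdot 1.583\cdot 1.042\approx 0.40 < 1/2$, so the Stirling argument covers all $d\ge 2$, and $d=1$ is the sole direct check ($R_1 = \pi^{-1/2}\Gamma(3/2)=1/2$, where the bound is tight). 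Your listed small-$d$ values ($R_2\approx 0.564$, $R_3\approx 0.620$, $R_4\approx 0.671$) are all correct but, as it turns out, more than needed.
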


\textbf{Bessel Functions.} Let $J_{\nu}:\reals\mapsto \reals$ denote the Bessel function of the first kind, of order $\nu$. The Bessel function has a few equivalent definitions, for example $J_{\nu}(x)=\sum_{m=0}^{\infty}\frac{(-1)^m}{m!\Gamma(m+\nu+1)}\left(\frac{x}{2}\right)^{2m+\nu}$ where $\Gamma(\cdot)$ is the Gamma function. Although it does not have a closed form, $J_{\nu}(x)$ has an oscillating shape, which for asymptotically large $x$ behaves as $\sqrt{\frac{2}{\pi x}}\cos\left(-\frac{(2\nu+1)\pi}{4}+x\right)$. Figure \ref{fig:bessel} illustrates the function for $\nu=20$. In appendix \ref{sec:bessel}, we provide additional results and approximations for the Bessel function, which are necessary for our proofs. 

\begin{figure}\label{fig:bessel}
	\centering
	\includegraphics[scale=0.5]{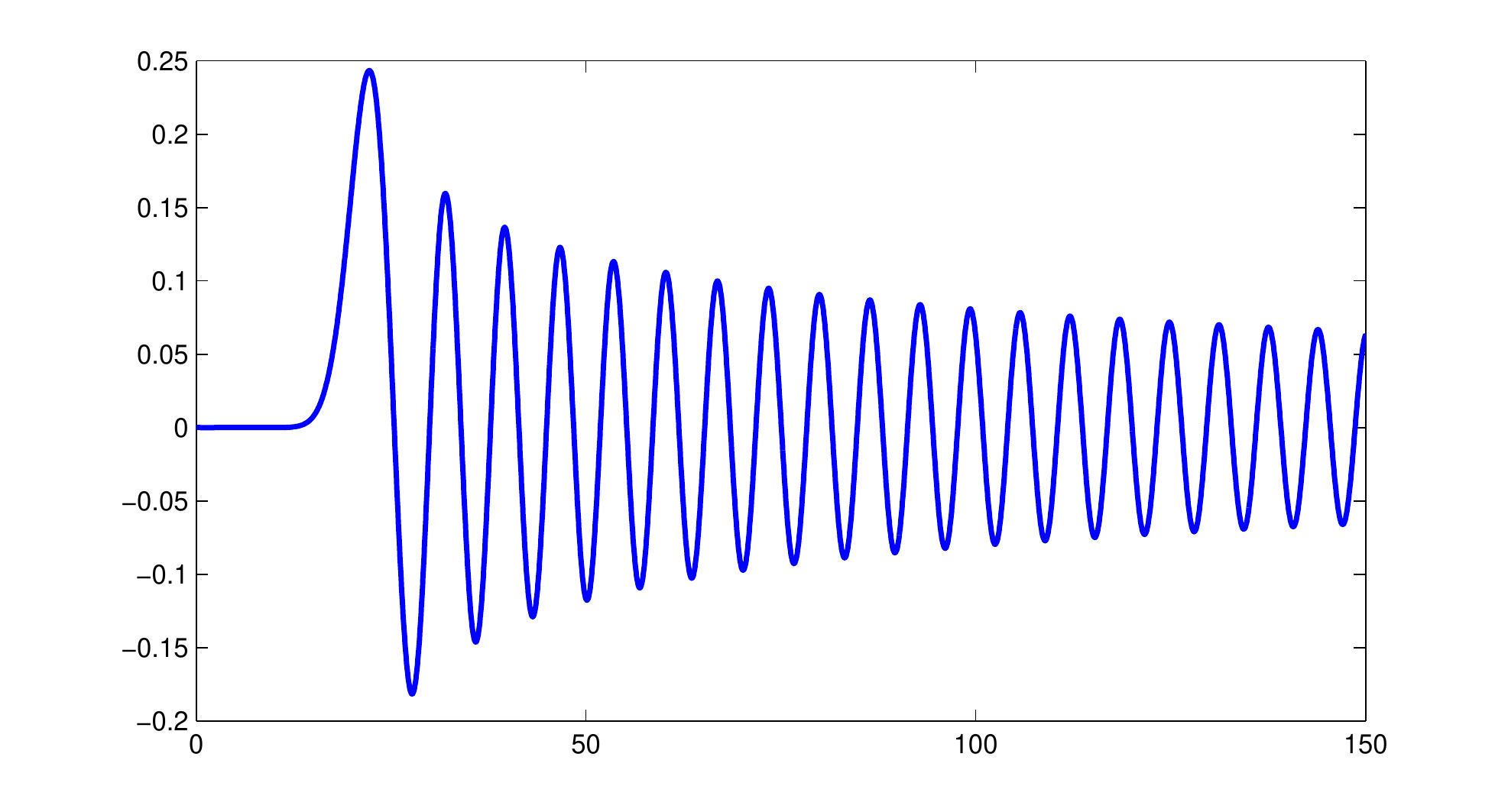}
	\caption{Bessel function of the first kind, $J_{20}(\cdot)$}
\end{figure}

\section{Proof of \thmref{thm:main}}\label{sec:proof}

In this section, we provide the proof of \thmref{thm:main}. Note that some technical proofs, as well as some important technical lemmas on the structure of Bessel functions, are deferred to the appendix.

\subsection{Constructions}

As discussed in \secref{sec:idea}, our theorem rests on constructing a distribution $\mu$ and an appropriate function $g$, which is easy to approximate (w.r.t. $\mu$) by small $3$-layer networks, but difficult to approximate using $2$-layer networks. Thus, we begin by formally defining $g,\mu$ that we will use.

First, $\mu$ will be defined as the measure whose density is $\frac{d \mu}{d\bx} = \varphi^2(\bx)$, where $\varphi(\bx)$ is the Fourier transform of the indicator of a unit-volume Euclidean ball $\ind{\bw\in R_d B_d}$. Note that since the Fourier transform is an isometry, $\int_{\reals^d}\varphi(\bx)^2 d\bx = \int_{\reals^d}\ind{\bw\in R_d B_d}^2 d\bw=1$, hence $\mu$ is indeed a probability measure. The form of $\varphi$ is expressed by the following lemma:
\begin{lemma}\label{lem:varphi}
	Let $\varphi(\bx)$ be the Fourier transform of $\ind{\bw\in R_d B_d}$. Then 
	\[
	\varphi(\bx) = \left(\frac{R_d}{\norm{\bx}}\right)^{d/2}J_{d/2}(2\pi R_d\norm{\bx}).
	\]
\end{lemma}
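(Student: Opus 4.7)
Since $\ind{R_d B_d}$ is rotationally symmetric, its Fourier transform $\varphi$ is radial, so it suffices to compute $\varphi(r\be_1)$ for $r = \norm{\bx}$. My plan is a direct computation of the resulting integral: first rescale by $R_d$ to reduce to the unit ball, then integrate out the $d-1$ coordinates orthogonal to $\be_1$ to obtain a one-dimensional integral, and finally identify this integral with a Bessel function via a classical integral representation.

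Concretely, after the substitution $\bw = R_d \bu$, the Fourier integral becomes $R_d^d \int_{\norm{\bu}\le 1} e^{-2\pi i R_d r u_1}\, d\bu$. Integrating out $(u_2,\dots,u_d)$ over the $(d-1)$-ball cross-section of radius $\sqrt{1-u_1^2}$ yields a one-dimensional integral of the form $\int_{-1}^{1} e^{-2\pi i R_d r s}(1-s^2)^{(d-1)/2}\, ds$, multiplied by the volume $V_{d-1} = \pi^{(d-1)/2}/\Gamma((d+1)/2)$ of the unit $(d-1)$-ball.

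The remaining one-dimensional integral is precisely the Poisson-type representation of the Bessel function:
\[
J_\nu(z) \;=\; \frac{(z/2)^\nu}{\sqrt{\pi}\,\Gamma(\nu+1/2)} \int_{-1}^1 e^{izs}(1-s^2)^{\nu-1/2}\, ds, \qquad \mathrm{Re}(\nu) > -1/2,
\]
where the imaginary part of the integrand is odd in $s$ and drops out, consistent with the realness of $\varphi$. Applying this with $\nu = d/2$ and $z = 2\pi R_d r$, and substituting the formula for $V_{d-1}$, the $\pi$-powers and Gamma-function factors cancel cleanly and leave $\varphi(\bx) = (R_d/r)^{d/2} J_{d/2}(2\pi R_d r)$, as claimed.

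The statement is purely computational, so there is no real obstacle. The only care needed is in the bookkeeping of constants — specifically, verifying that $V_{d-1}$ combines with the prefactor $(\pi R_d r)^{d/2}/(\sqrt{\pi}\,\Gamma((d+1)/2))$ coming from the Bessel representation to produce exactly the coefficient $(R_d/r)^{d/2}$ — and treating $\bx = \mathbf{0}$ as a limiting case, where the right-hand side is interpreted via the power series $J_{d/2}(z) \sim (z/2)^{d/2}/\Gamma(d/2+1)$ near $z=0$, matching the direct computation $\varphi(\mathbf{0}) = \mathrm{Vol}(R_d B_d) = 1$.
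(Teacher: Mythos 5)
Your proof is correct and takes essentially the same route as the paper: reduce by radial symmetry to a one-dimensional integral over the cross-sections of the ball, then identify that integral with $J_{d/2}$ via a classical integral representation (your Poisson form $\int_{-1}^1 e^{izs}(1-s^2)^{\nu-1/2}ds$ and the paper's $\int_0^\pi \cos(x\cos\theta)\sin^d\theta\,d\theta$ from DLMF 10.9.4 are the same identity up to the substitution $s=\cos\theta$), and finally cancel constants using $V_{d-1}=\pi^{(d-1)/2}/\Gamma((d+1)/2)$. Rescaling $\bw=R_d\bu$ first is a cosmetic bookkeeping choice; otherwise the two arguments are identical.
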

The proof appears in Appendix \ref{subsec:prooflemvarphi}.

To define our hard-to-approximate function, we introduce some notation. Let $\alpha\geq 1$ and $\gamma$ be some large numerical constants to be determined later, and set $N=\gamma d^{2}$, assumed to be an integer (essentially, we need $\alpha,\gamma$ to be sufficiently large so that all the lemmas we construct below would hold). Consider the intervals
\[
\Delta_i = \left[\left(1+\frac{i-1}{N}\right)\alpha\sqrt{d}~,~\left(1+\frac{i}{N}\right)\alpha\sqrt{d}\right]~~,~~ i=1,2,\ldots,N.
\]
We split the intervals to ``good'' and ``bad'' intervals using the following definition:
\begin{definition}
	$\Delta_i$ is a \emph{good interval} (or equivalently, $i$ is good) if for any $x\in \Delta_i$
	\[
	J_{d/2}^2(2\pi R_d x)\geq \frac{1}{80\pi R_d x}.
	\]
	Otherwise, we say that $\Delta_i$ is a \emph{bad interval}.
\end{definition}

For any $i$, define 
\begin{equation} \label{eq:defgi}
g_i(x) = \begin{cases}\ind{x\in \Delta_i} & \text{$i$ good}\\ 0 & \text{$i$ bad}\end{cases}
\end{equation}
By definition of a ``good'' interval and \lemref{lem:varphi}, we see that $g_i$ is defined to be non-zero, when the value of $\varphi$ on the corresponding interval $\Delta_i$ is sufficiently bounded away from $0$, a fact which will be convenient for us later on.

Our proof will revolve around the $L_2$ function 
\[
\tilde{g}(\bx)=\sum_{i=1}^{N} \epsilon_i g_i(\bx),
\]
which as explained in \secref{sec:idea}, will be shown to be easy to approximate arbitrarily well with a $3$-layer network, but hard to approximate with a $2$-layer network.


\subsection{Key Lemmas}

In this subsection, we collect several key technical lemmas on $g_i$ and $\varphi$, which are crucial for the main proof. The proofs of all the lemmas can be found in Appendix \ref{sec:technicalproofs}.

The following lemma ensures that $\varphi(\bx)$ is sufficiently close to being a constant on any good interval:
\begin{lemma}\label{lem:flat}
	If $d\geq 2$, $\alpha\geq c$ and $N\geq c\alpha^{3/2}d^2$ (for some sufficiently large universal constant $c$), then inside any good interval $\Delta_i$, $\varphi(x)$ has the same sign, and
	\[
	\frac{\sup_{x\in \Delta_i}|\varphi(x)|}{\inf_{x\in \Delta_i}|\varphi(x)|} \leq 1+d^{-1/2}.
	\]
\end{lemma}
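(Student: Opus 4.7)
The plan is to factor $\varphi(x) = \left(R_d/x\right)^{d/2}\,J_{d/2}(2\pi R_d x) =: A(x)\,B(x)$ and control the multiplicative oscillation of each factor separately across the interval $\Delta_i$, whose length is $\alpha\sqrt{d}/N$. The prefactor $A(x)$ is smooth and monotone, and since $x$ ranges over $\Delta_i$ by a multiplicative factor of at most $1+1/N$, we have $\sup_{\Delta_i} A / \inf_{\Delta_i} A \leq (1+1/N)^{d/2} \leq \exp(d/(2N))$. Under the hypothesis $N \geq c\alpha^{3/2} d^2$ this is bounded by $1 + O(1/(\alpha^{3/2} d))$, which is negligible.

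The substantive step is to control $B(x) = J_{d/2}(2\pi R_d x)$. The good-interval hypothesis $J_{d/2}^2(2\pi R_d x) \geq 1/(80\pi R_d x)$, combined with $R_d \leq \tfrac12\sqrt d$ (Lemma \ref{lem:Rd}) and $x \leq 2\alpha\sqrt d$ on $\Delta_i$, gives the pointwise lower bound $|B(x)| \gtrsim 1/\sqrt{\alpha d}$. To bound the oscillation of $B$, I would use the classical identity $2 J_\nu'(y) = J_{\nu-1}(y) - J_{\nu+1}(y)$ together with the asymptotic decay bound $|J_\nu(y)| \leq C/\sqrt{y}$, valid for $y$ sufficiently large compared to $\nu$; I expect this latter estimate to be among the auxiliary Bessel facts developed in Appendix \ref{sec:bessel}. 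With $y = 2\pi R_d x \geq (2\pi\alpha/5)\, d$ and $\nu = d/2$, taking $\alpha$ a sufficiently large universal constant places us safely in the asymptotic regime, so chain-ruling yields $|B'(x)| = 2\pi R_d\, |J_{d/2}'(2\pi R_d x)| \leq C\sqrt{R_d/x} = O(1/\sqrt{\alpha})$.

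Combining the derivative bound with the length $\alpha\sqrt d/N$ of $\Delta_i$, the total additive variation of $B$ on $\Delta_i$ is $O(\sqrt{\alpha d}/N)$. Dividing by the pointwise lower bound $|B| \gtrsim 1/\sqrt{\alpha d}$ shows the relative variation is $O(\alpha d / N) \leq O(1/(c\,\alpha^{1/2} d))$. For $c$ a large enough universal constant, this is strictly less than $1$, which forces $B$ to retain its sign on $\Delta_i$ (hence so does $\varphi$), and it is also much smaller than $d^{-1/2}$. Multiplying the two factor estimates together yields
\[
\frac{\sup_{\Delta_i}|\varphi|}{\inf_{\Delta_i}|\varphi|} \;\leq\; \bigl(1+O(1/(\alpha^{3/2} d))\bigr)\bigl(1+O(1/(c\,\alpha^{1/2} d))\bigr) \;\leq\; 1 + d^{-1/2},
\]
as claimed, once the universal constants $c$ and $\alpha$ are chosen large enough.

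The main obstacle is making sure the Bessel asymptotic $|J_\nu(y)|, |J_\nu'(y)| = O(1/\sqrt{y})$ can be invoked with a universal constant uniformly over the relevant range of $x$; this is precisely what the assumption $\alpha \geq c$ purchases, by pushing the argument $y = 2\pi R_d x$ well past the transition region $y \sim \nu$. Everything else reduces to careful but routine accounting of the various scales ($R_d \asymp \sqrt d$, $x \asymp \alpha\sqrt d$, $|\Delta_i| = \alpha\sqrt d/N$, $N \gtrsim \alpha^{3/2} d^2$), and the relatively loose target $1 + d^{-1/2}$ leaves ample slack for these estimates.
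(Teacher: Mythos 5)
Your argument is correct and follows the same skeleton as the paper's proof: factor $\varphi(x) = (R_d/x)^{d/2}\cdot J_{d/2}(2\pi R_d x)$, bound the prefactor's oscillation by $(1+1/N)^{d/2}$, and control the Bessel factor by combining a derivative (Lipschitz) estimate with the good-interval lower bound $|J_{d/2}(2\pi R_d x)|\gtrsim 1/\sqrt{\alpha d}$. The one substantive difference is in the derivative estimate for the Bessel factor. You invoke the asymptotic decay $|J_\nu(y)|\lesssim 1/\sqrt{y}$ and the recurrence $2J_\nu'=J_{\nu-1}-J_{\nu+1}$ to get $|J_{d/2}'(y)|=O(1/\sqrt{y})$, hence $|B'(x)|=O(\sqrt{R_d/x})=O(1/\sqrt{\alpha})$, whereas the paper's Lemma~\ref{lem:lipmag} uses only the elementary fact $|J_\nu|\le 1/\sqrt2$ (for $\nu\ge1$) together with $J_\nu'(y)=-J_{\nu+1}(y)+(\nu/y)J_\nu(y)$ to conclude $|J_{d/2}'(y)|\le 1$, giving the cruder $|B'(x)|\le 2\pi R_d=O(\sqrt d)$. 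Your sharper estimate yields relative variation $O(1/(\alpha^{1/2}d))$ versus the paper's $O(d^{-1/2})$; both suffice given the slack in the target $1+d^{-1/2}$. The only thing to watch is that the decay bound you cite would need to apply to $J_{d/2\pm1}$, not just $J_{d/2}$ — Appendix~\ref{sec:bessel} states the Krasikov approximation and the $O(1/\sqrt{r\sqrt d})$ pointwise bound only for $J_{d/2}$ — but this is cosmetic since the same asymptotics hold for $\nu=d/2\pm1$ in the regime $y\ge 3\nu$. Net effect: your proof buys a bit of extra slack at the cost of a heavier auxiliary Bessel estimate; the paper opts for the more elementary Lipschitz bound because it is already enough under the stated hypothesis $N\gtrsim\alpha^{3/2}d^2$.
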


The following lemma ensures that the Fourier transform $\hat{g}_i$ of $g_i$ has a sufficiently large part of its $L_2$ mass far away from the origin:
\begin{lemma} \label{lem:nothinsh}
	Suppose $N\geq 100\alpha d^{3/2}$. Then for any $i$,
	\[
	\int_{(2R_dB_d)^{C}}\hat{g_i}^2(\bw) d\bw ~\geq~ \frac{1}{2} \int_{\reals^d}\hat{g_i}^2(\bw)d\bw,
	\]
	where $\hat{g_i}$ is the Fourier transform of $g_i$.
\end{lemma}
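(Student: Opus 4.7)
The claim is trivial when $i$ is bad, since then $g_i\equiv 0$, so I focus on a good $i$ and write $\Delta_i=[a_i,b_i]$ with $b_i-a_i=\alpha\sqrt{d}/N$ and $a_i,b_i\in[\alpha\sqrt d,2\alpha\sqrt d]$. Plancherel immediately gives $\int_{\RR^d}\hat g_i^2\,d\bw=\|g_i\|_{L_2}^2=\mathrm{Vol}(B_d)(b_i^d-a_i^d)$, so the claim reduces to the upper bound $\int_{2R_dB_d}\hat g_i^2\le\tfrac12\mathrm{Vol}(B_d)(b_i^d-a_i^d)$. Since $g_i$ is a radial thin-shell indicator, the standard radial Fourier formula gives the explicit expression
\[
\hat g_i(\bw)=\frac{2\pi}{s^{d/2-1}}\int_{a_i}^{b_i}r^{d/2}J_{d/2-1}(2\pi rs)\,dr,\qquad s=\|\bw\|.
\]

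The next step is to apply Cauchy-Schwarz to the $r$-integral and pass to polar coordinates. Squaring, integrating against $s^{d-1}\,ds$ over $[0,2R_d]$, swapping the order of integration, and substituting $u=2\pi rs$ in the inner $s$-integral produces
\[
\int_{2R_dB_d}\hat g_i^2\,d\bw\le \omega_{d-1}(b_i-a_i)\int_{a_i}^{b_i}r^{d-2}\left(\int_0^{4\pi rR_d}uJ_{d/2-1}^2(u)\,du\right)dr,
\]
where $\omega_{d-1}=d\,\mathrm{Vol}(B_d)$. The inner Bessel integral has the closed-form antiderivative $\int_0^T uJ_\nu^2(u)\,du=\tfrac{T^2}{2}\bigl[J_\nu'(T)^2+(1-\nu^2/T^2)J_\nu(T)^2\bigr]$, which I would derive by differentiating $u^2J_\nu'(u)^2+(u^2-\nu^2)J_\nu(u)^2$ and invoking Bessel's ODE. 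Since $r\ge\alpha\sqrt d$ and $R_d\ge\sqrt d/5$, the upper limit $T=4\pi rR_d$ lies well beyond $\nu=d/2-1$ whenever $\alpha\ge 1$, so the standard oscillatory bounds $|J_\nu(T)|,|J_\nu'(T)|\lesssim T^{-1/2}$ supplied by Appendix~\ref{sec:bessel} show that the Bessel integral is at most a universal constant times $T\lesssim R_d r$. Obtaining these oscillatory bounds with clean universal constants on the right range is, I expect, the main technical input; everything surrounding it is direct manipulation.

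Plugging the Bessel estimate back in and computing $(b_i-a_i)\int_{a_i}^{b_i}r^{d-1}\,dr=(b_i-a_i)(b_i^d-a_i^d)/d$ yields a bound of the form $C\,\mathrm{Vol}(B_d)R_d(b_i-a_i)(b_i^d-a_i^d)$. Since $R_d(b_i-a_i)\le(\sqrt d/2)(\alpha\sqrt d/N)=\alpha d/(2N)$, this is at most $C'\alpha d\,\mathrm{Vol}(B_d)(b_i^d-a_i^d)/N$, so the desired factor of $\tfrac12$ holds as soon as $N\ge 2C'\alpha d$, which is comfortably implied by the hypothesis $N\ge 100\alpha d^{3/2}$ for $d$ larger than a universal constant. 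The substantial gap between $\alpha d$ and $\alpha d^{3/2}$ is exactly the slack that absorbs the constants introduced by the Cauchy-Schwarz step and by the Bessel oscillatory bounds, so no finer tracking of constants should be needed.
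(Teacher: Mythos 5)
Your proof is correct and takes a genuinely different route from the paper's. Both arguments start from the same radial Fourier formula for $\hat g_i$ and reduce the problem, via Plancherel, to showing $\int_{2R_dB_d}\hat g_i^2 \leq \tfrac12\|g_i\|_{L_2}^2$. The divergence is in how that low-frequency mass is bounded. The paper's proof uses only the crude uniform bound $|J_{d/2-1}|\le 1$ inside the ball: squaring the $r$-integral of $s^{d/2}$ over the thin shell gives $\delta^2(\ell+\delta)^d$, and the $\int_0^{2R_d}r\,dr\sim R_d^2$ factor then yields a bound of order $R_d^2\ell\delta\sim d\,\ell\delta$ times the total mass, which is small because $\delta$ is small. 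Your argument is finer: after Cauchy--Schwarz in the $r$-integral you change variables to $u=2\pi rs$ and invoke the Lommel-type antiderivative $\int_0^T uJ_\nu^2(u)\,du=\tfrac{T^2}{2}\bigl[J_\nu'(T)^2+(1-\nu^2/T^2)J_\nu(T)^2\bigr]$, whose boundary term you then kill using the genuine oscillatory decay $|J_\nu(T)|,|J_\nu'(T)|\lesssim T^{-1/2}$ rather than the trivial bound $|J_\nu|\le1$. This exploits the actual decay of the Bessel function and delivers a stronger conclusion: your final ratio is $O(R_d(b_i-a_i))\sim \alpha d/N$, whereas the paper's route inherently requires $N\gtrsim \alpha^2 d^2$ (the paper's displayed condition $N\ge 100\alpha d^{3/2}$ appears to rest on an inconsistent parameterization $\delta=1/N$ in its own proof, but this is immaterial since the theorem later takes $N\ge C\alpha^{3/2}d^2$). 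The only thing you should make explicit is that the paper's Appendix~\ref{sec:bessel} supplies the oscillatory bound for $J_{d/2}$ directly (Lemma~\ref{lem:besapprox}), so you would need to shift the index to $J_{d/2-1}$ and additionally derive the matching bound for $J'_{d/2-1}$ from the recursion $J_\nu'(x)=\tfrac12\bigl(J_{\nu-1}(x)-J_{\nu+1}(x)\bigr)$; both steps are routine, and as you observe the relevant arguments $T=4\pi rR_d\gtrsim\alpha d$ sit well beyond the turning point $\nu\approx d/2$.
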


The following lemma ensures that $\widehat{g_i\varphi}$ also has sufficiently large $L_2$ mass far away from the origin:
\begin{lemma} \label{lem:nothinsh2}
	Suppose that $\alpha\geq C$, $N\geq C \alpha^{3/2}d^2$ and $d>C$, where $C>0$ is a universal constant. Then for any $i$,
	$$	
	\int_{(2 R_d B_d)^C} (\widehat{(g_i \varphi)} (\bw))^2 d \bw \geq \frac{1}{4} \int_{\reals^d} (\varphi (\bx) g_i (\bx))^2 d\bx.
	$$	
\end{lemma}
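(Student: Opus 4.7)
The plan is to exploit the fact that $\varphi$ is nearly constant on the support of $g_i$ (by Lemma \ref{lem:flat}), so that $g_i\varphi$ is well-approximated in $L_2$ by a scalar multiple $c_i g_i$. Lemma \ref{lem:nothinsh} then supplies the desired tail bound for $c_i\hat g_i$, and we transfer it to $\widehat{g_i\varphi}$ via Plancherel and the triangle inequality. The case of bad $i$ is trivial since $g_i\equiv 0$, so throughout I assume $i$ is good.

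Concretely, I would pick $c_i := \varphi(r_i^*)$ where $r_i^*\in\Delta_i$ minimizes $|\varphi|$ on $\Delta_i$. Lemma \ref{lem:flat} says $\varphi$ has a single sign on $\Delta_i$ with $\sup_{\Delta_i}|\varphi|/\inf_{\Delta_i}|\varphi|\leq 1+d^{-1/2}$, which forces
\[
|\varphi(\bx)-c_i|\;=\;\bigl||\varphi(\bx)|-|c_i|\bigr|\;\leq\;\sup_{\Delta_i}|\varphi|-\inf_{\Delta_i}|\varphi|\;\leq\; d^{-1/2}|c_i|
\]
for every $\bx$ in the shell $\{\norm{\bx}\in\Delta_i\}$. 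Integrating over that shell yields $\lnorm{g_i(\varphi-c_i)}\leq d^{-1/2}\lnorm{c_i g_i}$, and Plancherel applied to $g_i(\varphi-c_i)\in L_2$ upgrades this to
\[
\lnorm{\widehat{g_i\varphi}-c_i\hat g_i}\;=\;\lnorm{g_i(\varphi-c_i)}\;\leq\; d^{-1/2}\lnorm{c_i g_i}.
\]

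Now let $P$ be the orthogonal projection in $L_2$ onto functions supported in $(2R_d B_d)^C$. Lemma \ref{lem:nothinsh} together with Plancherel gives $\lnorm{P(c_i\hat g_i)}\geq \frac{1}{\sqrt 2}\lnorm{c_i g_i}$, and the triangle inequality then produces
\[
\lnorm{P\widehat{g_i\varphi}}\;\geq\;\lnorm{P(c_i\hat g_i)}-\lnorm{\widehat{g_i\varphi}-c_i\hat g_i}\;\geq\;\left(\tfrac{1}{\sqrt 2}-d^{-1/2}\right)\lnorm{c_i g_i}.
\]
Combining with the matching upper bound $\lnorm{g_i\varphi}\leq (1+d^{-1/2})\lnorm{c_i g_i}$ (triangle inequality again) and squaring,
\[
\int_{(2R_d B_d)^C}\!(\widehat{g_i\varphi}(\bw))^2\,d\bw\;\geq\;\left(\frac{1/\sqrt 2-d^{-1/2}}{1+d^{-1/2}}\right)^{\!2}\lnorm{g_i\varphi}^2,
\]
and the right-hand side exceeds $\tfrac14\lnorm{g_i\varphi}^2$ as soon as $d$ is larger than the explicit universal constant determined by $1/\sqrt 2-1/2\geq(3/2)d^{-1/2}$.

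The only real obstacle is the quantitative bookkeeping: Lemma \ref{lem:flat} is tuned precisely so that $\varphi$ deviates from $c_i$ by a $d^{-1/2}$ relative error on each good shell, and that tightness is exactly what is needed to absorb the replacement of $g_i\varphi$ by $c_i g_i$ into a constant-factor loss (from $\tfrac12$ down to $\tfrac14$) in the tail bound inherited from Lemma \ref{lem:nothinsh}. Everything else is Plancherel and triangle inequality.
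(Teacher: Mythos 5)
Your proof is correct and follows essentially the same route as the paper: approximate $\varphi$ by a constant $c_i$ on the good shell via Lemma \ref{lem:flat}, transfer the tail bound of Lemma \ref{lem:nothinsh} from $c_i\hat g_i$ to $\widehat{g_i\varphi}$ by Plancherel and the triangle inequality, and absorb the $O(d^{-1/2})$ relative error into the constant-factor loss from $1/2$ to $1/4$. The only cosmetic difference is that the paper sets $c_i=\sup_{\Delta_i}|\varphi|$ rather than the infimum, which removes the extra $(1+d^{-1/2})^{-2}$ factor in the final comparison $\lnorm{c_ig_i}\geq\lnorm{g_i\varphi}$ and streamlines the last inequality, but both choices close the argument for $d$ above a universal constant.
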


The following lemma ensures that a linear combination of the $g_i$'s has at least a constant $L_2(\varphi^2)$ probability mass.
\begin{lemma} \label{lem:bigmass}
	Suppose that $\alpha\geq c$ and $N\geq c(\alpha d)^{3/2}$ for some sufficiently large universal constant $c$, then for every choice of $\epsilon_i \in \{-1,+1\}$, $i=1,\ldots,N$, one has
	\[
	\int\left(\sum_{i=1}^{N}\epsilon_i g_i(\bx)\right)^2\varphi^2(\bx)d\bx\geq \frac{0.003}{\alpha}.
	\]
\end{lemma}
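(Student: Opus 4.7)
My plan is to exploit three facts in sequence: the disjointness of the supports of the $g_i$, the polar form of $\varphi^2$ given by Lemma~\ref{lem:varphi}, and the asymptotic oscillatory behavior of $J_{d/2}$.

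First, since the intervals $\Delta_i$ are pairwise disjoint, the supports of the $g_i$ are disjoint as well, so squaring $\sum_i \epsilon_i g_i$ annihilates every cross term: $\bigl(\sum_i \epsilon_i g_i(\bx)\bigr)^2 = \sum_i g_i^2(\bx) = \sum_{i\text{ good}}\ind{\|\bx\|\in\Delta_i}$, which is \emph{independent} of the signs $\epsilon_i$. Thus the quantity to bound reduces to $\sum_{i\text{ good}}\int_{\|\bx\|\in\Delta_i}\varphi^2(\bx)\,d\bx$.

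Next, I pass to polar coordinates. Using Lemma~\ref{lem:varphi} together with the fact that $R_d B_d$ has unit volume (so the sphere of radius $r$ has surface area $dr^{d-1}/R_d^d$), the per-shell integral collapses to $\int_{\Delta_i}\tfrac{d}{r}J_{d/2}^2(2\pi R_d r)\,dr$; the powers of $R_d$ cancel exactly, which is precisely why this measure was chosen. Inside a good interval I then apply the defining inequality $J_{d/2}^2(2\pi R_d r)\geq 1/(80\pi R_d r)$, reducing the integrand to $d/(80\pi R_d r^2)$, which integrates explicitly to $\tfrac{d}{80\pi R_d}\bigl(1/r_{\min}-1/r_{\max}\bigr)$. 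Evaluating these endpoints and using $R_d\leq \tfrac{1}{2}\sqrt{d}$ from Lemma~\ref{lem:Rd} yields a per-good-interval lower bound of order $\tfrac{1}{\alpha N(1+(i-1)/N)(1+i/N)}$ with an explicit numerical constant.

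The main obstacle, and the step where the hypothesis $N\geq c(\alpha d)^{3/2}$ really matters, is counting the number of good intervals. That hypothesis guarantees that within each $\Delta_i$ the argument $2\pi R_d r$ of $J_{d/2}$ changes by a small fraction of one full period, so $J_{d/2}^2$ is essentially constant across a single interval. The large-argument Bessel asymptotics (as collected in the appendix on Bessel functions), $J_{d/2}^2(y)\approx \tfrac{2}{\pi y}\cos^2\bigl(y-(d+1)\pi/4\bigr)$ valid once $y\gtrsim d$, then reduce the good condition asymptotically to $\cos^2(\cdots)\gtrsim \pi/80$, an inequality satisfied on a definite constant fraction (roughly $1-\tfrac{2}{\pi}\arcsin\sqrt{\pi/80}\approx 0.87$) of any full oscillation. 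Making this rigorous uniformly in $i$ shows that a constant fraction of the $N$ indices are good.

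Finally, summing the per-interval lower bounds over that fraction and recognizing $\sum_i \tfrac{1}{N(1+(i-1)/N)(1+i/N)}$ as a Riemann sum for $\int_0^1(1+s)^{-2}ds=\tfrac12$ produces an overall lower bound of the form (good fraction) $\cdot\ \tfrac{1}{80\pi\alpha}$, which comfortably exceeds the target $0.003/\alpha$ once $\alpha$ and $c$ are chosen sufficiently large.
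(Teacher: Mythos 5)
Your opening moves match the paper exactly: drop the cross terms by disjointness, pass to polar coordinates, substitute \lemref{lem:varphi}, and use $A_d R_d^d = d$ to reduce the quantity to $d\int_{r\in\text{good }\Delta_i}\tfrac{J_{d/2}^2(2\pi R_d r)}{r}\,dr$. From there, however, you and the paper part ways, and the step where you diverge is precisely where the genuine work (and the gap in your sketch) lies.

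You propose to lower-bound the contribution of each good interval by its definition, $J_{d/2}^2(2\pi R_d r)\geq \tfrac{1}{80\pi R_d r}$, integrate the resulting $\tfrac{d}{80\pi R_d r^2}$ explicitly, and then \emph{count} how many intervals are good, arguing via the large-argument asymptotics that roughly an $87\%$ fraction qualifies. The phrase ``making this rigorous uniformly in $i$'' hides essentially all the difficulty. First, an interval is good only if the threshold inequality holds at \emph{every} point of $\Delta_i$, so the ``$87\%$ of the line'' estimate does not directly give ``$87\%$ of intervals'': near every threshold crossing of $\cos^2$ there are intervals that straddle it, and you must bound these boundary losses against $N$. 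Second, the asymptotic $J_{d/2}^2(y)\approx\tfrac{2}{\pi y}\cos^2(\cdot)$ carries an $O(y^{-3/2})$ additive error (\lemref{lem:besapprox}), which perturbs the effective angular threshold and hence the fraction. Third, the Riemann sum $\sum_i \tfrac{1}{N(1+(i-1)/N)(1+i/N)}=\tfrac12$ telescopes exactly, but once you restrict to the (a priori unknown) good indices this identity breaks, and in the adversarial case where the bad intervals are the small-$i$ (heaviest) ones, the remaining mass drops to roughly $0.385$ rather than $0.435$. Chaining these corrections, your final estimate sits essentially \emph{at} the target $0.003/\alpha$, not comfortably above it, so the leftover unquantified errors are not obviously absorbable.

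The paper avoids the counting problem altogether, and this is the key idea your sketch is missing. In the proof of \lemref{lem:bigmass} the paper establishes the one-sided implication (eq:indimp2): if the \emph{strictly stronger} threshold $J_{d/2}^2(2\pi R_d r)\geq\tfrac{1}{40\pi R_d r}$ holds at a single $r$, then, by the Lipschitz bound on $J_{d/2}$ (\lemref{lem:lipmag}) and the hypothesis on $N$, the weaker threshold $\tfrac{1}{80\pi R_d r}$ holds throughout $\Delta_i$, so $r$ lies in a good interval. Thus the sublevel set $\{r:J_{d/2}^2(2\pi R_d r)\geq\tfrac{1}{40\pi R_d r}\}$ is \emph{contained} in the union of good intervals, and the integral over good intervals is lower-bounded by $d\int \tfrac{J^2}{r}\ind{J^2\geq\tfrac{1}{40\pi R_d r}}\,dr$ without ever identifying which intervals are good. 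That integral is then handled in \lemref{lem:besind}, which never counts anything: it uses $a\ind{a\geq b}\geq a-b$ to strip the indicator, applies \lemref{lem:besapprox} to replace $J^2$ by the cosine-squared approximation plus explicit error terms, and finishes by integration by parts. If you want to salvage your route, you would need to prove a quantitative version of your ``constant fraction of good intervals'' claim with explicit boundary and error bookkeeping; the paper's substitution of a set-inclusion plus a clean one-dimensional integral is the mechanism that makes the estimate go through cleanly.
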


Finally, the following lemma guarantees that the non-Lipschitz function $\sum_{i=1}^{N}\epsilon_i g_i(\bx)$ can be approximated by a Lipschitz function (w.r.t. the density $\varphi^2$). This will be used to show that $\sum_{i=1}^{N}\epsilon_i g_i(\bx)$ can indeed be approximated by a $3$-layer  network.
\begin{lemma}\label{lem:lipapprox}
	Suppose that $d\geq 2$. For any choice of $\epsilon_i \in \{-1,+1\}$, $i=1,\ldots,N$, there exists an $N$-Lipschitz function $f$, supported on $[\alpha \sqrt{d}, 2 \alpha \sqrt{d}]$ and with range in $[-1,+1]$, which satisfies
	\[
	\int\left(f(\bx)-\sum_{i=1}^{N}\epsilon_i g_i(\bx)\right)^2\varphi^2(\bx)d\bx~\leq~\frac{3}{\alpha^2\sqrt{d}}.
	\]
\end{lemma}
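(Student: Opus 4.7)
Define the approximator $f$ as a radial function $f(\bx)=h(\|\bx\|)$ obtained by smoothing the one-dimensional profile $r\mapsto\sum_{i=1}^{N}\epsilon_{i}\ind{r\in\Delta_{i}}$ over transition windows of width $\eta:=2/N$. Concretely, $h$ equals $\epsilon_{i}$ (or $0$ if $\Delta_{i}$ is bad) on the bulk of each $\Delta_{i}$, and interpolates linearly across each of the $N-1$ interior junctions between adjacent $\Delta_{i}$ as well as across the two endpoints $\alpha\sqrt d$ and $2\alpha\sqrt d$ (so that $h\equiv 0$ outside $[\alpha\sqrt d,2\alpha\sqrt d]$). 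Each jump of the original profile has magnitude at most $2$, so $h$ has slope at most $2/\eta=N$, making $f$ $N$-Lipschitz on $\RR^{d}$ (radial Lipschitz constants transfer to Euclidean ones via the reverse triangle inequality). By construction, $f$ is supported on the annulus $\{\alpha\sqrt d\le\|\bx\|\le 2\alpha\sqrt d\}$ and takes values in $[-1,+1]$.

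\textbf{Error analysis.} Let $T_{1}\subset[\alpha\sqrt d,2\alpha\sqrt d]$ denote the union of transition intervals; outside the preimage of $T_{1}$, $f$ coincides with $\tilde g$. Since $\varphi$ is radial, passing to polar coordinates gives
\[
\int_{\RR^{d}}\bigl(f(\bx)-\tilde g(\bx)\bigr)^{2}\varphi^{2}(\bx)\,d\bx
\;=\;\omega_{d-1}\int_{T_{1}}\bigl(h(r)-\tilde g(r)\bigr)^{2}\varphi^{2}(r)\,r^{d-1}\,dr,
\]
where $\omega_{d-1}$ is the surface area of $\Sph$. On each transition, $h-\tilde g$ is a piecewise linear tent (or half-tent, at the outer endpoints) of height at most $1$, and a direct computation gives $\int_{\text{one transition}}(h-\tilde g)^{2}\,dr\le \eta/3$; summing over the at most $N+1$ transitions yields $\int_{T_{1}}(h-\tilde g)^{2}\,dr\le (N+1)\eta/3 \le 4/3$.

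\textbf{Radial density bound.} It remains to control $\omega_{d-1}\varphi^{2}(r)\,r^{d-1}$ on $[\alpha\sqrt d,2\alpha\sqrt d]$. By \lemref{lem:varphi}, $\varphi(r)=(R_{d}/r)^{d/2}J_{d/2}(2\pi R_{d}r)$; for $r\ge\alpha\sqrt d$ with $\alpha\ge 1$, the argument $x=2\pi R_{d}r$ lies well past the Airy transition at $x\approx d/2$, so the standard Bessel decay bound $|J_{d/2}(x)|^{2}\le 2/(\pi x)$ (which I would cite from \appref{sec:bessel}) applies, giving
\[
\varphi^{2}(r)\,r^{d-1}\;\le\;\frac{R_{d}^{d-1}}{\pi^{2}r^{2}},\qquad \omega_{d-1}\,\varphi^{2}(r)\,r^{d-1}\;\le\;\frac{d}{\pi^{2}R_{d}\,r^{2}}\;\le\;\frac{5}{\pi^{2}\alpha^{2}\sqrt d},
\]
where I used $\omega_{d-1}R_{d}^{d-1}=d/R_{d}$ (from $\vol(R_{d}B_{d})=1$) and $R_{d}\ge\sqrt d/5$ from \lemref{lem:Rd}. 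Combining with the previous step, the total error is at most $\frac{4}{3}\cdot\frac{5}{\pi^{2}\alpha^{2}\sqrt d}=\frac{20}{3\pi^{2}\alpha^{2}\sqrt d}<\frac{3}{\alpha^{2}\sqrt d}$.

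\textbf{Main obstacle.} The only quantitative input beyond elementary bookkeeping is the Bessel decay estimate above; everything else is tuning. The $1/\sqrt d$ in the target is not slack but reflects the true decay of the radial density $\omega_{d-1}\varphi^{2}(r)r^{d-1}$ on an annulus at radius $\sim\sqrt d$, which is precisely why the hard-function construction is placed there. Choosing $\eta=2/N$ exactly saturates the Lipschitz budget while keeping the transition-region $L^{2}$ mass at $O(1)$, and the target bound then drops out with room to spare.
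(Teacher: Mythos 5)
Your proposal is essentially the paper's proof, with cosmetic reorganization: the paper replaces each hard indicator $g_i$ by a clipped distance function $\check g_i(x)=\min\{1,N\,\mathrm{dist}(x,\Delta_i^C)\}$ and sets $f=\sum_i\epsilon_i\check g_i$, while you ramp directly between levels across interval junctions; both give the same slope-$N$ piecewise-linear profile, and both then pass to polar coordinates, use $\omega_{d-1}R_d^d=d$ to simplify the radial density, and invoke the Bessel decay bound $J_{d/2}^2(2\pi R_d r)\lesssim 1/(r\sqrt d)$ for $r\ge\alpha\sqrt d$. Your per-transition $\int(h-\tilde g)^2\,dr\le\eta/3$ estimate is a bit sharper than the paper's crude (height $\times$ width) bound, but it buys nothing here since both land well under $3/(\alpha^2\sqrt d)$.

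One small inaccuracy to note: you cite the Bessel bound as $|J_{d/2}(x)|^2\le 2/(\pi x)$, which is only the leading asymptotic; the paper's \lemref{lem:besbound} gives the rigorous version $J_{d/2}^2(2\pi R_d r)\le 1.3/(r\sqrt d)$ for $r\ge\sqrt d$, carrying a correction beyond the asymptotic prefactor. Substituting that in, your $\frac{4}{3}\cdot\frac{5}{\pi^2\alpha^2\sqrt d}$ becomes $\frac{4}{3}\cdot\frac{1.3}{\alpha^2\sqrt d}\approx\frac{1.73}{\alpha^2\sqrt d}$, still comfortably below the target, so the argument stands. You should also note, as the paper implicitly does, that the width-$\eta=2/N$ transition windows fit inside intervals of width $\alpha\sqrt d/N$ since $\alpha$ is a large constant.
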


\subsection{Inapproximability of the Function $\tilde{g}$ with $2$-Layer Networks}

The goal of this section is to prove the following proposition.
\begin{proposition} \label{prop:main}
	Fix a dimension $d$, suppose that $d>C$, $\alpha > C$ and $N \geq C \alpha^{3/2} d^2$ and let $k$ be an integer satisfying
	\begin{equation} \label{eq:ksmall}
	k \leq c e^{c d}
	\end{equation}
	with $c,C>0$ being universal constants.
	There exists a choice of $\epsilon_i \in \{-1,+1\}$, $i=1,\ldots,N$, such that the function
	$\tilde{g}(\bx) = \sum_{i=1}^{N}\epsilon_i g_i(\norm{\bx})$
	has the following property. Let $f:\RR^d \to \RR$ be of the form
	\begin{equation} \label{eq:formf}
	f(\bx) = \sum_{i=1}^k f_i(\langle \bx, \bv_i \rangle)
	\end{equation}
	for $\bv_i \in \Sph$, where $f_i: \RR \to \RR$ are measurable functions satisfying
	$$
	|f_i(x)| \leq C'(1 + |x|^{\kappa})
	$$
	for constants $C',\kappa > 0$. Then one has
	$$
	\lmnorm{f - \tilde{g}} \geq \delta / \alpha
	$$	
	where $\delta > 0$ is a universal constant.
\end{proposition}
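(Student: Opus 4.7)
The plan is to transfer the $L_2(\mu)$ distance to the Fourier side, where the hypothesized ridge form of $f$ forces $\widehat{f\varphi}$ to have very restricted support. By Plancherel,
\[
\lmnorm{f-\tilde g}^2 \;=\; \|(f-\tilde g)\varphi\|_{L_2}^2 \;=\; \|\widehat{f\varphi} - \widehat{\tilde g\varphi}\|_{L_2}^2.
\]
Each summand $\bx\mapsto f_i(\langle \bv_i,\bx\rangle)$ is constant along directions orthogonal to $\bv_i$, so its Fourier transform, taken as a tempered distribution (justified by the polynomial growth of $f_i$), is supported on $\mathrm{span}\{\bv_i\}$. Since $\hat\varphi = \ind{\bw\in R_d B_d}$, the convolution–multiplication principle gives that $\widehat{f\varphi}$ is supported inside the union of tubes $T := \bigcup_{i=1}^k (\mathrm{span}\{\bv_i\} + R_d B_d)$. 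Let $P_T$ be multiplication by $\ind{\bw\in T^C\cap (2R_dB_d)^C}$, which is an orthogonal projection on $L_2$. Since $P_T\widehat{f\varphi}=0$, one obtains the universal lower bound $\lmnorm{f-\tilde g}^2 \ge \|P_T\widehat{\tilde g\varphi}\|_{L_2}^2$, reducing the problem to showing that some choice of $\epsilon$ makes this quantity at least $\delta^2/\alpha^2$ uniformly over the tube directions.

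The second step is a random-sign argument producing a single $\epsilon\in\{\pm1\}^N$ — independently of the choice of $\bv_i$'s — for which $\widehat{\tilde g_\epsilon\varphi}$ has substantial mass outside $2R_d B_d$. Let $Q$ denote multiplication by $\ind{\bw\in (2R_d B_d)^C}$. Writing $\widehat{\tilde g_\epsilon\varphi} = \sum_i \epsilon_i \widehat{g_i\varphi}$ and averaging over independent Bernoulli signs,
\[
\E_\epsilon \bigl\|Q\widehat{\tilde g_\epsilon\varphi}\bigr\|_{L_2}^2 \;=\; \sum_{i=1}^{N}\bigl\|Q\widehat{g_i\varphi}\bigr\|_{L_2}^2 \;\ge\; \frac{1}{4}\sum_{i=1}^{N}\|g_i\varphi\|_{L_2}^2,
\]
by \lemref{lem:nothinsh2}. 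Since the supports of the $g_i$ are pairwise disjoint, $(\sum_i\epsilon_i g_i)^2 = \sum_i g_i^2$ pointwise, so $\sum_i\|g_i\varphi\|_{L_2}^2 = \int(\sum_i \epsilon_i g_i)^2\varphi^2 d\bx \ge 0.003/\alpha$ by \lemref{lem:bigmass}. Hence some deterministic $\epsilon$ achieves $\|Q\widehat{\tilde g\varphi}\|_{L_2}^2 \ge c_1/\alpha$ for a universal constant $c_1$.

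With such $\epsilon$ fixed, the third step converts this high-frequency mass estimate into a lower bound on $\|P_T\widehat{\tilde g\varphi}\|_{L_2}^2$ that is uniform in the directions $\bv_1,\ldots,\bv_k$, by exploiting the radiality of $\widehat{\tilde g\varphi}$ (each $g_i\varphi$ depends only on $\|\bx\|$). For $r\ge 2R_d$, every tube $\mathrm{span}\{\bv_i\}+R_dB_d$ meets $r\Sph$ in exactly two spherical caps of angular radius $\arcsin(R_d/r)\le \pi/6$, and the relative $(d-1)$-Hausdorff measure of each cap on $r\Sph$ is bounded by a polynomial factor in $d$ times $\sin^{d-1}(\pi/6)\le 2^{-(d-1)}$. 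Summing over the $2k$ caps and using $k\le ce^{cd}$ with $c<\tfrac{1}{2}\log 2$ yields a relative measure at most $e^{-\Omega(d)}$, uniformly in $r\ge 2R_d$. Integrating in polar coordinates and using radiality gives
\[
\bigl\|\ind{\bw\in T}\widehat{\tilde g\varphi}\bigr\|_{L_2((2R_dB_d)^C)}^2 \;\le\; e^{-\Omega(d)}\,\|Q\widehat{\tilde g\varphi}\|_{L_2}^2,
\]
whence $\|P_T\widehat{\tilde g\varphi}\|_{L_2}^2 \ge (1-e^{-\Omega(d)})\|Q\widehat{\tilde g\varphi}\|_{L_2}^2 \ge c_2/\alpha$ for $d>C$. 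Taking square roots and using $\alpha\ge 1$ yields the desired $\lmnorm{f-\tilde g}\ge \delta/\alpha$.

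The main obstacle I expect is the third step: producing a clean uniform-in-$r$ estimate of the tube cap ratio on $r\Sph$ with the right dependence on $k$ and $d$, and rigorously justifying the informal identity $\widehat{f\varphi} = \hat f\star\hat\varphi$ together with its support statement, when $\hat f$ exists only as a tempered distribution supported on a union of lines through the origin and $\hat\varphi$ is the compactly supported indicator $\ind{R_dB_d}$. The random-signs step is, by contrast, a routine Bernoulli/Hilbert-space calculation once \lemref{lem:nothinsh2} and \lemref{lem:bigmass} are in hand.
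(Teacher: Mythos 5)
Your proposal is correct, and while it shares the Fourier-side reduction, the support claim (paper's Claim~\ref{claim:tubes}), the random-sign/projection lemma (Lemma~\ref{lem:signschoice}), and the underlying geometric bound on the relative measure of the tubes $h(2R_d)\le k\,e^{-cd}$, it packages the endgame genuinely differently from the paper and, in my view, more cleanly. The paper normalizes $q=\widehat{f\varphi}/\lnorm{f\varphi}$ and $w=\widehat{\tilde g\varphi}/\lnorm{\tilde g\varphi}$ to unit vectors, bounds the inner product $\inner{q,w}_{L_2}$ (Lemma~\ref{lem:conctubes}) via Cauchy--Schwarz, angular averaging of $q$, and Jensen's inequality, and then converts that inner-product bound back into a norm lower bound through the auxiliary estimate $\min_{\beta>0}\lnorm{\beta v-u}^2\ge\frac{1}{4}\lnorm{v-u}^2$. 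You instead observe that, since $\widehat{f\varphi}$ vanishes on $T^C\cap(2R_dB_d)^C$, the multiplier $P_T=\ind{T^C\cap(2R_dB_d)^C}$ is an orthogonal projection annihilating $\widehat{f\varphi}$, so $\lmnorm{f-\tilde g}^2\ge\lnorm{P_T\widehat{\tilde g\varphi}}^2$ with no normalization needed; then you exploit the exact radiality of $\widehat{\tilde g\varphi}$ so that its $L_2$ mass over $T\cap(2R_dB_d)^C$ is bounded, shell by shell, by $h(2R_d)$ times its mass over $(2R_dB_d)^C$, eliminating the Jensen step entirely (Jensen is what the paper needs to control the non-radial $q$ on the tubes; your argument never touches that object). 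What you give up is the slightly greater generality of Lemma~\ref{lem:conctubes}, which holds for any $q$ supported on $T$ and any radial $w$, but the proposition does not need that generality. Incidentally your route yields the marginally stronger bound $\lmnorm{f-\tilde g}\gtrsim 1/\sqrt{\alpha}$ rather than $1/\alpha$; since $\alpha\ge 1$ this implies the stated conclusion.

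Two points you should make rigorous when writing this up. First, the tube-cap estimate should be stated as the exact beta-type integral ratio $\int_{\sqrt3/2}^1(1-t^2)^{(d-3)/2}dt\big/\int_0^1(1-t^2)^{(d-3)/2}dt\le e^{-cd}$ (as in the paper's proof of Lemma~\ref{lem:conctubes}), rather than the informal ``$\sin^{d-1}(\pi/6)$ times a polynomial factor''; and you should note that $h(r)$ is non-increasing in $r$ because $T$ is star-shaped, which is what lets you replace $h(r)$ by $h(2R_d)$ uniformly on the outer region. Second, you should make explicit the reduction to $f\varphi\in L_2$: if $f\varphi\notin L_2$ then $\lmnorm{f-\tilde g}=\infty$ since $\tilde g\varphi\in L_2$, so the Plancherel step and the $L_2$ reading of the tempered-distribution support statement are justified without loss of generality (the paper handles this in a footnote).
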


The proof of this proposition requires a few intermediate steps. In the remainder of the section, we will assume that $N,d,\alpha$ are chosen to be large enough to satisfy the assumptions of Lemma \ref{lem:bigmass} and Lemma \ref{lem:nothinsh2}. In other words we assume that $d>C$ and $N \geq C \alpha^{3/2} d^2$ for a suitable universal constant $C>0$. We begin with the following:

\begin{lemma} \label{lem:signschoice}
	Suppose that $d,N$ are as above. There exists a choice of $\epsilon_i \in \{-1,1\}$, $i=1,\ldots,N$ such that 
	$$
	\int_{(2 R_d B_d)^C} \left (\widehat{ \left (\sum_i \epsilon_i g_i \varphi \right )  } (\bw) \right )^2 d \bw  \geq c
	$$
	for a universal constant $c>0$.
\end{lemma}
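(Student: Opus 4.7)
The plan is to use the probabilistic method: pick the $\epsilon_i$ to be i.i.d.\ Rademacher random variables, compute the expectation of the quantity to be bounded, and conclude that at least one realization achieves it. By linearity of the Fourier transform,
$$
\widehat{\sum_i \epsilon_i g_i \varphi} \;=\; \sum_i \epsilon_i \, \widehat{g_i \varphi},
$$
so if $P$ denotes multiplication by $\mathbf{1}_{(2R_d B_d)^C}$ (an orthogonal projection on $L_2(\RR^d)$), the integral in the lemma is precisely $\|P(\sum_i \epsilon_i \widehat{g_i \varphi})\|_{L_2}^2$.

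Taking expectation over the signs, independence together with $\EE[\epsilon_i \epsilon_j] = \delta_{ij}$ kills the cross terms, yielding
$$
\EE\Bigl[\bigl\|P\,\textstyle\sum_i \epsilon_i \widehat{g_i \varphi}\bigr\|_{L_2}^2\Bigr] \;=\; \sum_{i=1}^N \bigl\|P\,\widehat{g_i \varphi}\bigr\|_{L_2}^2.
$$
Each summand on the right is exactly the quantity bounded below in Lemma \ref{lem:nothinsh2}, giving $\|P\,\widehat{g_i\varphi}\|_{L_2}^2 \geq \tfrac{1}{4}\int (g_i\varphi)^2 \, d\bx$. Summing and using that the radial shells $\{\bx : \|\bx\| \in \Delta_i\}$ are pairwise disjoint, so $g_i g_j \equiv 0$ for $i \neq j$, we rewrite
$$
\sum_{i=1}^N \int (g_i \varphi)^2 \, d\bx \;=\; \int \Bigl(\sum_i \epsilon_i g_i(\bx)\Bigr)^{\!2} \varphi^2(\bx)\, d\bx
$$
for \emph{any} choice of signs (the cross terms vanish by disjoint support). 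By Lemma \ref{lem:bigmass}, this last quantity is at least $0.003/\alpha$.

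Combining the two inequalities gives $\EE[\|P\sum_i \epsilon_i \widehat{g_i \varphi}\|_{L_2}^2] \geq \tfrac{0.003}{4\alpha}$, so there exists at least one realization of the signs for which the corresponding deterministic value is at least this large. Since $\alpha$ will ultimately be fixed as a universal constant (in the subsequent use of the lemma), this yields the stated bound by a universal $c>0$. I do not anticipate a real obstacle: the argument is essentially the observation stated at the end of Section~\ref{sec:idea} (projection + Rademacher signs + Pythagoras), together with the two key per-shell estimates in Lemmas \ref{lem:nothinsh2} and \ref{lem:bigmass}. The only point requiring care is to justify that the Fourier transforms $\widehat{g_i \varphi}$ are genuine $L_2$ functions (so that $P$ acts on them and Fourier is an isometry), which is immediate because $g_i$ is bounded with compact support and $\varphi \in L_2$, so $g_i \varphi \in L_2 \cap L_1$.
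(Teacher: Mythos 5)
Your proof is correct and is essentially identical to the paper's argument: both choose i.i.d.\ Rademacher signs, use the orthogonal projection onto high frequencies, expand the expectation via $\EE[\epsilon_i\epsilon_j]=\delta_{ij}$ to reduce to $\sum_i \|P\widehat{g_i\varphi}\|_{L_2}^2$, and then invoke Lemma~\ref{lem:nothinsh2} per-shell and Lemma~\ref{lem:bigmass} (together with disjoint supports of the $g_i$) to finish. Your remark about $\alpha$ being fixed as a universal constant downstream matches how the paper's statement is ultimately used.
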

\begin{proof}
	Suppose that each $\epsilon_i$ is chosen independently and uniformly at random from $\{-1,+1\}$. It suffices to show that	
	$$
	\EE \left [\int_{(2 R_d B_d)^C} \left (\widehat{ \left (\sum_i \epsilon_i g_i \varphi \right )  } (\bw) \right )^2 d \bw  \right ] \geq c
	$$
	for some universal constant $c>0$, since that would ensure there exist some choice of $\epsilon_1,\ldots,\epsilon_N$ satisfying the lemma statement. Define $h(\bw) = \ind{\bw \in (2 R_d B_d)^C}$ and consider the operator 
	$$
	P(g) = \widehat{\hat g h}.
	$$
	This is equivalent to removing low-frequency components from $g$ (in the Fourier domain), and therefore is an orthogonal projection. 
	According to Lemma \ref{lem:nothinsh2} and isometry of the Fourier transform, we have 
	\begin{equation} \label{eq:lem44}
	\lnorm{P(g_i \varphi)}^2\geq \frac{1}{4}\lmnorm{g_i}^2
	\end{equation}
	for every good $i$. Moreover, an application of Lemma \ref{lem:bigmass}, and the fact that $\inner{g_i,g_j}_{L_2}=0$ for any $i\neq j$ (as $g_i,g_j$ have disjoint supports) tells us that
	\begin{equation} \label{eq:lem45}
	\sum_{i=1}^{N} \lmnorm{g_i}^2 = \lmnorm{\sum_{i=1}^{N}g_i}^2 \geq c
	\end{equation}
	for a universal constant $c>0$. We finally get,
	\begin{align*}
	\EE \left [\int_{(2 R_d B_d)^C} \left (\widehat{ \left (\sum_{i=1}^{N} \epsilon_i g_i \varphi \right ) } (\bw) \right )^2 d \bw  \right ] ~&= \EE \left [\int_{\reals^d}\left (  \left (\sum_{i=1}^{N} \epsilon_i P(g_i \varphi)  \right ) (\bx) \right )^2 d \bx  \right ] \\
	& = \EE \lnorm{\sum_{i=1}^{N} \epsilon_i P (g_i \varphi)}^2 \\
	& = \sum_{i,j=1}^{N} \EE[\epsilon_i \epsilon_j] \langle P (g_i \varphi), P (g_j \varphi) \rangle_{L_2} \\
	&= \sum_{i=1}^N\lnorm{P (g_i \varphi)}^2\\
	& \stackrel{\eqref{eq:lem44}}{\geq} \frac{1}{4} \sum_{i,j=1}^{N} \lmnorm{ g_i}^2\\
	& \stackrel{\eqref{eq:lem45}}{\geq} c/4.
	\end{align*}
	for a universal constant $c>0$.
\end{proof}

\begin{claim} \label{claim:tubes}
	Let $f$ be a function such that $f\varphi \in L_2$, and is of the form in \eqref{eq:formf}. Suppose that the functions $f_i$ are measurable functions satisfying
	\begin{equation} \label{tempered}
	|f_i(x)| \leq C(1 + |x|^{\alpha})
	\end{equation}
	for constants $C,\alpha > 0$. Then,
	\begin{equation} \label{tubesupport}
	\mathrm{Supp} (\widehat{f \varphi}) ~\subset~ \bigcup_{i=1}^k \left ( \mathrm{Span}\{\bv_i\} + R_d B_d \right )
	\end{equation}
\end{claim}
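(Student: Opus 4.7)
The plan is to interpret $f$ as a tempered distribution, compute its distributional Fourier transform, and then use the convolution theorem together with the fact that $\hat{\varphi}=\ind{R_dB_d}$ is compactly supported to obtain the claimed support inclusion.

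First I would verify that each summand $F_i(\bx):=f_i(\langle \bx,\bv_i\rangle)$ is a tempered distribution on $\RR^d$. Because $|f_i(x)|\leq C(1+|x|^\alpha)$ and $|\langle\bx,\bv_i\rangle|\leq\|\bx\|$, the function $F_i$ has polynomial growth in $\bx$, so it pairs with Schwartz test functions and defines an element of $\mathcal{S}'(\RR^d)$. Its distributional Fourier transform has a very restricted support: for a test function $\psi\in\mathcal{S}(\RR^d)$, a direct computation (writing $\bx=s\bv_i+\bu$ with $\bu\perp\bv_i$ and using that $\int_{\bv_i^\perp}e^{-2\pi i\langle\bu,\by\rangle}d\by$ is, as a distribution in $\bu$, the Dirac mass at the origin of $\bv_i^\perp$) gives
\[
\langle\widehat{F_i},\psi\rangle=\langle F_i,\hat\psi\rangle=\int_{\RR}f_i(t)\,\widehat{\psi(\,\cdot\,\bv_i)}(t)\,dt=\langle\widehat{f_i},\psi(\,\cdot\,\bv_i)\rangle,
\]
where $\widehat{f_i}$ is the one-dimensional tempered Fourier transform of $f_i$. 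In particular, whenever $\psi$ vanishes in a neighborhood of the line $\mathrm{Span}\{\bv_i\}$, the pairing vanishes; hence $\mathrm{Supp}(\widehat{F_i})\subseteq\mathrm{Span}\{\bv_i\}$. By linearity,
\[
\mathrm{Supp}(\hat{f})~\subseteq~\bigcup_{i=1}^{k}\mathrm{Span}\{\bv_i\}.
\]

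Next I would handle $\varphi$. By construction, $\varphi$ is the Fourier transform of $\ind{R_dB_d}$, and since $\ind{R_dB_d}$ is an even $L_1\cap L_2$ function, the Fourier inversion formula gives $\hat\varphi=\ind{R_dB_d}$ as an honest compactly supported $L_2$ function. Now $f\varphi\in L_2$ by assumption, so $\widehat{f\varphi}$ is a genuine $L_2$ function; at the same time, the convolution-multiplication principle, valid when one factor is a tempered distribution and the other has a Schwartz or compactly supported Fourier transform, yields
\[
\widehat{f\varphi}~=~\hat{f}\ast\hat\varphi~=~\hat{f}\ast\ind{R_dB_d}
\]
as tempered distributions (so in particular the two sides agree as $L_2$ functions).

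Finally, since $\hat\varphi=\ind{R_dB_d}$ is compactly supported, the standard support inclusion for convolutions of distributions (one of which has compact support) gives
\[
\mathrm{Supp}(\widehat{f\varphi})~\subseteq~\mathrm{Supp}(\hat{f})+\mathrm{Supp}(\hat\varphi)~\subseteq~\Bigl(\bigcup_{i=1}^{k}\mathrm{Span}\{\bv_i\}\Bigr)+R_dB_d~=~\bigcup_{i=1}^{k}\bigl(\mathrm{Span}\{\bv_i\}+R_dB_d\bigr),
\]
which is exactly \eqref{tubesupport}. The main technical obstacle is the careful distributional bookkeeping: $f$ itself is only polynomially bounded and in general not in $L_1$ or $L_2$, so one must pass to tempered distributions, justify the slice-type computation of $\widehat{F_i}$, and confirm that the distributional convolution theorem applies in this mixed $L_2$/tempered-distribution setting. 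Once these are in place, the support inclusion is immediate.
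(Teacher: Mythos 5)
Your proposal is correct and follows essentially the same route as the paper: interpret $f$ as a tempered distribution, show by a slicing computation that $\widehat{f_i(\langle\cdot,\bv_i\rangle)}$ is supported on $\mathrm{Span}\{\bv_i\}$, and then use the compact support of $\hat\varphi=\ind{R_dB_d}$ to conclude the support inclusion. The one noticeable stylistic difference is that you invoke the abstract machinery directly, namely the existence of the $\mathcal{S}'\ast\mathcal{E}'$ convolution, the identity $\widehat{f\varphi}=\hat{f}\ast\hat\varphi$ at the distributional level, and the general support theorem $\mathrm{Supp}(u\ast v)\subseteq\mathrm{Supp}(u)+\mathrm{Supp}(v)$ for a compactly supported factor. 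The paper deliberately avoids needing that general theory: it only uses the $L_1$-level convolution-multiplication identity to rewrite $\varphi\hat\psi=\widehat{\hat\varphi\star\psi}$ for a Schwartz test function $\psi$, and then argues directly that the pairing $\langle\mu_i,\ind{R_dB_d}\star\psi\rangle$ vanishes whenever $\psi$ is supported outside $\mathrm{Span}\{\bv_i\}+R_dB_d$, never formally defining the distributional convolution $\hat{f}\ast\hat\varphi$. Your version is shorter if the reader accepts those standard distribution-theory facts; the paper's is more self-contained. Either way the content of the proof, and in particular the key slicing identity $\langle\widehat{F_i},\psi\rangle=\langle\widehat{f_i},\psi(\cdot\bv_i)\rangle$, is the same.
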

\begin{proof}
	Informally, the proof is based on the convolution-multiplication and linearity principles of the Fourier transform, which imply that if  $f=\sum_i f_i$, where each $f_i$ as well as $\varphi$ have a Fourier transform, then $\widehat{f\varphi}=\sum_i \widehat{f_i\varphi}=\sum_i \hat{f_i}\star \hat{\varphi}$. Roughly speaking, in our case each $\hat{f_i}(\bx)=\hat{f_i}(\inner{\bx,\bv_i})$ (as a function in $\reals^d$) is shown to be supported on $\mathrm{Span}\{\bv_j\}$, so its convolution with $\hat{\varphi}$ (which is an indicator for the ball $R_dB_d$) must be supported on $\mathrm{Span}\{\bv_i\} + R_d B_d$. Summing over $i$ gives the stated result. 
	
	Unfortunately, this simple analysis is not formally true, since we are not guaranteed that $f_i$ has a Fourier transform as a function in $L_2$ (this corresponds to situations where the integral in the definition of the Fourier transform in \eqref{eq:fourier} does not converge). However, at least for functions satisfying the claim's conditions, the Fourier transform still exists as a generalized function, or \emph{tempered distribution}, over $\reals^d$, and using this object we can attain the desired result.
	
	We now turn to provide the formal proof and constructions, starting with a description of tempered distributions and their relevant properties (see \cite[Chapter 11]{HNBook} for a more complete survey). To start, let $\mathcal{S}$ denote the space of Schwartz functions \footnote{This corresponds to infinitely-differentiable functions $\psi:\reals^d\mapsto\reals$ with super-polynomially decaying values and derivatives, in the sense that $\sup_{\bx}\left( x_1^{\alpha_1}\cdot x_2^{\alpha_2}\cdots x_d^{\alpha_d}\frac{\partial^{\alpha_1}}{\partial x_1^{\alpha_1}}\cdot\frac{\partial^{\alpha_2}}{\partial x_2^{\alpha_2}}\cdots\frac{\partial^{\alpha_d}}{\partial x_d^{\alpha_d}}f(\bx)\right)<\infty$ for all indices $\alpha_1,\ldots,\alpha_d$.} on $\reals^d$. A tempered distribution $\mu$ in our context is a continuous linear operator from $\mathcal{S}$ to $\reals$ (this can also be viewed as an element in the dual space $\mathcal{S}^*$). In particular, any measurable function $h:\reals^d\mapsto\reals$, which satisfies a polynomial growth condition similar to \eqref{tempered}, can be viewed as a tempered distribution defined as
	\[
	\psi\mapsto \inner{h,\psi} ~:=~ \int_{\reals^d} h(\bx)\psi(\bx)d\bx,
	\]
	where $\psi\in \mathcal{S}$. Note that the growth condition ensures that the integral above is well-defined. The Fourier transform $\hat{h}$ of a tempered distribution $h$ is also a tempered distribution, and defined as 
	\[
	\inner{\hat{h},\psi}~:=~\inner{h,\hat{\psi}},
	\]
	where $\hat{\psi}$ is the Fourier transform of $\psi$. It can be shown that this directly generalizes the standard notion of Fourier transforms of functions. Finally, we say that a tempered distribution $h$ is supported on some subset of $\reals^d$, if $\inner{h,\psi}=0$ for any function $\psi\in\mathcal{S}$ which vanishes on that subset.
	
	With these preliminaries out of the way, we turn to the setting considered in the claim. Let $\hat{f}_i$ denote the Fourier transform of $f_i$ (possibly as a tempered distribution, as described above), the existence of which is guaranteed by the fact that $f_i$ is measurable and by \eqref{tempered}. 
	We also define, for $\psi: \RR^d \to \RR$ and $1 \leq i \leq N$, a corresponding function $\psi_i: \RR \to \RR$ by 
	\[
	\psi_i(x) = \psi(x \bv_i),
	\]
	and define the tempered distribution $\mu_i$ (over Schwartz functions in $\reals^d$) as
	$$
	\inner{\mu_i,\psi}~:=~\inner{f_i,\hat{\psi}_i},
	$$
	which is indeed an element of $\mathcal{S}^*$ by the linearity of the Fourier transform, by the continuity of $\psi \to \psi_i(x)$ with respect to the topology of $\mathcal{S}$ and by the dominated convergence theorem. Finally, define
	\[
	\tilde f_i(\bx) = f_i(\langle \bx, \bv_i \rangle).
	\]
	Using the fact that\footnote{This is because $\int \hat{g}(\bx)d\bx = \int\int g(\bx)\exp(-2\pi i \inner{\bx,\bw})d\bw d\bx = \int g(\bx)\left(\int \exp(-2\pi i \inner{\bx,\bw})\cdot 1 d\bw\right)d\bx = \int g(\bx)\delta(\bx)d\bx=g(0)$, where $\delta(\cdot)$ is the Dirac delta function, which is the Fourier transform of the constant $1$ function. See also \cite[Chapter 11, Example 11.31]{HNBook}.}
	\begin{equation} \label{eq:fourier0}
	\int_{\RR^d} \hat g(\bx) d\bx = g(0) 
	\end{equation}
	for any $g\in \mathcal{S}$, recalling that $\bv_i$ has unit norm, and letting $\bv_i^\perp$ denote the subspace of vectors orthogonal to $\bv_i$, we have the following for any $\psi\in\mathcal{S}$:
	\begin{align} \label{eq:cm2}
	\inner{\tilde{f}_i,\hat{\psi}}&=\int_{\reals_d}\tilde{f}_i(\bx)\hat{\psi}(\bx)d\bx ~=~ \int_\RR \int_{\bv_i^\perp} \tilde f_i(\bx + \bv_i y) \hat \psi(\bx + \bv_i y) d\bx dy  \\
	& = \int_\RR f_i(y) \int_{\bv_i^\perp} \hat \psi(\bx + \bv_i y) d\bx dy \nonumber \\
	& = \int_\RR f_i(y) \int_{\bv_i^\perp} \int_{\RR^d} \psi(\bw)  \exp(-2 \pi i \langle \bw, \bx + \bv_i y \rangle ) d\bw d\bx dy \nonumber \\
	& = \int_\RR f_i(y) \int_{\bv_i^\perp} \int_{\RR} \int_{\bv_i^\perp} \psi(\bw_1 + \bv_i w_2)  \exp(-2 \pi i \langle \bw_1 + \bv_i w_2, \bx + \bv_i y \rangle ) d\bw_1 d w_2 d\bx dy \nonumber\\
	& = \int_\RR f_i(y) \int_{\RR} \exp(-2 \pi i w_2 y ) \int_{\bv_i^\perp}  \int_{\bv_i^\perp} \psi(\bw_1 + \bv_i w_2)  \exp(-2 \pi i \langle \bw_1 , \bx \rangle ) d\bw_1  d\bx d w_2 dy \nonumber\\
	& \stackrel{(\ref{eq:fourier0})}{=} \int_\RR f_i(y) \int_{\RR} \exp(-2 \pi i w_2 y ) \psi(\bv_i w_2)  d w_2 dy \nonumber\\
	& = \int_\RR f_i(y) \hat \psi(\bv_i y)  dy ~=~ \int_\RR f_i(y) \hat \psi_i(y)  dy ~=~ \inner{f_i,\hat{\psi}_i}~=~\inner{\mu_i,\psi},\nonumber
	\end{align}
	where the use of Fubini's theorem is justified by the fact that $\psi \in \mathcal{S}$.
	
	We now use the convolution-multiplication theorem (see e.g., \cite[Theorem 11.35]{HNBook}) according to which if $f,g \in L_1$ then
	\begin{equation} \label{eq:cm}
	\widehat{f \star g} = \hat f \hat g.
	\end{equation}

	Using this, we have the following for every $\psi \in \mathcal{S}$:
	\begin{align*}
	\inner{\widehat{\tilde f_i \varphi},\psi}   ~&= \inner{\tilde{f}_i\varphi,\hat{\psi}}~=~ \inner{\tilde{f}_i,\varphi\hat{\psi}}\\
	& \stackrel{(\ref{eq:cm})}{=} \inner{\tilde{f}_i,\widehat { \hat \varphi \star \psi }}\\
	& \stackrel{(\ref{eq:cm2})}{=} \inner{\mu_i,\hat{\varphi}\star\psi}~=~ \inner{\mu_i,\ind{R_d B_d} \star \psi}.
	\end{align*}
	Based on this equation, we now claim that $\inner{\widehat{\tilde{f}_i\varphi},\psi}=0$ for any $\psi\in\mathcal{S}$ supported on the complement of $\mathrm{Span}\{\bv_i\} + R_d B_d$. This would imply that the tempered distribution $\widehat{\tilde{f}_i\varphi}$ is supported in $\mathrm{Span}\{\bv_i\} + R_d B_d$, and therefore $\widehat{f\varphi}$ is supported in $\bigcup_{i=1}^k \left ( \mathrm{Span}\{\bv_i\} + R_d B_d \right )$ (by linearity of the Fourier transform and the fact that $f=\sum_{i=1}^{k}\tilde{f}_i$). Since the Fourier transform of $f\varphi$ as a tempered distribution coincides with the standard one (as we assume $f\varphi\in L_2$), the result follows. 
	
	It remains to prove that $\widehat{f_i\varphi}(\psi)=0$ for any $\psi\in\mathcal{S}$ supported on the complement of $\mathrm{Span}\{\bv_i\} + R_d B_d$. For such $\psi$, by definition of a convolution, $\ind{R_d B_d} \star \psi$ is supported on the complement of $\mathrm{Span}\{\bv_i\}$. However, $\mu_i$ is supported on $\mathrm{Span}\{\bv_i\}$ (since  if $\psi$ vanishes on $\bv_i$, then $\psi_i$ is the zero function, hence $\hat{\psi}_i$ is also the zero function, and $\inner{\mu_i,\psi}=\inner{f_i,\hat{\psi}_i}=0$). Therefore, $\inner{\mu_i,\ind{R_d B_d} \star \psi}=0$, which by the last displayed equation, implies that $\inner{\widehat{\tilde f_i \varphi},\psi}=0$ as required.
\end{proof}

\begin{lemma} \label{lem:conctubes}
	Let $q,w$ be two functions of unit norm in $L_2$. Suppose that $q$ satisfies 
	\begin{equation} \label{tubesupport2}
	\mathrm{Supp} (q) ~\subset~ \bigcup_{j=1}^k \left ( \mathrm{Span}\{\bv_j\} + R_d B_d \right )
	\end{equation}
	for some $k \in \mathbb{N}$. Moreover, suppose that $w$ is radial and that $\int_{2 R_d B_d} w(\bx)^2 d\bx \leq 1 - \delta$ for some $\delta\in [0,1]$. Then
	$$
	\langle q, w \rangle_{L_2} \leq 1-\delta/2 + k \exp(-cd)
	$$  
	where $c>0$ is a universal constant.
\end{lemma}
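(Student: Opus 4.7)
The plan is to reduce the bound on $\langle q, w\rangle_{L_2}$ to an estimate on how much of $w$'s $L_2$ mass lies inside the tube union $T := \bigcup_{j=1}^k(\mathrm{Span}\{\bv_j\} + R_d B_d)$, then exploit radiality of $w$ together with the fact that tubes occupy exponentially little area on large spheres. Concretely, since $q$ is supported on $T$, one has $\langle q, w\rangle_{L_2} = \langle q, \ind{T} w\rangle_{L_2}$, so by Cauchy-Schwarz and $\lnorm{q}=1$, $\langle q,w\rangle_{L_2} \leq \lnorm{\ind{T} w}$. It therefore suffices to prove $\int_T w^2(\bx)\,d\bx \leq 1-\delta + k\exp(-c'd)$ for some universal $c'>0$, after which $\sqrt{1-x} \leq 1 - x/2$ (applied to $x=\delta - k e^{-c'd}$ when this is nonnegative; the other case is trivial) yields the claimed bound.

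To estimate $\int_T w^2$, I split the integral into the region $T \cap 2R_d B_d$ and $T \cap (2R_d B_d)^C$. The first piece is at most $\int_{2R_d B_d} w^2 \leq 1-\delta$ by hypothesis. For the second piece I integrate in polar coordinates, using radiality of $w$: writing $w(\bx)=w(r)$ for $r=\lnorm{\bx}$, one has
\[
\int_{T\cap (2R_dB_d)^C} w^2(\bx)\,d\bx = \int_{2R_d}^\infty w^2(r)\,\mathrm{Vol}_{d-1}(T\cap r\Sph^{d-1}_r)\,dr,
\]
where $r\Sph^{d-1}_r$ denotes the radius-$r$ sphere. Thus the goal reduces to a uniform bound on the fraction $\mathrm{Vol}_{d-1}(T\cap rS)/\mathrm{Vol}_{d-1}(rS)$ for $r\geq 2R_d$.

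For a single tube $\mathrm{Span}\{\bv_j\} + R_d B_d$, a point $\bx$ on the sphere of radius $r$ lies in the tube iff the distance from $\bx$ to the line through $\bv_j$ is at most $R_d$, i.e.\ $r|\sin\theta| \leq R_d$, where $\theta$ is the angle between $\bx$ and $\bv_j$. When $r\geq 2R_d$ this forces $|\sin\theta|\leq 1/2$, so the intersection is contained in two antipodal spherical caps of angular radius at most $\pi/6$. Standard spherical concentration (equivalently, the density $\propto (1-s^2)^{(d-3)/2}$ of $\langle\bx/r,\bv_j\rangle$) shows such caps together occupy a fraction at most $2\exp(-cd)$ of the sphere, for a universal $c>0$. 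Taking a union bound over the $k$ tubes gives
\[
\frac{\mathrm{Vol}_{d-1}(T\cap rS)}{\mathrm{Vol}_{d-1}(rS)} \leq 2k\exp(-cd)\qquad\text{for all }r\geq 2R_d,
\]
and substituting back yields $\int_{T\cap(2R_dB_d)^C} w^2 \leq 2k\exp(-cd)\int_{(2R_dB_d)^C} w^2 \leq 2k\exp(-cd)$. Combining the two pieces and applying the square-root inequality from paragraph one finishes the proof; I expect the main technical nuisance to be the spherical-cap volume estimate, but this is standard (it is exactly the observation that, up to a polynomial factor in $d$, the normalized surface area of a cap of angular radius $\theta_0$ is bounded by $(\sin\theta_0)^{d-1}$, which at $\theta_0=\pi/6$ gives $(1/2)^{d-1}$).
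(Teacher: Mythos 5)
Your proof is correct and reaches the same conclusion by a genuinely different, somewhat cleaner route. The paper keeps the inner product $\langle q,w\rangle_{L_2}$ intact and splits the integral over the near region $2R_dB_d$ and the far region $(2R_dB_d)^C$; because $q$ is not radial, controlling the far piece requires a detour through the spherical average $\tilde q$ of $q$, a Cauchy--Schwarz step, a Jensen's-inequality step to pass from $(\E q)^2$ to $\E q^2$ on each sphere, and a monotonicity argument showing $h(r) := \mathrm{Vol}_{d-1}(r\Sph\cap T)/\mathrm{Vol}_{d-1}(r\Sph)$ is non-increasing. You instead apply Cauchy--Schwarz \emph{once, globally}, to get $\langle q,w\rangle_{L_2}\le \lnorm{\ind{T}w}$, which eliminates $q$ entirely and reduces everything to estimating the mass of the radial function $w^2$ over the tube union $T$, split into $T\cap 2R_dB_d$ (bounded by the hypothesis $\int_{2R_dB_d}w^2\le 1-\delta$) and $T\cap(2R_dB_d)^C$ (bounded by the tube-sparsity estimate). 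This avoids the spherical averaging, Jensen's inequality, and the monotonicity of $h$ altogether, at the cost of a final $\sqrt{1-x}\le 1-x/2$ step that the paper also effectively uses. The underlying geometric input is the same: the paper computes $\Pr(|\langle\bz,\bv\rangle|\ge\sqrt{3}R_d)$ for $\bz$ uniform on $2R_d\Sph$ via the $(1-t^2)^{(d-3)/2}$ density, while you observe that a tube meets the radius-$r$ sphere ($r\ge 2R_d$) in two antipodal caps of angular radius $\le\pi/6$ and invoke the standard cap-volume bound; these are the same calculation in different clothes. The factor $2$ picked up from counting both caps is exactly absorbed by the $1/2$ from the square-root inequality, so no relabeling of the constant $c$ is even needed. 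Your version is a legitimate simplification of the paper's argument.
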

\begin{proof}
	Define $A = (2 R_d B_d)^C$ and denote 
	$$
	T = \bigcup_{j=1}^k \left ( \mathrm{Span}\{\bv_j\} + R_d B_d \right )
	$$
	so that $T$ contains the support of $q(\cdot)$. For each $r>0$, define 
	$$
	h(r) = \frac{Vol_{d-1} (r \Sph \cap T ) }{Vol_{d-1} (r \Sph)},
	$$
	where $\Sph$ is the Euclidean sphere in $\reals^d$. Since $T$ is star shaped, the function $h(r)$ is non-increasing. We claim that there exists a universal constant $c>0$ such that
	\begin{equation} \label{eq:tube}
	h(2 R_d) \leq k \exp(-cd).
	\end{equation}
	Indeed, fix $\bv \in \Sph$ and consider the tube $T_0 = \mathrm{Span}\{\bv\} + R_d B_d$. Let $\bz$ be a uniform random point in $2 R_d \Sph$. We have by a standard calculation (See e.g., \cite[Section 2]{Sodin05})
	\begin{align*}
	\Pr( \bz \in T_0 ) ~& = \Pr( \norm{\bz}^2 - \inner{\bz, \bv}^2 \leq R_d^2) \\
	&= \Pr(4R_d^2-\inner{\bz,\bv}^2\leq R_d^2)
	~=~ \Pr( |\langle \bz, \bv \rangle | \geq \sqrt{3} R_d)  \\
	& =  \frac{\int_{\sqrt{3}/2}^{1} (1-t^2)^{(d-3)/2} } { \int_{0}^{1} (1-t^2)^{(d-3)/2}} dt \leq \exp(-cd).
	\end{align*}
	Using a union bound and the definition of $h$, equation \eqref{eq:tube} follows. 
	
	Next, define 
	$$
	\tilde q(\bx) = \frac{\int_{\norm{\bx}\Sph} q(y) d \mathcal{H}_{d-1}(y) }{Vol_{d-1} (\norm{\bx} \Sph)}
	$$
	to be the averaging of $q(\cdot)$ with respect to rotations (in the above formula $\mathcal{H}_{d-1}$ denotes the $d-1$ dimensional Hausdorff measure, i.e. the standard measure in $d-1$ dimensions). We have the following: Since $w(\cdot)$ is radial and has unit $L_2$ norm, and we assume $q(\cdot)$ is supported on $T$, we have
	\begin{align*}
	\int_A w(\bx) q(\bx) d\bx ~& \stackrel{(1)}{=} \int_A w(\bx) \tilde q(\bx) d\bx \\
	& \stackrel{(2)}{\leq} \lnorm{w} \lnorm{\tilde q \ind{A}} \\
	& \stackrel{(3)}{=} \sqrt{\int_{2 R_d}^\infty \tilde q(r)^2 Vol_{d-1} (r \Sph) dr} \\
	& = \sqrt{\int_{2 R_d}^\infty Vol_{d-1} (r \Sph) \left ( \frac{1}{Vol_{d-1} (r \Sph)} \int_{r \Sph \cap T} q(y) d \mathcal{H}_{d-1}(y) \right )^2 dr} \\
	& = \sqrt{\int_{2 R_d}^\infty h(r)^2 Vol_{d-1} (r \Sph) \left (\frac{1}{Vol_{d-1}(r \Sph \cap T)} \int_{r \Sph \cap T} q(y) d \mathcal{H}_{d-1}(y) \right )^2 dr} \\
	& \stackrel{(4)}{\leq} \sqrt{\int_{2 R_d}^\infty h(r)^2 Vol_{d-1} (r \Sph) \left (\frac{1}{Vol_{d-1}(r \Sph \cap T)} \int_{r \Sph \cap T} q(y)^2 d \mathcal{H}_{d-1}(y) \right ) dr} \\
	& = \sqrt{ \int_{2 R_d}^\infty h(r) \int_{r \Sph} q^2(y) d \mathcal{H}_{d-1}(y) dr} \\
	& \stackrel{(5)}{\leq} \sqrt{h(2 R_d)} \lnorm{q \ind{A}} \stackrel{(6)}{\leq} k \exp(-cd /2).
	\end{align*}
	In the above, (1) follows from $w(\cdot)$ being radial; (2) follows from Cauchy-Schwartz; (3) follows from $w(\cdot)$ having unit $L_2$ norm; (4) follows from the fact that the term being squared is the expectation of $q(y)$ where $y$ is uniformly distributed in $r\Sph\cap T$, and we have 
	$(\E_{y}[q(y)])^2\leq \E_{y}[q^2(y)]$ by Jensen's inequality; (5) follows from $h(\cdot)$ being non-increasing; and (6) follows from \eqref{eq:tube} and the fact that $q(\cdot)$ has unit $L_2$ norm.
	
	As a result of these calculations, we have
	\begin{align*}
	\langle q, w \rangle_{L_2} ~& = \int_A w(\bx) q(\bx) d\bx + \int_{A^C} w(\bx) q(\bx) d\bx \\
	& \leq k \exp(-cd /2) + \lnorm{q} \lnorm{w \ind{A^C}} =  k\exp(-cd/2) + 1\cdot \sqrt{\int_{A^C}w^2(\bx)d\bx}\\
	&\leq k\exp(-cd/2)+\sqrt{1 - \delta}.
	\end{align*}
	where we used the assumption that $q(\cdot)$ is unit norm and that $\int_{A^C}w^2(\bx)d\bx=\int_{(2R_dB_d)^C}w^2(\bx)d\bx\leq 1-\delta$. Since $\sqrt{1-\delta}\leq 1-\frac{1}{2}\delta$ for any $\delta\in [0,1]$, the result follows.
\end{proof}

\begin{proof}[Proof of Proposition \ref{prop:main}]
	Define 
	$$
	\tilde{g}(\bx) = \sum_i \epsilon_i g_i(|\bx|)
	$$
	where $(\epsilon_i)$ are the signs provided by Lemma \ref{lem:signschoice}. According to Lemma \ref{lem:bigmass}, we have 
	\begin{equation} \label{eq:hphibig}
	\lmnorm{\tilde{g}} \geq  c_1 / \alpha
	\end{equation}
	for a universal constant $c_1>0$. Note that the function $\tilde{g}$ is bounded between $-1$ and $1$. Define the function $w = \frac{\widehat{\tilde{g} \varphi}}{\lnorm{\tilde{g} \varphi}}$. By construction (hence according to Lemma \ref{lem:signschoice}) we have
	\begin{align*}
	\int_{2 R_d B_d} w(\bx)^2 d\bx ~& = 1 - \frac{\int_{2 R_d B_d} \widehat{\tilde{g} \varphi}(\bx)^2 d\bx}{ \lnorm{\tilde{g} \varphi}^2 }  \\
	& \leq 1 - \frac{\int_{2 R_d B_d} \widehat{\tilde{g} \varphi}(\bx)^2 d\bx}{ \lnorm{\varphi}^2 } \leq 1 - c_2,
	\end{align*}
	for a universal constant $c_2 > 0$, where in the first inequality we used the fact that $|\tilde{g}(\bx)|\leq 1$ for all $\bx$. 
	
	Next, define the function $q = \frac{\widehat{f \varphi}}{\lnorm{f \varphi}}$, where $f$ is an arbitrary function having the form in \eqref{eq:formf}. Thanks to the assumptions on the functions $f_i$, we may invoke\footnote{Claim \ref{claim:tubes} also requires that $f\varphi$ is an $L_2$ function, but we can assume this without loss of generality: Otherwise, $\lnorm{f\varphi}=\infty$, and since $\tilde{g}\varphi$ is an $L_2$ function with $\lnorm{\tilde{g}\varphi}<\infty$, we would have $\lmnorm{f - \tilde{g}}^2=\lnorm{f\varphi-\tilde{g}\varphi}^2=\infty$, in which case the proposition we wish to prove is trivially true.} Claim \ref{claim:tubes}, by which we observe that the functions $w,q$ satisfy the assumptions of Lemma \ref{lem:conctubes}. Thus, as a consequence of this lemma we obtain that
	\begin{equation} \label{eq:prodqw}
	\inner{q, w}_{L_2} \leq 1 - c_2/2 + k \exp(-c_3d)
	\end{equation}
	for a universal constant $c_3>0$. Next, we claim that since $\lnorm{q} = \lnorm{w} = 1$, we have for every scalars $\beta_1,\beta_2>0$ that
	\begin{equation} \label{eq:stupidtriangle}
	\lnorm{\beta_1 q - \beta_2 w} \geq \frac{\beta_2}{2} \lnorm{q-w}.
	\end{equation}
	Indeed, we may clearly multiply both $\beta_1$ and $\beta_2$ by the same constant affecting the correctness of the formula, thus we may assume that $\beta_2 = 1$. It thus amounts to show that for two unit vectors $v,u$ in a Hilbert space, one has that $\min_{\beta > 0} \norm{\beta v - u}^2 \geq \frac{1}{4} \norm{v-u}^2$.
	We have 
	\begin{align*}
	\min_\beta \norm{\beta v - u}^2 ~& = \min_\beta \left (\beta^2 \norm{v}^2 - 2 \beta \langle v,u \rangle + \norm{u}^2 \right ) \\
	& = \min_\beta \left (\beta^2 - 2 \beta \langle v,u \rangle + 1 \right ) \\
	& = 1 - \langle v, u \rangle^2 = \frac{1}{2} \norm{v-u}^2
	\end{align*}
	which in particular implies formula \eqref{eq:stupidtriangle}.
	
	Combining the above, and using the fact that $q,w$ have unit $L_2$ norm, we finally get
	\begin{align*}
	\lmnorm{f - \tilde{g}} ~& = \lnorm{f\varphi-\tilde{g}\varphi} =
	\lnorm{\widehat{f\varphi}-\widehat{\tilde{g}\varphi}} = \lnorm{ \left(\lnorm{f \varphi}\right)q(\cdot) -  \left(\lnorm{\tilde{g} \varphi}\right)w(\cdot)} \\
	& \stackrel{\eqref{eq:stupidtriangle}}{\geq}  \frac{1}{2} \lnorm{ q - w } \lnorm{\tilde{g} \varphi} ~=~ \frac{1}{2}\lnorm{q-w}\lmnorm{\tilde{g}}\\
	& \stackrel{ \eqref{eq:hphibig} }{\geq}  \frac{1}{2}\sqrt{2 (1 - \inner{q,w}_{L_2})}\frac{c_1}{ \alpha} \\
	& \stackrel{\eqref{eq:prodqw}}{\geq} \frac{c_1}{2 \alpha} \sqrt{ 2 \max(c_2/2 - k \exp(-c_3d), 0)} \geq \frac{c_1 \sqrt{c_2}}{4 \alpha}
	\end{align*}
	where in the last inequality, we use the assumption in \eqref{eq:ksmall}, choosing $c=\min\{c_2/4,c_3\}$. The proof is complete.
\end{proof}

\subsection{Approximability of the Function $\tilde{g}$ with $3$-Layer Networks}\label{subsec:approximability}
	
The next ingredient missing for our proof is the construction of a $3$-layer function which approximates the function $\tilde{g}=\sum_{i=1}^{N}\epsilon_i g_i$.
\begin{proposition} \label{prop:approx}
There is a universal constant $C>0$ such that the following holds. Let $\delta\in (0,1)$. Suppose that $d\geq C$ and that the functions $g_i$ are constructed as in \eqref{eq:defgi}. For any choice of $\epsilon_i \in \{-1,+1\}$, $i=1,\ldots,N$, there exists a function $g$ expressible by a $3$-layer network of width at most $\frac{8c_{\sigma}}{\delta}\alpha^{3/2}N d^{11/4}+1$, and with range in $[-2,+2]$, such that
\[
\lmnorm{ g(\bx)-\sum_{i=1}^{N}\epsilon_i g_i(\norm{\bx}) } ~\leq~\frac{\sqrt{3}}{\alpha d^{1/4}} + \delta.
\]	
\end{proposition}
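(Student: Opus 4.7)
The plan is to reduce the problem to approximating a Lipschitz univariate surrogate, and then to realize that surrogate as a composition of two univariate 2-layer approximations, plugged together into a 3-layer network. First, invoke Lemma \ref{lem:lipapprox} to replace $\sum_{i=1}^{N}\epsilon_i g_i(\|\bx\|)$ by $f(\|\bx\|)$, where $f$ is $N$-Lipschitz, supported on $[\alpha\sqrt d,2\alpha\sqrt d]$, with range in $[-1,1]$, at $L_2(\mu)$-cost $\sqrt{3}/(\alpha d^{1/4})$. It then remains to build a 3-layer $g$ with $\lmnorm{g-f(\|\cdot\|)}\leq \delta$ and apply the triangle inequality. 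Write $f(\|\bx\|)=\tilde f(\|\bx\|^2)$, where $\tilde f(t):=f(\sqrt{t})$ for $t\geq 0$ and $\tilde f(t):=0$ for $t<0$. Using that $f$ is supported in $[\alpha\sqrt d,2\alpha\sqrt d]$ (bounded away from $0$), a short case-analysis via the identity $|\sqrt{t_1}-\sqrt{t_2}|=|t_1-t_2|/(\sqrt{t_1}+\sqrt{t_2})$ shows $\tilde f$ is globally $L_{\tilde f}$-Lipschitz with $L_{\tilde f}\leq N/(\alpha\sqrt d)$ and constant $0$ outside $[-4\alpha^2 d,4\alpha^2 d]$.

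Next, apply Assumption \ref{assumption} twice. Let $R:=2\alpha\sqrt d$. (a) The truncated square $q(z):=\min(z^2,R^2)$ is $2R$-Lipschitz and constant outside $[-R,R]$, so there exists $q_\star(z)=a+\sum_{k=1}^{w_1}\alpha_k \sigma(\beta_k z-\gamma_k)$ with $w_1\leq c_\sigma\cdot 2R^2/\delta_1$ such that $|q_\star(z)-q(z)|\leq \delta_1$ for all $z$. (b) Similarly, $\tilde f$ is $L_{\tilde f}$-Lipschitz and constant outside $[-4\alpha^2 d,4\alpha^2 d]$, so Assumption \ref{assumption} yields $s(t)=a'+\sum_{i=1}^{w_2}\alpha'_i\sigma(\beta'_i t-\gamma'_i)$ with $w_2\leq c_\sigma\cdot 4\alpha^2 d\cdot L_{\tilde f}/\delta_2$ and $|s(t)-\tilde f(t)|\leq \delta_2$ for all $t$. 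Wire the network so that the first hidden layer contains $d\cdot w_1$ neurons, one bank of $w_1$ neurons per coordinate $j$ computing $\sigma(\beta_k x_j-\gamma_k)$; the second hidden layer has $w_2$ neurons, where the $i$-th one uses second-layer weights $v_{i,j,k}=\beta'_i\alpha_k$ and bias $c_i=\beta'_i\,d\,a-\gamma'_i$, so that it computes $\sigma(\beta'_i p(\bx)-\gamma'_i)$ with $p(\bx):=\sum_{j=1}^{d} q_\star(x_j)$; and the linear output produces $g(\bx):=s(p(\bx))$, with the additive constant $a'$ realized by a single extra neuron (this accounts for the "+1" in the width bound).

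For the error analysis, pointwise
\[
|g(\bx)-f(\|\bx\|)|\leq |s(p(\bx))-\tilde f(p(\bx))|+L_{\tilde f}\,|p(\bx)-\|\bx\|^2|\leq \delta_2+L_{\tilde f}\cdot d\delta_1.
\]
The bound $|p(\bx)-\|\bx\|^2|\leq d\delta_1$ holds whenever $\max_j|x_j|\leq R$; when some $|x_j|>R$, one has $\|\bx\|^2>R^2=4\alpha^2 d$ so $\tilde f(\|\bx\|^2)=0$, and $p(\bx)\geq R^2-d\delta_1$, which implies $|\tilde f(p(\bx))|\leq L_{\tilde f}\cdot d\delta_1$ by Lipschitzness of $\tilde f$ near the right endpoint of its support, so the same bound survives. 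Since $\mu$ is a probability measure, this pointwise bound transfers to $\lmnorm{\cdot}$. Choosing $\delta_2=\delta/2$ and $\delta_1=\Theta(\alpha\delta/(N\sqrt d))$ yields $\lmnorm{g-f(\|\cdot\|)}\leq \delta$; substituting into the expressions for $w_1,w_2$ and taking the maximum of $dw_1$ and $w_2$ produces a bound fitting inside $\frac{8c_\sigma}{\delta}\alpha^{3/2}Nd^{11/4}+1$ (there is slack in the $\alpha$- and $d$-exponents). Since $f$ is $[-1,1]$-valued and $\delta_2<1$, $g$ is automatically in $[-2,2]$.

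The main obstacle is controlling the approximation uniformly over all of $\reals^d$ rather than just on the support of $f(\|\cdot\|)$: naive choices of the first-layer approximant (e.g.\ approximating $z^2$ without truncation on $[-R,R]$) break down for large $\|\bx\|$ where $p(\bx)$ could land inside the support of $\tilde f$ and produce a spurious contribution. Truncating to $\min(z^2,R^2)$, together with the short case analysis above and the key observation that $\tilde f$ has a bounded global Lipschitz constant despite the $\sqrt{\cdot}$ composition, resolves this; the rest is a routine two-fold application of Assumption \ref{assumption} and careful bookkeeping of the widths.
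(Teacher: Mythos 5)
Your proposal is correct and follows essentially the same approach as the paper: the paper first invokes Lemma \ref{lem:lipapprox} exactly as you do, and then appeals to an intermediate Lemma \ref{lem:threelayerapprox}, which packages the same construction you inline — a truncated-square approximant $\min(z^2,R^2)$ applied coordinatewise and summed, followed by a univariate 2-layer approximation of $t\mapsto f(\sqrt{t})$, composed into a 3-layer network with the additive constant absorbed into a single extra neuron, and with the same truncation trick used to control the error for $\bx$ outside the support of $f(\|\cdot\|)$. The only cosmetic differences are that the paper argues the out-of-support error term vanishes exactly (via $\ell(\bx)\geq R^2$) rather than bounding it by $L_{\tilde f}\,d\delta_1$ as you do, and that you obtain a slightly tighter Lipschitz constant for $\tilde f$ (the paper writes $L/(2\sqrt{r})$ where $L/(2r)$ would also work); neither affects correctness, and your width bookkeeping does fit inside the claimed bound for $d$ large.
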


The proof of this proposition relies on assumption \ref{assumption}, which ensures that we can approximate univariate functions using our activation function. As discussed before \thmref{thm:main}, one can also plug in weaker versions of the assumption (i.e. worse polynomial dependence of the width $w$ on $R,L,1/\delta$), and get versions of proposition \ref{prop:approx} where the width guarantee has worse polynomial dependence on the parameters $N,\alpha,d,\delta$. This would lead to versions of the \thmref{thm:main} with somewhat worse constants and polynomial dependence on the dimension $d$. 

For this proposition, we need a simple intermediate result, in which an approximation for radially symmetric Lipschitz functions in $\RR^d$, using assumption \ref{assumption}, is constructed. 

\begin{lemma} \label{lem:threelayerapprox}
	Suppose the activation function $\sigma$ satisfies assumption \ref{assumption}. Let $f$ be an $L$-Lipschitz function supported on $[r,R]$, where $r\geq 1$. Then for any $\delta>0$, there exists a 
	function $g$ expressible by a $3$-layer network of width at most $\frac{2c_{\sigma}d^2 R^2L}{\sqrt{r}\delta}+1$,
	such that
	$$
	\sup_{\bx\in\RR^d} \bigl |g(\bx) - f(\norm{\bx}) \bigr | < \delta.
	$$
\end{lemma}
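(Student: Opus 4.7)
The plan is to let $g(\bx):=h(s(\bx))$, where $s(\bx)$ is a first-layer computation approximating $\|\bx\|^2$ and $h$ is a second-layer univariate approximation of $\tilde f(y):=f(\sqrt{\max(y,0)})$. After substituting one network into the other, this composition has exactly the $3$-layer form of \eqref{eq:3layer}.

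First, apply Assumption~\ref{assumption} to the truncated squaring function $q(z):=\min(z^2,R^2)$, which is $2R$-Lipschitz and constant outside $[-R,R]$, to get $s_0(z)=a_1+\sum_{k=1}^{w_1}\alpha_k\sigma(\beta_k z-\gamma_k)$ with $w_1\le 2c_\sigma R^2/\eps_1$ and $\sup_z|s_0(z)-q(z)|\le \eps_1$. Placing a copy of this approximator along each coordinate direction (weight vectors proportional to the standard basis), the first hidden layer can compute $s(\bx):=\sum_{j=1}^d s_0(x_j)$ using $dw_1$ neurons, and $|s(\bx)-\sum_{j=1}^d q(x_j)|\le d\eps_1$. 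A short case analysis shows that $\sum_{j=1}^d q(x_j)=\|\bx\|^2$ whenever $\|\bx\|\le R$ (since then all $|x_j|\le R$), while $\sum_{j=1}^d q(x_j)\ge R^2$ whenever $\|\bx\|>R$ (either some $|x_j|\ge R$ and $q(x_j)=R^2$, or all $|x_j|<R$ in which case $\sum x_j^2>R^2$). Next, $\tilde f$ is supported on $[r^2,R^2]$ (since $f(r)=f(R)=0$ by Lipschitzness) and is $L/(2r)$-Lipschitz via the chain rule, as $|\tilde f'(y)|\le L/(2\sqrt y)\le L/(2r)$. A second application of Assumption~\ref{assumption} with parameters $L/(2r)$ and $R^2$ yields $h(y)=a_2+\sum_{\ell=1}^{w_2}\alpha'_\ell\sigma(\beta'_\ell y-\gamma'_\ell)$ with $w_2\le c_\sigma R^2L/(2r\eps_2)$ and $\sup_y|h(y)-\tilde f(y)|\le\eps_2$. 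Absorbing the constants $a_1,a_2$ via one additional constant-output neuron (producing the ``$+1$'' in the width) yields a $3$-layer network with first and second hidden layers of sizes at most $dw_1$ and $w_2$.

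For the error analysis, the triangle inequality gives
$$
|g(\bx)-f(\|\bx\|)|\le|h(s(\bx))-\tilde f(s(\bx))|+|\tilde f(s(\bx))-f(\|\bx\|)|\le \eps_2+|\tilde f(s(\bx))-f(\|\bx\|)|.
$$
When $\|\bx\|\le R$, $f(\|\bx\|)=\tilde f(\|\bx\|^2)$ and $|s(\bx)-\|\bx\|^2|\le d\eps_1$, so Lipschitzness of $\tilde f$ yields the bound $Ld\eps_1/(2r)$. When $\|\bx\|>R$, $f(\|\bx\|)=0$ while $s(\bx)\ge R^2-d\eps_1$; since $\tilde f$ vanishes on $[R^2,\infty)$ and is $L/(2r)$-Lipschitz with $\tilde f(R^2)=0$, we also get $|\tilde f(s(\bx))|\le Ld\eps_1/(2r)$. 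In either case the total error is at most $\eps_2+Ld\eps_1/(2r)$. Choosing $\eps_1=\sqrt r\,\delta/(Ld)$ makes the second summand equal to $\delta/(2\sqrt r)\le\delta/2$, and picking $\eps_2$ slightly below $\delta/2$ makes the total error strictly less than $\delta$. With this balance, $dw_1=2c_\sigma R^2Ld^2/(\sqrt r\,\delta)$ dominates $w_2=O(c_\sigma R^2L/(r\delta))$ (using $d\ge 1$ and $r\ge 1$), giving the stated width bound $\tfrac{2c_\sigma d^2R^2L}{\sqrt r\,\delta}+1$. The main subtlety is the treatment of inputs with $\|\bx\|>R$: coordinate-wise truncation breaks the identity $\sum_j q(x_j)=\|\bx\|^2$ there, so one must check that $s(\bx)$ still lands in (or on the boundary of) the vanishing tail of $\tilde f$, which is exactly what the truncation $q(z)=\min(z^2,R^2)$ ensures.
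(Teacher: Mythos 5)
Your proof is correct and follows essentially the same route as the paper: truncate the squaring function to $\min\{z^2,R^2\}$, sum over coordinates, apply Assumption~\ref{assumption} once for the coordinate-wise truncated square and once for $y\mapsto f(\sqrt{y})$, compose, and add one extra neuron for the outer constant. The only cosmetic differences are that you organize the error into two terms rather than the paper's three (combining the Lipschitz-transfer and tail-vanishing steps into one case analysis), you record the slightly tighter Lipschitz constant $L/(2r)$ for $\tilde f$ where the paper uses the looser $L/(2\sqrt r)$, and you attribute the $+1$ width to absorbing both constants $a_1,a_2$ when in fact only the outer constant needs an extra neuron (the inner one is absorbed into the second layer's biases) — none of which affects the bound.
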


\begin{proof}
	Define the $2R$-Lipschitz function 
	\[
	l(x)=\min\{x^2,R^2\},
	\] 
	which is constant outside $[-R,R]$, as well as the function
	\[
	\ell(\bx)=\sum_{i=1}^d l(x_i)=\sum_{i=1}^{d}\min\{x_i^2,R^2\}
	\]
	on $\reals^d$. Applying assumption \ref{assumption} on $l(\cdot)$, we can obtain a
	function $\tilde{l}(x)$ having the form $a+\sum_{i=1}^{w}\alpha_i\sigma(\beta_i x-\gamma_i)$ so that
	$$
	\sup_{x\in\reals}\left | \tilde{l}(x) - l(x) \right | \leq \frac{\sqrt{r}\delta}{dL},
	$$
	and where the width parameter $w$ is at most $ \frac{2c_{\sigma}dR^2L}{\sqrt{r}\delta}$.
	Consequently, the function 
	\[
	\tilde{\ell}(\bx)=\sum_{i=1}^{d}\tilde{l}(x_i)
	\]
	can be expressed in the form $a+\sum_{i=1}^{w}\alpha_i\sigma(\beta_i x-\gamma_i)$ where $w\leq \frac{2c_{\sigma}d^2R^2L}{\sqrt{r}\delta}$, and it holds that
	\begin{equation} \label{eq:reluap1}
	\sup_{\bx\in\reals^d}\left | \tilde{\ell}(\bx) - \ell(\bx) \right | \leq \frac{\sqrt{r}\delta}{L}.
	\end{equation} 
	Next, define 
	\[
	s(x) = \begin{cases}f(\sqrt{x})&x\geq 0\\ 0& x<0\end{cases}.
	\]
	 Since $f$ is $L$-Lipschitz and supported on $\{x:r\leq x\leq R\}$, it follows that $s$ is $\frac{L}{2\sqrt{r}}$-Lipschitz and supported on the interval $[-R^2,R^2]$. Invoking assumption \ref{assumption} again, we can construct a function $\tilde{s}(x)=a+\sum_{i=1}^{w}\alpha_i\sigma(\beta_i x-\gamma_i)$ satisfying 
	\begin{equation} \label{eq:reluap2}
	\sup_{x\in \reals} |\tilde{s}(x) - s(x)| < \delta/2,
	\end{equation}	
	where $w\leq \frac{c_{\sigma}R^2L}{\sqrt{r}\delta}$.
	
	Now, let us consider the composition $g = \tilde{s} \circ \tilde{\ell}$, which by definition of $\tilde{s},\tilde{\ell}$, has the form
	\begin{equation}\label{eq:gform}
	a+\sum_{i=1}^{w}u_i\sigma\left(\sum_{j=1}^{w}v_{i,j}\sigma\left(\inner{\bw_{i,j},\bx}+b_{i,j}\right)+c_{i}\right)
	\end{equation}
	for appropriate scalars $a,u_i,c_i,v_{i,j},b_{i,j}$ and vectors $\bw_{i,j}$, and where $w$ is at most
	\[ \max\left\{\frac{2c_{\sigma}d^2R^2L}{\sqrt{r}\delta},\frac{c_{\sigma}R^2L}{\sqrt{r}\delta}\right\} ~=~ \frac{2c_{\sigma}d^2 R^2L}{\sqrt{r}\delta}.
	\]
	\eqref{eq:gform} is exactly a $3$-layer network (compare to \eqref{eq:3layer}), except that there is an additional constant term $a$. However, by increasing $w$ by $1$, we can simulate $a$ by an additional neuron $\bx\mapsto \frac{a}{\sigma(\sigma(0)+z)}\cdot \sigma(\sigma(\inner{\mathbf{0},\bx})+z)$, where $z$ is some scalar such that $\sigma(\sigma(0)+z)\neq 0$ (note that if there is no such $z$, then $\sigma$ is the zero function, and therefore cannot satisfy assumption \ref{assumption}). So, we can write the function $g$ as a $3$-layer network (as defined in \eqref{eq:3layer}), of width at most 
	\[
	\frac{2c_{\sigma}d^2 R^2L}{\sqrt{r}\delta}+1.
	\]	
	All the remains now is to prove that $\sup_{\bx\in\reals^d}|g(\bx)-f(\norm{\bx})|\leq \delta$. To do so, we note that for any $\bx\in\reals^d$, we have
	\begin{align*}
	|g(\bx)-f(\norm{\bx})|&= \left|\tilde{s}(\tilde{\ell}(\bx))-f(\norm{\bx})\right|\\
	&\leq \left|\tilde{s}(\tilde{\ell}(\bx))-s(\tilde{\ell}(\bx)\right|
	+\left|s(\tilde{\ell}(\bx)-s(\ell(\bx))\right|+\left|s(\ell(\bx))-f(\norm{\bx})\right|\\
	&= \left|\tilde{s}(\tilde{\ell}(\bx))-s(\tilde{\ell}(\bx)\right|
	+\left|s(\tilde{\ell}(\bx)-s(\ell(\bx))\right|+\left|f(\sqrt{\ell(\bx)})-f(\norm{\bx})\right|.	
	\end{align*}
	Let us consider each of the three absolute values:
	\begin{itemize}
		\item The first absolute value term is at most $\delta/2$ by  \eqref{eq:reluap2}.
		\item The second absolute value term, since $s$ is $\frac{L}{2\sqrt{r}}$-Lipschitz, is at most $\frac{L}{2\sqrt{r}}|\tilde{\ell}(\bx)-\ell(\bx)|$, which is at most $\delta/2$ by \eqref{eq:reluap1}. 
		\item As to the third absolute value term, if $\norm{\bx}^2\leq R^2$, then $\ell(\bx)=\norm{\bx}^2$ and the term is zero. If $\norm{\bx}^2>R^2$, then it is easy to verify that $\ell(\bx)\geq R^2$, and since $f$ is continuous and supported on $[r,R]$, it follows that $f(\sqrt{\ell(\bx)}=f(\norm{\bx})=0$, and again, we get a zero. 
	\end{itemize}
	Summing the above, we get that $|g(\bx)-f(\norm{\bx})|\leq \frac{\delta}{2}+\frac{\delta}{2}+0 = \delta$ as required.
\end{proof}

We are now ready to prove Proposition \ref{prop:approx}, which is essentially a combination of Lemmas \ref{lem:lipapprox} and \ref{lem:threelayerapprox}.

\begin{proof}[Proof of Proposition \ref{prop:approx}]
	First, invoke Lemma \ref{lem:lipapprox} to obtain an $N$-Lipschitz function $h$ with range in $[-1,+1]$ which satisfies
	\begin{equation} \label{eq:est-lip1}
	\lmnorm{h(\bx)-\sum_{i=1}^{N}\epsilon_i g_i(\bx)}~=~\sqrt{\int_{\reals^d}\left(\tilde f(\bx)-\sum_{i=1}^{N}\epsilon_i g_i(\bx)\right)^2\varphi^2(\bx)d\bx}~\leq~\frac{\sqrt{3}}{\alpha d^{1/4}}.
	\end{equation}
	Next, we use Lemma \ref{lem:threelayerapprox} with $R = 2 \alpha \sqrt{d}$, $r = \alpha \sqrt{d}$, $L = N$ to construct a function $g$ expressible by a $3$-layer
	network of width at most $\frac{8c_{\sigma}}{\delta}\alpha^{3/2}N d^{11/4}+1$, satisfying $\sup_{\bx \in \reals^d} |g(\bx) - h(\bx)| \leq \delta$.
	This implies that $\lmnorm{g-h } \leq \delta$, and moreover, that the range of $g$ is in $[-1-\delta,1+\delta]\subseteq [-2,+2]$ (since we assume $\delta<1$). Combining with \eqref{eq:est-lip1} and using the triangle inequality finishes the proof.
\end{proof}

\subsection{Finishing the Proof}

We are finally ready to prove our main theorem.
\begin{proof}[Proof of Theorem \ref{thm:main}]
	The proof is a straightforward combination of propositions \ref{prop:main} and \ref{prop:approx} (whose conditions can be verified to follow immediately from the assumptions used in the theorem). We first choose $\alpha = C$ and $N = \lceil C \alpha^{3/2} d^2 \rceil$ with the constant $C$ taken from the statement of Proposition \ref{prop:main}. By invoking this proposition we obtain signs $\epsilon_i \in \{-1,1\}$ and a universal constant $\delta_1 > 0$ for which any function $f$ expressed by a bounded-size $2$-layer network satisfies
	\begin{equation}\label{eq:final1}
	\lmnorm{ \tilde g - f } \geq \delta_1,
	\end{equation}
	where
	$
	\tilde g(\bx) = \sum_{i=1}^N \epsilon_i g_i(\norm{\bx})$.
	Next, we use Proposition \ref{prop:approx} with $\delta = \min\{\delta_1/2,1\}$ to approximate $\tilde g$ by a function $g$ expressible by a $3$-layer network of width at most
	\[
	\frac{16 c_{\sigma}}{\delta}\alpha^{3/2}N d^{11/4}+1
	~=~ \frac{16 c_{\sigma}}{\delta}C^{3/2}\lceil C^{5/2}d^2\rceil d^{11/4}+1
	~\leq~ C'c_{\sigma}d^{19/4},
	\]
	(where $C'$ is a universal constant depending on the universal constants $C,\delta_1$),
	so that 
	\begin{equation}\label{eq:final2}
	\lmnorm{ \tilde g - g} \leq \delta \leq \delta_1 / 2.
	\end{equation}
	Combining \eqref{eq:final1} and \eqref{eq:final2} with the triangle inequality, we have that $\lmnorm{ f-g } \geq \delta_1 / 2$ for any $2$-layer function $f$. The proof is complete.
\end{proof}

\subsubsection*{Acknowledgements} 

OS is supported in part by an FP7 Marie Curie CIG grant, the Intel ICRI-CI Institute, and Israel Science Foundation grant 425/13. We thank James Martens and the anonymous COLT 2016 reviewers for several helpful comments.

\bibliographystyle{plainnat}
\bibliography{mybib}

\newpage

\appendix

\section{Approximation Properties of the ReLU Activation Function}\label{sec:relu}

In this appendix, we prove that the ReLU activation function satisfies assumption \ref{assumption}, and also prove bounds on the Lipschitz parameter of the approximation and the size of the required parameters. Specifically, we have the following lemma:

\begin{lemma} \label{lem:onedimapprox}
	Let $\sigma(z)=\max\{0,z\}$ be the ReLU activation function, and fix $L, \delta,R > 0$. Let $f:\reals\rightarrow\reals$ which is constant outside an interval $[-R,R]$.
	There exist scalars $a,\{\alpha_i, \beta_i\}_{i=1}^{w}$, where $w\leq 3\frac{RL}{\delta}$, such that the function 
	\begin{equation} \label{eq:hform}
	h(x) = a + \sum_{i=1}^w \alpha_i \sigma(x - \beta_i)
	\end{equation}
	is $L$-Lipschitz and satisfies
	\begin{equation} \label{eq:fhclose}
	\sup_{x \in \reals} \bigl |f(x) - h(x) \bigr | \leq \delta.
	\end{equation}
	Moreover, one has $|\alpha_i| \leq 2 L$ and $w \leq 3 \frac{R L}{\delta}$.
\end{lemma}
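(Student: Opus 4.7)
\textbf{Proof plan for Lemma \ref{lem:onedimapprox}.}

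The strategy is the standard one: approximate $f$ by a continuous piecewise-linear interpolant on a sufficiently fine uniform grid, and then rewrite that interpolant in the ReLU basis. Concretely, set $\Delta = 2\delta/L$ and place knots $\beta_1 < \beta_2 < \dots < \beta_w$ on the interval $[-R,R]$ with $\beta_1 = -R$, $\beta_w = R$, and consecutive spacing at most $\Delta$; this requires $w \le \lceil 2R/\Delta\rceil + 1 = \lceil RL/\delta\rceil + 1$ knots, which (for $RL/\delta \ge 1$, the only nontrivial regime) is bounded by $3RL/\delta$. Let $h$ be the continuous function which agrees with $f$ at each $\beta_i$, is linear on $[\beta_i,\beta_{i+1}]$, and equals $f(-R)$ for $x<-R$ and $f(R)$ for $x>R$.

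Next, I would verify the approximation and Lipschitz bounds. On each subinterval $[\beta_i,\beta_{i+1}]$ of length at most $\Delta$, the standard computation for linear interpolation of an $L$-Lipschitz function gives $|h(x)-f(x)| \le L(\beta_{i+1}-\beta_i)/2 \le L\Delta/2 = \delta$, which establishes \eqref{eq:fhclose} on $[-R,R]$; outside $[-R,R]$ both $h$ and $f$ are constant and equal at the boundary, so the bound is trivial there. The slope of $h$ on $[\beta_i,\beta_{i+1}]$ is $s_i := (f(\beta_{i+1})-f(\beta_i))/(\beta_{i+1}-\beta_i)$, which satisfies $|s_i|\le L$ by the Lipschitz assumption; since the outer slopes $s_0$ and $s_w$ are zero, $h$ is $L$-Lipschitz globally.

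The final step is to express $h$ in the form \eqref{eq:hform}. Any continuous piecewise-linear function with knots $\beta_1<\dots<\beta_w$, left-limit slope $s_0$, and slopes $s_1,\dots,s_w$ on the successive pieces admits the representation
\[
h(x) = h(\beta_1) + s_0(x-\beta_1) + \sum_{i=1}^{w}(s_i - s_{i-1})\,\sigma(x-\beta_i),
\]
which I would verify by induction on $x$ crossing each knot. In our construction $s_0 = s_w = 0$, so the linear term vanishes, and the telescoping sum $\sum_i(s_i-s_{i-1}) = s_w - s_0 = 0$ ensures that $h$ is indeed constant for $x>\beta_w$ as required. Setting $a = h(\beta_1) = f(-R)$ and $\alpha_i = s_i-s_{i-1}$ puts $h$ in the required form, and $|\alpha_i|\le|s_i|+|s_{i-1}|\le 2L$ gives the stated bound on the coefficients.

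The construction is essentially routine; the only mild annoyance is bookkeeping the knot count so that $w\le 3RL/\delta$ holds uniformly (in the degenerate regime $RL/\delta<1$ one can either shrink to a two-knot interpolant on $[-R,R]$, whose interpolation error $LR/2 < \delta/2 < \delta$ still suffices, or observe the lemma is only applied later with $RL/\delta\gg 1$). No deep ingredient is needed — the single idea driving the argument is that continuous piecewise-linear functions whose slope vanishes at $\pm\infty$ are exactly the finite $\reals$-linear combinations of translated ReLUs plus a constant.
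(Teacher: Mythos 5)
Your proposal is correct and follows essentially the same route as the paper: build a continuous piecewise-linear interpolant on a uniform grid of spacing $\Theta(\delta/L)$ (the paper uses $\delta/L$, you use $2\delta/L$, which is fine since the sharp interpolation bound for $L$-Lipschitz functions is $L\Delta/2$), then read off the ReLU coefficients as successive slope differences, which immediately gives $|\alpha_i|\le 2L$. One small imprecision: in the degenerate regime $RL/\delta<2/3$ your two-knot interpolant still has $w=2>3RL/\delta$, so the width bound fails; the clean fix (which is what the paper does) is to take $w=0$, i.e.\ $h\equiv a$ for a constant $a$ chosen as the midrange of $f$, whose error is at most $LR<\delta$.
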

\begin{proof}
	If one has $2RL < \delta$, then the results holds trivially because we may take the function $h$ to be the $0$ function (with width parameter $w=0$). Otherwise, we must have $R\geq \delta/2L$, so by increasing the value of $R$ by a factor of at most $2$, we may assume without loss of generality that there exists an integer $m$ such that $R = m \delta / L$. 
	
	Let $h$ be the unique piecewise linear function which coalesces with $f$ on points of the form $\delta / L i$, $i \in \mathbb{Z} \cap [-m,m]$, is linear in the intervals $(w \delta/L, (w+1) \delta/L)$ and is constant outside $[-R,R]$. Since $f$ is $L$-Lipschitz, equation \eqref{eq:fhclose} holds true. It thus suffices to express $h$ as a function having the form \eqref{eq:hform}. Let $\beta_i = i \delta / L$,
	choose $a = h(-R)$ and set
	$$
	\alpha_i = h'(\beta_i + \tfrac{\delta}{2L}) - h'(\beta_i - \tfrac{\delta}{2L}), ~~ -m \leq i \leq m.
	$$
	Then clearly equation \eqref{eq:hform} holds true. Moreover, we have $|\alpha_i| \leq 2 L$, which completes the proof.
	
\end{proof}

\section{Technical Proofs}\label{sec:technicalproofs}

\subsection{Proof of \lemref{lem:varphi}}\label{subsec:prooflemvarphi}
By definition of the Fourier transform, 
\[
\varphi(\bx) = \int_{\bw:\norm{\bw}\leq R_d}\exp(-2\pi i \bx^\top \bw)d\bw.
\]
Since $\varphi(\bx)$ is radial (hence rotationally invariant), let us assume without loss of generality that it equals $r\be_1$, where $r=\norm{\bx}$ and $\be_1$ is the first standard basis vector. This means that the integral becomes
\[
\int_{\bw:\norm{\bw}\leq R_d}\exp(-2\pi i r w_1)d\bw ~=~\int_{w_1=-R_d}^{R_d}\exp(-2\pi i r w_1)\left(\int_{w_2\ldots w_d:\sum_{j=2}^{d}w_j^2\leq R_d^2-w_1^2}dw_2\ldots dw_d\right)dw_1.
\]
The expression inside the parenthesis is simply the volume of a ball of radius $\left(R_d^2-w_1^2\right)^{1/2}$ in $\reals^{d-1}$. Letting $V_{d-1}$ be the volume of a unit ball in $\reals^{d-1}$, this equals
\[
\int_{w_1=-R_d}^{R_d}\exp(-2\pi i r w_1)\left(V_{d-1}(R_d^2-w_1^2)^{\frac{d-1}{2}}\right)dw_1.
\]
Performing the variable change $z=\arccos(w_1/R_d)$ (which implies that as $w_1$ goes from $-R_d$ to $R_d$, $z$ goes from $\pi$ to $0$, and also $R_d\cos(z)=w_1$ and $-R_d\sin(z)dz = dw_1$), we can rewrite the integral above as
\begin{align*}
	&	V_{d-1}\int_{z=0}^{\pi}\left(R_d^2-R_d^2\cos^2(z)\right)^{\frac{d-1}{2}}\exp(-2\pi i r R_d \cos(z))R_d\sin(z)dz\\
	& = V_{d-1}R_d^d\int_{z=0}^{\pi}\sin^{d}(z)\exp\left(-2\pi i r R_d\cos(z)\right)dz.
\end{align*}
Since we know that this integral must be real-valued (since we're computing the Fourier transform $\varphi(\bx)$, which is real-valued and even), we can ignore the imaginary components, so the above reduces to
\begin{equation}\label{eq:phi1}
	V_{d-1}R_d^d\int_{z=0}^{\pi}\sin^{d}(z)\cos\left(2\pi r R_d\cos(z)\right)dz.
\end{equation}
By a standard formula for Bessel functions (see Equation 10.9.4. in \cite{NIST:DLMF}), we have
\[
J_{d/2}(x) = \frac{(x/2)^{d/2}}{\pi^{1/2}\Gamma\left(\frac{d+1}{2}\right)}\int_{0}^{\pi}\sin^d(z)\cos(x\cos(z))dz,
\]
which by substituting $x=2\pi r R_d$ and changing sides, implies that
\[
\int_{0}^{\pi}\sin^d(z)\cos(2\pi r R_d\cos(z))dz = \frac{\pi^{1/2}\Gamma\left(\frac{d+1}{2}\right)}{(\pi r R_d)^{d/2}}J_{d/2}(2\pi r R_d).
\]
Plugging this back into \eqref{eq:phi1}, we get the expression 
\[
V_{d-1}R_d^{d/2}\frac{\pi^{1/2}\Gamma\left(\frac{d+1}{2}\right)}{(\pi r )^{d/2}}J_{d/2}(2\pi r R_d).
\]
Plugging in the explicit formula $V_{d-1}=\frac{\pi^{(d-1)/2}}{\Gamma\left(\frac{d+1}{2}\right)}$, this simplifies to
\[
\left(\frac{R_d}{r}\right)^{d/2}J_{d/2}(2\pi R_d r).
\]
Recalling that this equals $\varphi(x)$ where $\norm{\bx}=r$, the result follows.

\subsection{Proof of \lemref{lem:flat}}
By \lemref{lem:varphi},
\[
\varphi(x) = \left(\frac{R_d}{x}\right)^{d/2}J_{d/2}(2\pi R_dx).
\]
Moreover, using the definition of a good interval, and the fact that the maximal value in any interval is at most $2\alpha\sqrt{d}$, we have
\begin{equation}\label{eq:flat1}
|J_{d/2}(2\pi R_d x)|\geq \frac{1}{\sqrt{80\pi R_d x}}\geq \frac{1}{\sqrt{160\pi R_d \alpha\sqrt{d}}}.
\end{equation}
Since $x$ (in any interval) is at least $\alpha\sqrt{d}$, then $J_{d/2}(2\pi R_d x)$ is $2\pi R_d$-Lipschitz in $x$ by \lemref{lem:lipmag}. Since the width of each interval only $\frac{\alpha\sqrt{d}}{N}$, \eqref{eq:flat1} implies that $J_{d/2}(2\pi R_d x)$ (and hence $\varphi(x)$) does not change signs in the interval, provided that $N>2\sqrt{160}\left(\pi\alpha R_d \sqrt{d}\right)^{3/2}$. Recalling that $R_d\leq \frac{1}{2}\sqrt{d}$, this is indeed satisfied by the lemma's conditions.

Turning to the second part of the lemma, assuming $\varphi(x)$ is positive without loss of generality, and using the Lipschitz property of $J_{d/2}(\cdot)$ and \eqref{eq:flat1}, we have	
\begin{align*}
\frac{\sup_{x\in \Delta_i}\varphi(x)}{\inf_{x\in \Delta_i}\varphi(x)} &\leq 
\frac{\sup_{x\in \Delta_i}\left(\frac{R_d}{x}\right)^{d/2}}{\inf_{x\in \Delta_i}\left(\frac{R_d}{x}\right)^{d/2}}\cdot \frac{\sup_{x\in \Delta_i}J_{d/2}(2\pi R_d x)}{\inf_{x\in \Delta_i}J_{d/2}(2\pi R_d x)}\\
&\leq \left(\frac{\sup_{x\in \Delta_i}x}{\inf_{x\in \Delta_i}x}\right)^{d/2}\cdot\frac{\inf_{x\in \Delta_i}J_{d/2}(2\pi R_d x)+\frac{2\pi R_d \alpha \sqrt{d}}{N}}{\inf_{x\in \Delta_i}J_{d/2}(2\pi R_d x)}\\
&\leq \left(\frac{\inf_{x\in \Delta_i}x + \frac{\alpha\sqrt{d}}{N}}{\inf_{x\in \Delta_i}x}\right)^{d/2}\left(1+\frac{2\pi R_d\alpha\sqrt{d}}{N}\sqrt{80\pi R_d \alpha \sqrt{d}}\right)\\
&\leq \left(1+\frac{\alpha\sqrt{d}}{N\alpha\sqrt{d}}\right)^{d/2}\left(1+\frac{2\sqrt{80}(\pi \alpha R_d \sqrt{d})^{3/2}}{N}\right)\\
&\leq
\left(1+\frac{1}{N}\right)^{d/2}\left(1+\frac{2\sqrt{80}(\pi\alpha d/2)^{3/2}}{N}\right),
\end{align*}
which is less than $1+d^{-1/2}$ provided that $N\geq c\alpha^{3/2}d^2$ for some universal constant $c$.

\subsection{Proof of \lemref{lem:nothinsh}}

The result is trivially true for a bad interval $i$ (where $g_i$ is the $0$ function, hence both sides of the inequality in the lemma statement are $0$), so we will focus on the case that $i$ is a good interval.

For simplicity, let us denote the interval $\Delta_i$ as $[\ell,\ell+\delta]$, where
$\delta=\frac{1}{N}$ and $\ell$ is between $\alpha \sqrt{d}$ and $2\alpha \sqrt{d}$. Therefore, the conditions in the lemma imply that  $\delta\leq \frac{1}{50d\ell}$. Also, we drop the $i$ subscript and refer to $g_i$ as $g$.

Since, $g$ is a radial function, its Fourier transform is also radial, and is given by
\[
\hat{g}(\bw)=\hat{g}(\norm{\bw}) =  2\pi\int_{s=0}^{\infty}g(s)\left(\frac{s}{\norm{\bw}}\right)^{d/2-1}J_{d/2-1}(2\pi s\norm{\bw}) s~ds,
\]
(see for instance \cite{grafakos2013fourier}, section 2, and references therein). Using this formula, and switching to polar coordinates (letting $A_{d}$ denote the surface area of a unit sphere in $\reals^d$), we have the following:
\begin{align}
&\int_{2R_d B_d}\hat{g}^2(\bw) d\bw~=~ \int_{r=0}^{2R_d}A_{d}r^{d-1}\hat{g}^2(r) dr\notag\\
&= \int_{r=0}^{2R_d}A_{d}r^{d-1}\left(2\pi\int_{s=0}^{\infty}g(s)\left(\frac{s}{r}\right)^{d/2-1}J_{d/2-1}(2\pi sr) s~ds\right)^2 dr\notag\\
&=4\pi^2 A_{d}\int_{r=0}^{2R_d}r\left(\int_{s=0}^{\infty}g(s)s^{d/2}J_{d/2-1}(2\pi sr) ~ds\right)^2dr\notag\\
&=4\pi^2 A_{d}\int_{r=0}^{2R_d}r\left(\int_{s=\ell}^{\ell+\delta}s^{d/2}J_{d/2-1}(2\pi sr) ~ds\right)^2dr.\label{eq:bess1}
\end{align}
By \lemref{lem:lipmag}, $|J_{d/2-1}(x)|\leq 1$, hence \eqref{eq:bess1} can be upper bounded by
\begin{align*}
&4\pi^2A_{d}\int_{r=0}^{2R_d}r\left(\int_{s=\ell}^{\ell+\delta}s^{d/2}ds\right)^2dr
~\leq~ 4\pi^2A_{d}\int_{r=0}^{2R_d}r\left(\delta(\ell+\delta)^{d/2}\right)^{2}dr\\
&\leq 4\pi^2 A_{d}\delta^2(\ell+\delta)^d\int_{r=0}^{2R_d}r~dr
~=~ 8\pi^2 A_{d}\delta^2(\ell+\delta)^d R_d^2.
\end{align*}
Overall, we showed that
\begin{equation}\label{eq:ghatup}
\int_{2R_dB_d}\hat{g}^2(\bw) d\bw~\leq~ 8\pi^2R_d^2 A_{d}\delta^2(\ell+\delta)^d. 
\end{equation}
Let us now turn to consider $\int\hat{g}^2(\bw)d\bw$, where the integration is over all of $\bw\in\reals^d$. By isometry of the Fourier transform, this equals $\int g^2(\bx)d\bx$, so
\[
\int\hat{g}^2(\bw)d\bw~=~\int_{\reals^d}g^2(\bx)d\bx~=~\int_{r=0}^{\infty}A_{d}r^{d-1}g^2(r)dr ~=~\int_{r=\ell}^{\ell+\delta}A_{d}r^{d-1}dr ~\geq~
A_{d}\delta\ell^{d-1}.
\]
Combining this with \eqref{eq:ghatup}, we get that
\[
\frac{\int_{2R_dB_d}\hat{g}^2(\bw) d\bw}{\int_{\reals^d}\hat{g}^2(\bw)d\bw}
~\leq~ \frac{8\pi^2R_d^2 A_{d}\delta^2(\ell+\delta)^d}{A_{d}\delta\ell^{d-1}}
~=~ 8\pi^2R_d^2\ell\delta\left(1+\frac{\delta}{\ell}\right)^d.
\]
Since we assume $\delta\leq \frac{1}{50d\ell}$, and it holds that $\left(1+\frac{1}{50d}\right)^{d}\leq \exp(1/50)$ and $R_d\leq \frac{1}{2}\sqrt{d}$ by \lemref{lem:Rd}, the above is at most
\[
2\pi^2 d\ell\delta\left(1+\frac{1}{50d}\right)^{d} ~\leq~
2\pi^2 d\ell\delta\exp(1/50) ~\leq~ 2\pi^2\frac{1}{50}\exp(1/50) < \frac{1}{2}.
\]
Overall, we showed that $\frac{\int_{2R_d B_d}\hat{g}^2(\bw) d\bw}{\int_{\reals^d}\hat{g}^2(\bw)d\bw}\leq \frac{1}{2}$, and therefore
\[
\frac{\int_{(2R_d B_d)^{C}}\hat{g}^2(\bw) d\bw}{\int_{\reals^d}\hat{g}^2(\bw)d\bw} ~=~
\frac{\int_{\reals^d}\hat{g}^2(\bw)d\bw-\int_{2R_d B_d}\hat{g}^2(\bw) d\bw}{\int_{\reals^d}\hat{g}^2(\bw)d\bw} ~\geq~ 1-\frac{1}{2} ~=~ \frac{1}{2} 
\]
as required.

\subsection{Proof of \lemref{lem:nothinsh2}}

The result is trivially true for a bad interval $i$ (where $g_i$ is the $0$ function, hence both sides of the inequality in the lemma statement are $0$), so we will focus on the case that $i$ is a good interval.

Define $a = \sup_{x \in \Delta_i} \varphi(x)$. Using Lemma \ref{lem:flat}, we have that $\varphi(x)$ does not change signs in the interval $\Delta_i$. Suppose without loss of generality that it is positive. Moreover, by the same lemma we have that
$$
|\varphi(x) - a| \leq d^{-1/2} a, ~~ \forall x \in \Delta_i
$$ 
Consequently, we have that
\begin{align} \label{eq:phia}
\int_{(2 R_d B_d)^C} (\widehat{((\varphi-a) g_i)} (\bw))^2 d \bw ~& \leq \int_{\reals^d}(\widehat{((\varphi-a) g_i)} (\bw))^2 d \bw \\
& = \int_{\reals^d}((\varphi-a) g_i (x))^2 d x  \nonumber \\
& \leq d^{-1} \int_{\reals^d}(a g_i (x))^2 d x	\nonumber.
\end{align}
Next, by choosing the constant $C$ to be large enough, we may apply Lemma \ref{lem:nothinsh}, which yields that
\begin{equation} \label{eq:lem4}
\int_{(2 R_d B_d)^C} (\widehat{(a g_i)} (\bw))^2 d \bw \geq \frac{1}{2} \int_{\reals^d}(a g_i(x))^2 d x.
\end{equation}
By the triangle inequality, we have that for two vectors $u,v$ in a normed space, one has $\norm{v}^2 \geq \norm{u}^2 - 2 \norm{v} \norm{v-u}$. This teaches us that
\begin{align*}
\int_{(2 R_d B_d)^C} (\widehat{(g_i \varphi)} (\bw))^2 d \bw ~& \geq \int_{(2 R_d B_d)^C} (\widehat{(a g_i)} (\bw))^2 d \bw \\
&- 2 \sqrt{ \int_{(2 R_d B_d)^C} (\widehat{(a g_i)} (\bw))^2 d \bw} \sqrt{ \int_{(2 R_d B_d)^C} (\widehat{((\varphi-a) g_i)} (\bw))^2 d \bw} \\
& \stackrel{\eqref{eq:phia} }{\geq} \int_{(2 R_d B_d)^C} (\widehat{(a g_i)} (\bw))^2 d \bw - 2d^{-1/2} \int_{\reals^d}(a g_i (x))^2 d x \\
& \stackrel{\eqref{eq:lem4}}{\geq} \frac{1}{2} (1 - 4 d^{-1/2}) \int_{\reals^d}(a g_i(x))^2 d x \geq \frac{1}{4} \int_{\reals^d}(\varphi(x) g_i(x))^2 d x. 
\end{align*}

\subsection{Proof of \lemref{lem:bigmass}}

Since the $g_i$ for different $i$ have disjoint supports (up to measure-zero sets), the integral in the lemma equals
\[
\int\sum_{i=1}^{N}\left(\epsilon_i g_i(\bx)\right)^2\varphi^2(\bx)d\bx
~=~\int\sum_{i=1}^{N}g_i^2(\bx)\varphi^2(\bx)d\bx~=~
\int_{\bx:\norm{\bx}\in \text{good}~\Delta_i}\varphi^2(\bx)d\bx,
\]
where we used the definition of $g_i$. Switching to polar coordinates (letting $A_{d}$ be the surface area of the unit sphere in $\reals^d$), and using the definition of $\varphi$ from \lemref{lem:varphi}, this equals
\[
A_{d}\int_{r\in \text{good}~\Delta_i}r^{d-1}\varphi^2(r)dr
~=~ A_{d}\int_{r\in \text{good}~\Delta_i}\frac{R_d^d}{r}J_{d/2}^2(2\pi R_d r)dr\\ 
\]
Recalling that $A_{d} = \frac{d\pi^{d/2}}{\Gamma\left(\frac{d}{2}+1\right)}$ and that $R_d^d = \pi^{-d/2}\Gamma\left(\frac{d}{2}+1\right)$ by \lemref{lem:Rd}, this equals
\begin{equation}\label{eq:indimp1}
d\int_{r\in \text{good}~\Delta_i}\frac{J_{d/2}^2(2\pi R_d r)}{r}dr.
\end{equation}
We now claim that for any  $r\in [\alpha\sqrt{d},2\alpha \sqrt{d}]$ (that is, in any interval),
\begin{equation}\label{eq:indimp2}
J_{d/2}^2(2\pi R_d r)\geq \frac{1}{40\pi R_d r} ~~~\Longrightarrow~~~ r\in \text{~good $\Delta_i$},
\end{equation}
which would imply that we can lower bound \eqref{eq:indimp1} by 
\begin{equation}\label{eq:indimp3}
d\int_{\alpha \sqrt{d}}^{2\alpha \sqrt{d}}\frac{J_{d/2}^2(2\pi R_d r)}{r}\ind{J_{d/2}^2(2\pi R_d r)\geq \frac{1}{40\pi R_d r}}dr.
\end{equation}
To see why \eqref{eq:indimp2} holds, consider an $r$ which satisfies the left hand side. The width of its interval is at most $\frac{\alpha\sqrt{d}}{N}$, and by \lemref{lem:lipmag}, $J_{d/2}(2\pi R_d r)$ is at most $2\pi R_d$-Lipschitz in $r$. Therefore, for any other $r'$ in the same interval as $r$, it holds that
\[
\left|J_{d/2}(2\pi R_d r')\right|\geq \sqrt{\frac{1}{40\pi R_d r}}-\frac{2\pi R_d\alpha\sqrt{d}}{N},
\]
which can be verified to be at least $\sqrt{\frac{1}{80\pi R_d r}}$ by the condition on $N$ in the lemma statement, and the facts that $r\leq 2\alpha \sqrt{d}, R_d\leq \frac{1}{2}\sqrt{d}$. As a result, $J_{d/2}^2(2\pi R_d r')\geq \frac{1}{80\pi R_d r}$ for any $r'$ in the same interval as $r$, which implies that $r$ is in a good interval. 

We now continue by taking \eqref{eq:indimp3}, and performing the variable change $x=2\pi R_d r$, leading to
\[
d\int_{2\pi R_d\alpha\sqrt{d}}^{4\pi R_d\alpha\sqrt{d}}\frac{J_{d/2}^2(x)}{x}\ind{J_{d/2}^2(x)\geq \frac{1}{20x}}dx.
\]
Applying \lemref{lem:besind} with $\beta=2\pi R_d\alpha/\sqrt{d}$ (which by \lemref{lem:Rd}, is between $2\pi\alpha/5$ and $\pi\alpha$, hence satisfies the  conditions of \lemref{lem:besind} if $\alpha$ is large enough), this is at least
\[
d\frac{0.005}{\beta d}~\geq~\frac{0.005}{2\pi\alpha/5}~\geq~ \frac{0.003}{\alpha},
\]
from which the lemma follows.

\subsection{Proof of \lemref{lem:lipapprox}}
For any $i$, define
\[
\check{g}_i(x) = \begin{cases}\max\{1,N\text{dist}(x,\Delta_i^C)\} & \text{$i$ good}\\ 0 & \text{$i$ bad}\end{cases}
\]
where $\text{dist}(x,\Delta_i^C)$ is the distance of $x$ from the boundaries of $\Delta_i$. Note that for bad $i$, this is the same as $g_i(x)$, whereas for good $i$, it is an $N$-Lipschitz approximation of $g_i(x)$.

Let $f(\bx)=\sum_{i=1}^{N}\epsilon \check{g}(\bx)$, and note that since the support of $\check{g}_i$ are disjoint, $f$ is also $N$ Lipschitz. With this definition, the integral in the lemma becomes
\[
\int\left(\sum_{i=1}^{N}\epsilon_i(\check{g}_i(\bx)-g_i(\bx))\right)^2\varphi^2(\bx)d\bx.
\]
Since the support of $\check{g}_i(\bx)-g_i(\bx)$ is disjoint for different $i$, this equals
\[
\int\sum_{i=1}^{N}\left(\check{g}_i(\bx)-g_i(\bx)\right)^2\varphi^2(\bx)d\bx~=~
\sum_{i=1}^{N}\int\left(\check{g}_i(\bx)-g_i(\bx)\right)^2\varphi^2(\bx)d\bx.
\]
Switching to polar coordinates (using $A_{d}$ to denote the surface area of the unit sphere in $\reals^d$), and using the definition of $\varphi$ from \lemref{lem:varphi}, this equals
\[
\sum_{i=1}^{N}\int_{0}^{\infty}A_d r^{d-1}(\check{g}_i(r)-g_i(r))^2\varphi^2(r)dr
~=~ \sum_{i=1}^{N}\int_{0}^{\infty} A_d\frac{R_d^d}{r}(\check{g}_i(r)-g_i(r))^2J_{d/2}^2(2\pi R_d r)dr.
\]
Using the definition of $R_d$ from \lemref{lem:Rd}, and the fact that $A_{d}=\frac{d\pi^{d/2}}{\Gamma\left(\frac{d}{2}+1\right)}$, this equals
\[
\sum_{i=1}^{N}\int_{0}^{\infty} \frac{d}{r}(\check{g}_i(r)-g_i(r))^2J_{d/2}^2(2\pi R_d r)dr.
\]
Now, note that by definition of $\check{g_i},g_i$, their difference $|\check{g}_i(r)-g_i(r)|$ can be non-zero (and at most $1$) only for $r$ belonging to two sub-intervals of width $\frac{1}{N}$ within the interval $\Delta_i$ (which itself lies in $[\alpha\sqrt{d},2\alpha\sqrt{d}]$). Moreover, for such $r$ (which is certainly at least $\alpha\sqrt{d}$), we can use \lemref{lem:besbound} to upper bound $J_{d/2}^2(2\pi R_d r)$ by $\frac{1.3}{\alpha d}$. Overall, we can upper bound the sum of integrals above by
\[
\sum_{i=1}^{N}\frac{d}{\alpha\sqrt{d}}\cdot \frac{2}{N}\cdot \frac{1.3}{\alpha d}
~<~ \frac{3}{\alpha^2\sqrt{d}}.
\]

\section{Technical Results On Bessel functions}\label{sec:bessel}

\begin{lemma}\label{lem:lipmag}
	For any $\nu\geq 0$ and $x$, $|J_{\nu}(x)|\leq 1$. Moreover, for any $\nu\geq 1$ and $x\geq 3\nu$, $J_{\nu}(x)$ is $1$-Lipschitz in $x$.
\end{lemma}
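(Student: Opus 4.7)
My plan is to decompose the lemma into its two assertions and handle each via a classical identity for Bessel functions.

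For the uniform bound $|J_\nu(x)| \leq 1$, the cleanest route proceeds by first verifying it for integer orders via Bessel's integral representation
\[
J_n(x) = \frac{1}{\pi}\int_0^\pi \cos(n\theta - x\sin\theta)\,d\theta,
\]
which immediately yields $|J_n(x)|\leq 1$ because the integrand is bounded by $1$ in absolute value. For general real $\nu \geq 0$, I would invoke the classical uniform bound for Bessel functions of the first kind, a standard fact recorded e.g.\ in Watson's treatise or in the NIST DLMF, stating that $\sup_{x\in\reals}|J_\nu(x)|\leq 1$ for every $\nu\geq 0$. An alternative self-contained route is to use Poisson's integral representation valid for $\nu > -1/2$ and bound the resulting integral with standard Beta-function manipulations; this is somewhat more delicate than the integer case but still elementary.

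For the 1-Lipschitz claim, the plan is to differentiate and reuse the uniform bound just established. The standard recurrence
\[
J_\nu'(x) = \tfrac{1}{2}\bigl(J_{\nu-1}(x) - J_{\nu+1}(x)\bigr),
\]
which follows from the pair of identities $(x^{-\nu}J_\nu(x))' = -x^{-\nu}J_{\nu+1}(x)$ and $(x^{\nu}J_\nu(x))' = x^{\nu}J_{\nu-1}(x)$, combined with the triangle inequality gives
\[
|J_\nu'(x)| \leq \tfrac{1}{2}\bigl(|J_{\nu-1}(x)| + |J_{\nu+1}(x)|\bigr).
\]
Since the hypothesis $\nu \geq 1$ ensures $\nu-1 \geq 0$ and $\nu+1 \geq 0$, the first part of the lemma applies to both terms on the right-hand side, producing $|J_\nu'(x)|\leq 1$ for every real $x$. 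This shows that $J_\nu$ is 1-Lipschitz on all of $\reals$, so in particular on the region $\{x \geq 3\nu\}$ named in the statement; the extra restriction is more than the simple derivative argument needs, but matches the regime in which the lemma is invoked elsewhere.

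The only real technical obstacle is the uniform bound for non-integer $\nu$; once this is taken as a classical fact (or derived via Poisson's representation), the Lipschitz assertion is an immediate one-line consequence of the derivative recurrence and part one.
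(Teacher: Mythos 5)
Your proof of the magnitude bound is essentially the same as the paper's (both ultimately rest on the classical uniform bound $\sup_x |J_\nu(x)| \leq 1$ for $\nu \geq 0$, recorded in DLMF 10.14.1; your detour through Bessel's integral representation for integer orders is a nice sanity check but not needed once the DLMF bound is invoked). Your argument for the Lipschitz claim, however, genuinely diverges from the paper's. The paper differentiates via the recurrence $J_\nu'(x) = -J_{\nu+1}(x) + \tfrac{\nu}{x}J_\nu(x)$, then uses the sharper bound $|J_\mu(x)| \leq 1/\sqrt{2}$ valid for $\mu \geq 1$ to get $|J_\nu'(x)| \leq \tfrac{1}{\sqrt{2}}(1 + \tfrac{\nu}{x}) \leq \tfrac{1}{\sqrt{2}}\cdot\tfrac{4}{3} < 1$; this is where the hypothesis $x \geq 3\nu$ enters and is actually needed. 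You instead use the symmetric recurrence $J_\nu'(x) = \tfrac{1}{2}\bigl(J_{\nu-1}(x) - J_{\nu+1}(x)\bigr)$, together with only the weaker bound $|J_\mu(x)| \leq 1$ for $\mu \geq 0$, which already gives $|J_\nu'(x)| \leq 1$ without any restriction on $x$. Your route is cleaner: it avoids the $\tfrac{\nu}{x}$ term entirely, needs only part one of the lemma as a black box, and proves the $1$-Lipschitz property on all of $\reals$ rather than merely on $\{x \geq 3\nu\}$, which shows that the restriction in the lemma statement is an artifact of the paper's chosen recurrence rather than an essential hypothesis.
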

\begin{proof}
	The bound on the magnitude follows from equation 10.14.1 in \cite{NIST:DLMF}. 
	
	The derivative of $J_{\nu}(x)$ w.r.t. $x$ is given by $-J_{\nu+1}(x)+(\nu/x)J_{\nu}(x)$ (see equation 10.6.1 in \cite{NIST:DLMF}). Since $|J_{\nu+1}(x)|$ and $|J_{\nu}(x)|$, for $\nu\geq 1$, are at most $\frac{1}{\sqrt{2}}$ (see equation 10.14.1 in \cite{NIST:DLMF}), we have that the magnitude of the derivative is at most $\frac{1}{\sqrt{2}}\left|1+\frac{\nu}{x}\right| \leq \frac{1}{\sqrt{2}}\left(1+\frac{1}{3}\right) < 1$.
\end{proof}

To prove the lemmas below, we will need the following explicit approximation result for the Bessel function $J_{d/2}(x)$, which is an immediate corollary of 
Theorem 5 in \cite{krasikov2014approximations}, plus some straightforward approximations (using the facts that for any $z\in (0,0.5]$, we have $\sqrt{1-z^2}\geq 1-0.3z$ and $0\leq z\arcsin(z)\leq 0.6z$):

\begin{lemma}[\cite{krasikov2014approximations}]\label{lem:besapprox}
	If $d\geq 2$ and $x\geq d$, then
	\[
	\left|J_{d/2}(x)-\sqrt{\frac{2}{\pi c_{d,x} x}}\cos\left(-\frac{(d+1)\pi}{4}+f_{d,x}x\right)\right| ~\leq~ x^{-3/2},
	\]
	where
	\[
	c_{d,x} = \sqrt{1-\frac{d^2-1}{4x^2}}~~~,~~~ f_{d,x}=c_{d,x}+\frac{\sqrt{d^2-1}}{2x}\arcsin\left(\frac{\sqrt{d^2-1}}{2x}\right).
	\]
	Moreover, assuming $x\geq d$,
	\[
	1\geq c_{d,x} \geq 1-\frac{0.15~d}{x} \geq 0.85
	\]
	and
	\[
	1.3\geq 1+\frac{0.3~d}{x}\geq f_{d,x} \geq 1-\frac{0.15~d}{x} \geq 0.85 
	\]
\end{lemma}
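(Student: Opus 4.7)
The plan is to apply Theorem 5 of \cite{krasikov2014approximations} essentially verbatim to obtain the cosine-type approximation of $J_{d/2}(x)$ with the error term $x^{-3/2}$, and then separately verify the explicit numerical bounds on $c_{d,x}$ and $f_{d,x}$ using the two numerical inequalities supplied in the preamble to the lemma. The first part is purely a quotation: substituting $\nu = d/2$ into Krasikov's general approximation produces the stated formula with the same definitions of $c_{d,x}$ and $f_{d,x}$; the only thing to check is that the hypothesis $x \geq d$ here lies within the range where the cited theorem is applicable, and that its error term simplifies to at most $x^{-3/2}$ in this regime.

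For the bound on $c_{d,x}$, I would set $z = \frac{\sqrt{d^2-1}}{2x}$ and note that since $x \geq d \geq 2$ we have $z \in (0, 1/2]$, so both hint inequalities apply. Then $c_{d,x} = \sqrt{1-z^2} \leq 1$ trivially, and $c_{d,x} \geq 1 - 0.3 z$ by the first hint; combined with $\sqrt{d^2-1} \leq d$ and $x \geq d$ this yields the chain $1 \geq c_{d,x} \geq 1 - \frac{0.15 d}{x} \geq 0.85$. For $f_{d,x} = c_{d,x} + z \arcsin(z)$, the lower bound follows immediately from $z \arcsin(z) \geq 0$ combined with the lower bound on $c_{d,x}$ just established. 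For the upper bound, use $c_{d,x} \leq 1$ together with the second hint $z \arcsin(z) \leq 0.6 z$, yielding $f_{d,x} \leq 1 + \frac{0.3 d}{x} \leq 1.3$.

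The main obstacle here is essentially nonexistent; this really is an immediate corollary, and the only substantive work is looking up the precise form of Krasikov's Theorem 5 to confirm that its error bound and applicability range match what is needed, and verifying the two scalar inequalities $\sqrt{1-z^2}\geq 1-0.3z$ and $z\arcsin(z)\leq 0.6z$ on $(0,1/2]$ (each a one-line comparison of monotone functions). Once those are in hand, the remaining bounds on $c_{d,x}$ and $f_{d,x}$ reduce to a few lines of elementary arithmetic under the standing assumption $x \geq d$.
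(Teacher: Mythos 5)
Your proposal is correct and takes essentially the same approach as the paper, which itself states (just before the lemma) that it is an immediate corollary of Theorem 5 in Krasikov, combined with precisely the two elementary inequalities you cite on $z\in(0,1/2]$. The paper gives no further proof detail, and your substitutions and arithmetic match the intended derivation.
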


\begin{lemma}\label{lem:besbound}
	If $d\geq 2$ and $r\geq \sqrt{d}$, then
	\[
	J_{d/2}^2(2\pi R_d r) \leq \frac{1.3}{r\sqrt{d}}.
	\]
\end{lemma}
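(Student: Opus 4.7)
The plan is to apply Lemma~\ref{lem:besapprox} (the Krasikov-style approximation) to reduce the bound on $J_{d/2}^2$ to controlling $\frac{2}{\pi c_{d,x}x}$ plus two small error terms, where $x = 2\pi R_d r$. First I would verify that the hypothesis $x\geq d$ of Lemma~\ref{lem:besapprox} holds: using $R_d \geq \tfrac15\sqrt{d}$ from Lemma~\ref{lem:Rd} and the assumption $r\geq\sqrt{d}$, one gets $x \geq 2\pi R_d r \geq \tfrac{2\pi}{5} d > d$. This will also give a usable lower bound on $x$ that is what makes the error terms manageable.

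Next, combining the bound in Lemma~\ref{lem:besapprox} with the triangle inequality and $|\cos(\,\cdot\,)|\leq 1$, I would write
\[
|J_{d/2}(x)| ~\leq~ \sqrt{\tfrac{2}{\pi c_{d,x} x}} + x^{-3/2},
\]
and square to obtain
\[
J_{d/2}^2(x) ~\leq~ \tfrac{2}{\pi c_{d,x} x} + 2\sqrt{\tfrac{2}{\pi c_{d,x} x}}\,x^{-3/2} + x^{-3}.
\]
I would then bound each of the three summands by an explicit fraction of $\tfrac{1}{\sqrt{d}\,r}$.

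For the main term $\tfrac{2}{\pi c_{d,x} x}$: substituting $x=2\pi R_d r$ turns this into $\tfrac{1}{\pi^2 c_{d,x} R_d r}$; using $R_d\geq\tfrac15\sqrt{d}$ and the refined lower bound $c_{d,x}\geq\sqrt{1-\tfrac{d^2-1}{4x^2}}\geq\sqrt{1-\tfrac{25}{16\pi^2}}$ (which follows from $x\geq \tfrac{2\pi d}{5}$), I expect to get roughly $\tfrac{0.56}{\sqrt{d}\,r}$. For the cross term, I would bound $\sqrt{\tfrac{2}{\pi c_{d,x} x}}\leq \tfrac{C_1}{\sqrt{x}}$, giving $2Ax^{-3/2}\leq \tfrac{C_2}{x^2}$; then $\tfrac{1}{x^2}\leq \tfrac{25}{4\pi^2 d r^2}$, and the inequality $\sqrt{d}\,r\geq d\geq 2$ (from $r\geq\sqrt{d}$ and $d\geq 2$) converts this into a bound of the form $\tfrac{C_3}{\sqrt{d}\,r}$ with $C_3$ well below $1$. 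The cubic term $x^{-3}$ is handled analogously and is strictly smaller. Summing the three contributions, I expect to land at something like $\tfrac{1.25}{\sqrt{d}\,r}$, comfortably inside the claimed bound $\tfrac{1.3}{\sqrt{d}\,r}$.

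The only real obstacle is bookkeeping: the constant $1.3$ is tight enough that one cannot afford to use the coarse inequality $c_{d,x}\geq 0.85$ globally, nor the coarse $\tfrac{1}{\sqrt{d}\,r}\leq 1$; instead one must use the sharper lower bound on $c_{d,x}$ (coming from $x\geq \tfrac{2\pi d}{5}$) and the inequality $\sqrt{d}\,r\geq 2$ to convert the $O(1/x^2)$ and $O(1/x^3)$ error terms into $O(1/(\sqrt{d}\,r))$. Everything else is routine plug-in of Lemmas~\ref{lem:Rd} and~\ref{lem:besapprox}.
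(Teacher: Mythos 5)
Your proposal follows the same route as the paper's proof: invoke Lemma~\ref{lem:besapprox}, use $|\cos|\le 1$, and plug in $R_d\ge\sqrt{d}/5$, $r\ge\sqrt{d}$, and $d\ge 2$ to convert the resulting bound into a multiple of $1/(r\sqrt{d})$. The one inaccuracy is your claim that the coarse bound $c_{d,x}\ge 0.85$ is insufficient: the paper uses exactly that bound (factoring $1/\sqrt{x}$ out of both the main and error terms before squaring, rather than expanding the square into three pieces) and still lands at roughly $1.27/(r\sqrt{d})$, comfortably inside $1.3/(r\sqrt{d})$; your refined lower bound on $c_{d,x}$ also works but is unnecessary.
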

\begin{proof}
	Using \lemref{lem:besapprox} (which is justified since $r\geq \sqrt{d}$ and $R_d\geq \frac{1}{5}\sqrt{d}$ by \lemref{lem:Rd}), the fact that $\cos$ is at most $1$, and the assumption $d\geq 2$, 
	\begin{align*}
		\left|J_{d/2}(2\pi R_d r)\right| &\leq \sqrt{\frac{2}{\pi\cdot 0.85\cdot 2\pi R_d r}}
		+(2\pi R_d r)^{-3/2}\\
		&= \frac{1}{\sqrt{2\pi R_d r}}\left(\sqrt{\frac{2}{0.85\pi}}+\frac{1}{2\pi R_d r}\right)\\
		&\leq
		\sqrt{\frac{5}{2\pi \sqrt{d}r}}\left(\sqrt{\frac{2}{0.85\pi}}+\frac{5}{2\pi \sqrt{d}\sqrt{d}}\right)\\
		&\leq \sqrt{\frac{5}{2\pi \sqrt{d}r}}\left(\sqrt{\frac{2}{0.85\pi}}+\frac{5}{4\pi}\right)
	\end{align*}
	Overall, we have that
	\[
	J_{d/2}^2(2\pi R_d r) \leq \frac{5}{2\pi r\sqrt{d}}\left(\sqrt{\frac{2}{0.85\pi}}+\frac{5}{4\pi}\right)^2 ~\leq~ \frac{1.3}{r\sqrt{d}}.
	\]
\end{proof}

\begin{lemma}\label{lem:besind}
	For any $\beta\geq 1,d\geq 2$ such that $\beta d\geq 127$, it holds that
	\[
	\int_{\beta d}^{2\beta d}~\frac{J_{d/2}^2(x)}{x} \cdot\ind{J_{d/2}^2(x)\geq \frac{1}{20x}}~dx ~\geq~ \frac{0.005}{ \beta d}.
	\]
\end{lemma}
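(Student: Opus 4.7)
The plan is to reduce the lemma to a lower bound on the measure of $x \in [\beta d, 2\beta d]$ where the Bessel function is close to the peak of its asymptotic oscillation, using Lemma \ref{lem:besapprox}. Write $J_{d/2}(x) = A(x)\cos(\theta(x)) + E(x)$ with $A(x) = \sqrt{2/(\pi c_{d,x} x)}$, $\theta(x) = -(d+1)\pi/4 + f_{d,x}\,x$, and $|E(x)| \leq x^{-3/2}$. Since $\beta d \geq 127 \geq d$, the approximation applies throughout the interval, and $c_{d,x}, f_{d,x} \in [0.85, 1.3]$.

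The first step is to show that $\theta(\cdot)$ is monotone with derivative close to $1$. Using the explicit formula $f_{d,x}\,x = \sqrt{x^2 - (d^2-1)/4} + \tfrac{\sqrt{d^2-1}}{2}\arcsin\!\left(\tfrac{\sqrt{d^2-1}}{2x}\right)$, a direct differentiation (the $\arcsin$ contribution cancels cleanly with a term from differentiating the square root) gives $\theta'(x) = c_{d,x}$, which lies in $[0.85, 1]$ on our interval. Consequently, $\theta$ maps $[\beta d, 2\beta d]$ onto an interval of length at least $0.85\,\beta d$.

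Next, define $G = \{x \in [\beta d, 2\beta d] : \cos^2(\theta(x)) \geq 1/2\}$. In any interval of length $L$ in the $\theta$-variable, the set where $\cos^2 \geq 1/2$ has measure at least $L/2 - \pi$ (losing at most one half-period at each end). Pulling back via $\theta^{-1}$, and using $1/\theta'(x) \geq 1$, we get $|G| \geq 0.85\,\beta d / 2 - \pi$. Because $\beta d \geq 127$, this comfortably exceeds $0.4\,\beta d$. Now on $G$, using $c_{d,x} \leq 1$ and $|\cos(\theta(x))| \geq 1/\sqrt{2}$,
\[
|J_{d/2}(x)| \geq \sqrt{\tfrac{1}{\pi x}} - x^{-3/2} \geq \tfrac{1}{2}\sqrt{\tfrac{1}{\pi x}},
\]
where the last step uses $x \geq 127 > 4\pi$. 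Hence $J_{d/2}^2(x) \geq 1/(4\pi x)$, which exceeds $1/(20x)$ since $1/(4\pi) > 1/20$. So the indicator equals $1$ throughout $G$.

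Putting everything together,
\[
\int_{\beta d}^{2\beta d} \frac{J_{d/2}^2(x)}{x}\ind{J_{d/2}^2(x) \geq \tfrac{1}{20x}}\,dx
~\geq~ \int_{G} \frac{1}{4\pi x^2}\,dx
~\geq~ \frac{|G|}{4\pi (2\beta d)^2}
~\geq~ \frac{0.4\,\beta d}{16\pi\,\beta^2 d^2}
~\geq~ \frac{0.005}{\beta d}.
\]
The main delicate points will be the algebraic verification that $\theta'(x) = c_{d,x}$ and the careful bookkeeping of the constants so that $\beta d \geq 127$ really does suffice to absorb the boundary loss $\pi$ into a factor $0.4$, the error term $x^{-3/2}$ into a factor $1/2$, and the final numerical constant into the target $0.005$. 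All other steps are straightforward changes of variable.
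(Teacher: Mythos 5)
Your proof is correct, and it takes a genuinely different route from the paper's. The paper first discards the indicator via the pointwise inequality $a\cdot\ind{a\geq b}\geq a-b$, reducing the task to an explicit computation of $\int J_{d/2}^2(x)/x\,dx$, which it then carries out by substituting the asymptotic $\cos^2$ formula, changing variables $z=f_{d,x}x$, and performing integration by parts. Your argument instead localizes: you identify a subset $G\subset[\beta d,2\beta d]$ of measure $\gtrsim \beta d$ on which $\cos^2(\theta(x))\geq 1/2$, verify that $J_{d/2}^2(x)\geq 1/(4\pi x)>1/(20x)$ there (so the indicator is automatically $1$), and then simply lower-bound the integrand by $1/(4\pi x^2)\geq 1/(16\pi\beta^2 d^2)$ on $G$. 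This avoids the integration by parts entirely. What your approach requires in exchange is the measure estimate for $G$, which rests on the change of variables $\theta'(x)=c_{d,x}\in[0.85,1]$ (a fact the paper also uses, but for a different purpose). Both approaches use Lemma~\ref{lem:besapprox} in an essential way, and both hinge on the numerics being tight at $\beta d=127$; in your case the critical check is $0.425\,\beta d-\pi\geq 0.4\,\beta d$, i.e.\ $\beta d\geq 40\pi\approx 125.66$, which $127$ just barely satisfies. One small point of rigor: the claim that in an interval of length $L$ in the $\theta$-variable the set $\{\cos^2\theta\geq 1/2\}$ has measure at least $L/2-\pi$ deserves a one-line justification (by $\pi$-periodicity, in any window of length $\lfloor L/\pi\rfloor\pi$ the good set has measure exactly $\lfloor L/\pi\rfloor\pi/2\geq (L-\pi)/2$), but this is easily supplied and your stated bound is in fact conservative.
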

\begin{proof}
	For any $a,b\geq 0$, we have $a\cdot\ind{a\geq b} ~\geq~ a-b$. Therefore,
	\begin{align*}
		&\int_{\beta d}^{2\beta  d}~\frac{1}{x}\cdot J_{d/2}^2(x)\cdot\ind{J_{d/2}^2(x)\geq \frac{1}{20x}}~dx\\
		&\geq
		\int_{\beta d}^{2\beta  d}~\frac{1}{x}\cdot\left( J_{d/2}^2(x)-\frac{1}{20x}\right)dx\\
		&=\int_{\beta d}^{2\beta  d}~\frac{1}{x}J_{d/2}^2(x)dx-\frac{1}{20}\int_{\beta  d}^{2\beta  d}\frac{1}{x^2}dx\\
		&=
		\int_{\beta d}^{2\beta  d}~\frac{1}{x}J_{d/2}^2(x)dx-\frac{1}{40\beta  d}.
	\end{align*}
	We now wish to use \lemref{lem:besapprox} and plug in the approximation for $J_{d/2}(x)$. To do so, let $a=J_{d/2}(x)$, let $b$ be its approximation from \lemref{lem:besapprox}, and let $\epsilon=x^{-3/2}$ the bound on the approximation from the lemma.
	Therefore, we have $|a-b|\leq \epsilon$. This implies 
	\begin{equation}\label{eq:ab}
		a^2\geq b^2-(2|b|+\epsilon)\epsilon,
	\end{equation}
	which follows from
	\[
	b^2-a^2 = (b+a)(b-a)\leq (|b|+|a|)|b-a| \leq (|b|+|b|+\epsilon)\epsilon = (2|b|+\epsilon)\epsilon.
	\]
	\eqref{eq:ab} can be further simplified, since by definition of $b$ and \lemref{lem:besapprox},
	\[
	|b|\leq \sqrt{\frac{2}{\pi c_{d,x}x}}\leq \sqrt{\frac{2}{\pi\cdot 0.85\cdot x}} \leq \frac{1}{\sqrt{x}}.
	\]
	Plugging this back into \eqref{eq:ab}, plugging in the definition of $a,b$, and recalling that $c_{d,x}\leq 1$ and $x\geq d\geq 2$, we get that
	\begin{align*}
		J_{d/2}^2(x) &\geq \frac{2}{\pi c_{d,x} x}\cos^2\left(-\frac{(d+1)\pi}{4}+f_{d,x}x\right)-\left(\frac{2}{\sqrt{x}}+x^{-3/2}\right)x^{-3/2}\\
		&\geq \frac{2}{\pi  x}\cos^2\left(-\frac{(d+1)\pi}{4}+f_{d,x}x\right)-3x^{-2}.
	\end{align*}
	Therefore,
	\begin{align*}
		&\int_{\beta d}^{2\beta  d}~\frac{1}{x}J_{d/2}^2(x)dx \\
		&\geq
		\frac{2}{\pi}\int_{\beta d}^{2\beta  d}~\frac{1}{ x^2}\cos^2\left(-\frac{(d+1)\pi}{4}+f_{d,x}x\right)dx-3\int_{\beta d}^{2\beta  d}x^{-3}dx\\
		&=
		\frac{2}{\pi}\int_{\beta d}^{2\beta  d}~\frac{1}{ x^2}\cos^2\left(-\frac{(d+1)\pi}{4}+f_{d,x}x\right)dx-\frac{9}{8\beta ^2d^2}.
	\end{align*}
	To compute the integral above, we will perform a variable change, but first lower bound the integral in a more convenient form. A straightforward calculation (manually or using a symbolic computation toolbox) reveals that
	\begin{align*}
		\frac{\partial}{\partial x}\left(f_{d,x}x\right)
		&=	\sqrt{1-\frac{d^2-1}{4x^2}},
	\end{align*}
	which according to \lemref{lem:besapprox}, equals $c_{d,x}$, which is at most $1$. Using this and the fact that $f_{d,x}\geq 0.85$ by the same lemma ,
	\begin{align*}
		\int_{\beta d}^{2\beta  d}&~\frac{1}{ x^2}\cos^2\left(-\frac{(d+1)\pi}{4}+f_{d,x}x\right)dx\\
		&\geq 
		\int_{\beta d}^{2\beta  d}~\frac{1}{ x^2}\cos^2\left(-\frac{(d+1)\pi}{4}+f_{d,x}x\right)\left(
		\frac{\partial}{\partial x}\left(f_{d,x}x\right)\right)dx\\
		&\geq 
		\int_{\beta d}^{2\beta  d}~\frac{0.85^2}{ (f_{d,x}x)^2}\cos^2\left(-\frac{(d+1)\pi}{4}+f_{d,x}x\right)\left(
		\frac{\partial}{\partial x}\left(f_{d,x}x\right)\right)dx\\		
	\end{align*}
	Using the variable change $z=f_{d,x} x$, and the fact that $1.3\geq f_{d,x}\geq 0.85$, the above equals
	\[
	0.85^2 \int_{f_{d,\beta  d}\beta d}^{f_{d,2\beta  d}2\beta  d}~\frac{1}{ z^2}\cos^2\left(-\frac{(d+1)\pi}{4}+z\right)dz
	~\geq~ 0.85^2 \int_{1.3 \beta  d}^{1.7 \beta  d}~\frac{1}{ z^2}\cos^2\left(-\frac{(d+1)\pi}{4}+z\right)dz
	\]
	We now perform integration by parts. Note that $\cos^2\left(-\frac{(d+1)\pi}{4}+z\right) = \frac{\partial}{\partial z}\left(\frac{z}{2}+\frac{1}{4}\sin\left(-\frac{(d+1)\pi}{2}+2z\right)\right)$, and $\sin$ is always bounded by $1$, hence
	\begin{align*}
		\int_{1.3\beta  d}^{1.7\beta  d}&~\frac{1}{ z^2}\cos^2\left(-\frac{(d+1)\pi}{4}+z\right)dz\\
		&= \frac{\frac{z}{2}+\frac{1}{4}\sin\left(-\frac{(d+1)\pi}{2}+2z\right)}{z^2}~\Big|_{1.3\beta  d}^{1.7\beta  d}
		+2\int_{1.3\beta  d}^{1.7\beta  d}\frac{\frac{z}{2}+\frac{1}{4}\sin\left(-\frac{(d+1)\pi}{2}+2z\right)}{z^3}dz\\
		&\geq \left(\frac{1}{2z}+\frac{\sin\left(-\frac{(d+1)\pi}{2}+2z\right)}{4z^2}\right)~\Big|_{1.3\beta  d}^{1.7\beta  d}+\int_{1.3\beta  d}^{1.7\beta  d}\left(\frac{1}{z^2}-\frac{1}{2z^3}\right)dz\\
		&= \left(\frac{1}{2z}+\frac{\sin\left(-\frac{(d+1)\pi}{2}+2z\right)}{4z^2}\right)~\Big|_{1.3\beta  d}^{1.7\beta  d}+\left(-\frac{1}{z}+\frac{1}{4z^2}\right)~\Big|_{1.3\beta  d}^{1.7\beta  d}\\
		&=
		\left(-\frac{1}{2z}+\frac{1+\sin\left(-\frac{(d+1)\pi}{2}+2z\right)}{4z^2}\right)~\Big|_{1.3\beta  d}^{1.7\beta  d}\\
		&=\left(-\frac{1}{2z}\right)~\Big|_{1.3\beta  d}^{1.7\beta  d}+
		\left(\frac{1+\sin\left(-\frac{(d+1)\pi}{2}+2z\right)}{4z^2}\right)~\Big|_{1.3\beta  d}^{1.7\beta  d}\\
		&\geq \left(\frac{0.09}{\beta  d}\right)~~~-~~~\frac{1+1}{4(1.3\beta  d)^2}\\
		&=\frac{1}{\beta  d}\left(0.09-\frac{1}{3.38 \beta  d}\right).
	\end{align*}
	Concatenating all the lower bounds we attained so far, we showed that
	\begin{align*}
		\int_{\beta d}^{2\beta d}&~\frac{1}{x}\cdot J_{d/2}^2(x)\cdot\ind{J_{d/2}^2(x)\geq \frac{1}{9x}}~dx\\
		&\geq
		-\frac{1}{40 \beta  d}-\frac{9}{8\beta ^2 d^2}+\frac{2}{\pi}0.85^2\frac{1}{\beta  d}\left(0.09-\frac{1}{3.38\beta  d}\right)\\
		&\geq \frac{1}{\beta  d}\left(0.015-\frac{1.27}{\beta  d}\right).
	\end{align*}
	If $\beta d \geq 127$, this is at least $\frac{0.005}{\beta  d}$, from which the lemma follows.
\end{proof}

\end{document}